\definecolor{RefColor}{rgb}{0,0,.65}
\crefname{algorithm}{Algorithm}{Algorithms}
\definecolor{darkgreenClj}{rgb}{0.25,.5,0.25}
\definecolor{blueClj}{rgb}{0,0.33,0.66}
\definecolor{redClj}{rgb}{0.66,0.0,0.0}
\definecolor{purpleClj}{rgb}{0.33,0,0.66}
\definecolor{cyanClj}{rgb}{0.0,0.5,0.5}
\definecolor{orangeClj}{rgb}{0.75,0.35,0.0}
\definecolor{grayClj}{rgb}{0.6,0.6,0.6}
\definecolor{darkgrayClj}{rgb}{0.3,0.3,0.3}
\pgfplotsset{compat=1.12}
\theoremstyle{plain}
\newtheorem*{theorem*}{Theorem}
\newtheorem{theorem}{Theorem}
\newtheorem{lemma}[theorem]{Lemma}
\newtheorem{proposition}[theorem]{Proposition}
\newtheorem{corollary}[theorem]{Corollary}
\theoremstyle{definition}
\newtheorem{definition}[theorem]{Definition}
\newtheorem*{remark*}{Remark}
\newtheorem{example}{Example}
\declaretheoremstyle[
  postheadhook = {\hspace*{\parindent}},
  mdframed={
    backgroundcolor=gray!10!white, 
    hidealllines=true, 
    innertopmargin=2pt, 
    innerbottommargin=4pt, 
    skipabove=8pt,
    skipbelow=10pt,
    nobreak=true}
]{grayboxed} 
\declaretheoremstyle[
  mdframed={
    backgroundcolor=gray!10!white, 
    hidealllines=true, 
    innertopmargin=2pt, 
    innerbottommargin=4pt, 
    skipabove=8pt,
    skipbelow=10pt,
    nobreak=true},headpunct=\empty
]{grayboxed*}
\crefname{boxalgorithm}{Algorithm}{Algorithms}
\crefname{section}{Section}{Sections}
\crefname{appsec}{Appendix}{Appendices}
\crefname{theorem}{Theorem}{Theorems}
\crefname{proposition}{Proposition}{Propositions}
\crefname{lemma}{Lemma}{Lemmas}
\crefname{example}{Example}{Examples}
\crefname{algorithm}{Algorithm}{Algorithms}
\definecolor{darkgreenClj}{rgb}{0.25,.5,0.25}
\definecolor{blueClj}{rgb}{0,0.33,0.66}
\definecolor{redClj}{rgb}{0.66,0.0,0.0}
\definecolor{purpleClj}{rgb}{0.33,0,0.66}
\definecolor{cyanClj}{rgb}{0.0,0.5,0.5}
\definecolor{orangeClj}{rgb}{0.75,0.35,0.0}
\definecolor{grayClj}{rgb}{0.6,0.6,0.6}
\definecolor{darkgrayClj}{rgb}{0.3,0.3,0.3}
\newcommand*\rel@kern[1]{\kern#1\dimexpr\macc@kerna}
\newcommand*\widebar[1]{%
  \begingroup
  \def\mathaccent##1##2{%
    \rel@kern{0.8}%
    \overline{\rel@kern{-0.8}\macc@nucleus\rel@kern{0.2}}%
    \rel@kern{-0.2}%
  }%
  \macc@depth\@ne
  \let\math@bgroup\@empty \let\math@egroup\macc@set@skewchar
  \mathsurround\z@ \frozen@everymath{\mathgroup\macc@group\relax}%
  \macc@set@skewchar\relax
  \let\mathaccentV\macc@nested@a
  \macc@nested@a\relax111{#1}%
  \endgroup
}
  \def\journal@name{ }
  \def\journal@url{}
\newcommand{\empirical}{\mathbb{M}}
\newcommand{\urnlaw}{\mathbb{U}^{\Symn}}
\newcommand{\perm}{\pi}
\newcommand{\permtwo}{\boldsymbol{\perm}_2}
\newcommand{\permd}{\boldsymbol{\perm}_d}
\newcommand{\permdim}{\rho}
\newcommand{\borel}{\calB}
\newcommand{\noise}{\eta}
\newcommand{\noisen}{\boldsymbol{\noise}_{n}}
\newcommand{\normhaar}{{\lambda}_{\grp}}
\newcommand{\orbitlaw}{\mathbb{U}^{\grp}}
\newcommand{\trans}{\Phi}
\newcommand{\grp}{\mathcal{G}}
\newcommand{\model}{\calP}
\newcommand{\activation}{\sigma}
\newcommand{\xn}{\mathbf{x}_n}
\newcommand{\Xn}{\mathbf{X}_n}
\newcommand{\Xnfeati}{\Xn^{(1)}}
\newcommand{\Xnfeatj}{\Xn^{(2)}}
\newcommand{\Ynfeati}{\Yn^{(1)}}
\newcommand{\Ynfeatj}{\Yn^{(2)}}
\newcommand{\Xinf}{\mathbf{X}_{\infty}}
\newcommand{\Xnmi}{\mathbf{X}_{n\setminus i}}
\newcommand{\Xnpm}{\mathbf{X}_{n+m}}
\newcommand{\Xnpmtail}{\Xnpm\setminus\Xn}
\newcommand{\Yn}{\mathbf{Y}_n}
\newcommand{\Ynmi}{\mathbf{Y}_{n\setminus i}}
\newcommand{\Ynmj}{Y_{[n]\setminus j}}
\newcommand{\Zn}{\mathbf{Z}_{n}}
\newcommand{\canon}{\text{\sc{Canon}}}
\newcommand{\Xnd}{\mathbf{X}_{\bfn_d}}
\newcommand{\Csymm}{\mathbf{C}}
\newcommand{\Xsymm}{\widebar{{X}}}
\newcommand{\Ysymm}{\widebar{{Y}}}
\newcommand{\Xndsymm}{\mathbf{\Xsymm}_{n^d}}
\newcommand{\cboard}{\mathbf{C}}
\newcommand{\augcboardsymm}{\widebar{\cboard}}
\newcommand{\Ynd}{\mathbf{Y}_{\bfn_d}}
\newcommand{\Cntwo}{\cboard_{\Xntwo}}
\newcommand{\Xntwo}{\mathbf{X}_{\bfn_2}}
\newcommand{\Xntwoi}{\mathbf{X}_{i,:}}
\newcommand{\Yntwoi}{\mathbf{Y}_{i,:}}
\newcommand{\Cntwoi}{\mathbf{C}_{i,:}}
\newcommand{\Xntwoj}{\mathbf{X}_{:,j}}
\newcommand{\Yntwoj}{\mathbf{Y}_{:,j}}
\newcommand{\Cntwoj}{\mathbf{C}_{:,j}}
\newcommand{\augc}{\cboard_{\Xntwo}^{(i,j)}}
\newcommand{\augcsymm}{\cboard_{\Xntwosymm}^{\{i,j\}}}
\newcommand{\bcast}{\mathbf{B}}
\newcommand{\stak}{\mathbf{S}}
\newcommand{\Cntwosymm}{\mathbf{C}_{\Xntwosymm}}
\newcommand{\Xntwosymm}{\mathbf{\Xsymm}_{\bfn}}
\newcommand{\Xntwosymmi}{\mathbf{\Xsymm}_{i,:}}
\newcommand{\Xntwosymmj}{\mathbf{\Xsymm}_{j,:}}
\newcommand{\Yntwosymm}{\mathbf{\Ysymm}_{\bfn}}
\newcommand{\Yntwo}{\mathbf{Y}_{\bfn_2}}
\newcommand{\Cnfeat}{\mathbf{\Csymm}_{\bcast(\Xn,\Xntwosymm)}}
\newcommand{\Cnfeataug}{\cboard_{\bcast(\Xn,\Xntwosymm)}^{\{i,j\}}}
\newcommand{\subarr}[2]{{#1}_{#2}^{\downarrow}}
\newcommand{\subarrcoll}[3]{{#1}_{#3}^{\downarrow (#2)}}
\newcommand{\superarr}[3]{{#1}_{#3}^{\uparrow (#2)}}
\newcommand{\cspace}{\calC}
\newcommand{\augcspace}{\cspace^{\text{\rm{aug}}}}
\newcommand{\augcspacesymm}{\widebar{\cspace}^{\text{\rm{aug}}}}
\newcommand{\indicator}{\mathds{1}}
\newcommand{\calF}{\mathcal{F}}
\newcommand{\calC}{\mathcal{C}}
\newcommand{\calM}{\mathcal{M}}
\newcommand{\calX}{\mathcal{X}}
\newcommand{\calY}{\mathcal{Y}}
\newcommand{\calB}{\mathcal{B}}
\newcommand{\calS}{\mathcal{S}}
\newcommand{\calA}{\mathcal{A}}
\newcommand{\calZ}{\mathcal{Z}}
\newcommand{\calP}{\mathcal{P}}
\newcommand{\bbR}{\mathbb{R}}
\newcommand{\bbZ}{\mathbb{Z}}
\newcommand{\bbE}{\mathbb{E}}
\newcommand{\bbS}{\mathbb{S}}
\newcommand{\bbN}{\mathbb{N}}
\newcommand{\bbP}{\mathbb{P}}
\newcommand{\bbI}{\mathbb{I}}
\newcommand{\bfi}{\mathbf{i}}
\newcommand{\bfj}{\mathbf{j}}
\newcommand{\bfn}{\mathbf{n}}
\newcommand{\bfOne}{\mathbf{1}}
\newcommand{\bfX}{\mathbf{X}}
\newcommand{\bfY}{\mathbf{Y}}
\newcommand{\bfZ}{\mathbf{Z}}
\newcommand{\bftheta}{\boldsymbol{\theta}}
\newcommand{\Unif}{\text{\rm Unif}}
\def\Symn{{\bbS_n}}
\newcommand{\Symnd}[1]{\bbS_{n_{#1}}}
\newcommand{\argdot}{{\,\vcenter{\hbox{\scalebox{0.75}{$\bullet$}}}\,}}
\def\condind{{\perp\!\!\!\perp}}
\def\equdist{\stackrel{\text{\rm\tiny d}}{=}}
\def\equas{\stackrel{\text{\rm\tiny a.s.}}{=}}
\def\iid{i.i.d.\ }
\newcommand{\simiid}{\overset{\text{\scriptsize{iid}}}{\sim}}
\def\inv{\text{\rm{inv}}}
\def\equ{\text{\rm{eq}}}
\DeclareMathOperator*{\argmin}{arg\,min}
\renewcommand{\paragraph}[1]{\vspace{0.5\baselineskip} \noindent\emph{#1.}\ }
\begin{document}

\begin{frontmatter}
  \title{Probabilistic symmetries and invariant neural networks}
  \runtitle{Probabilistic symmetries and invariant neural networks}

\begin{aug}
  \author[A]{\fnms{Benjamin} \snm{Bloem-Reddy}\corref{}\ead[label=e1]{benbr@stat.ubc.ca}}
  \and
  \author[B]{\fnms{Yee Whye} \snm{Teh}\ead[label=e2]{y.w.teh@stats.ox.ac.uk}}

  \runauthor{Bloem-Reddy and Teh}

  \address[A]{Department of Statistics\\
              University of British Columbia\\
              \printead{e1}}

  \address[B]{Department of Statistics\\
            University of Oxford\\
            \printead{e2}}

\end{aug}

\begin{abstract}
  Treating neural network inputs and outputs as random variables, we characterize the structure of neural networks that can be used to model data that are invariant or equivariant under the action of a compact group. Much recent research has been devoted to encoding invariance under symmetry transformations into neural network architectures, in an effort to improve the performance of deep neural networks in data-scarce, non-i.i.d., or unsupervised settings. By considering group invariance from the perspective of probabilistic symmetry, we establish a link between functional and probabilistic symmetry, and obtain generative functional representations of probability distributions that are invariant or equivariant under the action of a compact group. Our representations completely characterize the structure of neural networks that can be used to model such distributions and yield a general program for constructing invariant stochastic or deterministic neural networks. We demonstrate that examples from the recent literature are special cases, and develop the details of the general program for exchangeable sequences and arrays. 
\end{abstract}

\end{frontmatter}

\tableofcontents


\section{Introduction}
\label{sec:intro}

Neural networks and deep learning methods have found success in a wide variety of applications.  Much of the success has been attributed to a confluence of trends, including the increasing availability of data; advances in specialized hardware such as GPUs and TPUs; and open-source software like \textsc{Theano} \citep{theano:2016}, \textsc{Tensorflow} \citep{tensorflow:2015}, and \textsc{PyTorch} \citep{pytorch:2017}, that enable rapid development of neural network models through automatic differentiation and high-level interfaces with specialized hardware. Neural networks have been most successful in settings with massive amounts of i.i.d.\ labeled training data. In recent years, a concerted research effort has aimed to improve the performance of deep learning systems in data-scarce and semi-supervised or unsupervised problems, and for structured, non-i.i.d.\ data. In that effort, there has been a renewed focus on novel neural network architectures: attention mechanisms \citep{Vaswani:etal:2017}, memory networks \citep{Sukhbaatar:etal:2015}, dilated convolutions \citep{Yu:Koltun:2016}, residual networks \citep{He:etal:2016}, and graph neural networks \citep{Scarselli:etal:2009} are a few recent examples from the rapidly expanding literature.

The focus on novel architectures reflects a basic fact of machine learning: in the presence of data scarcity, whether due to small sample size or unobserved variables, or to complicated structure, the model must pick up the slack. 
Amid the flurry of model-focused innovation, there is a growing need for a framework for encoding modeling assumptions and checking their validity, and for assessing the training stability and generalization potential of an architecture. In short, a principled theory of neural network design is needed.

This paper represents one small step in that direction. It concerns the development of neural network architectures motivated by symmetry considerations, typically described by invariance or equivariance with respect to the action of a group. The most well-known examples of such architectures are convolutional neural networks (CNNs) \citep{LeCun:etal:1989}, which ensure invariance of the output $Y$ under translations of an input image $X$. 
Other examples include neural networks that encode rotational invariance \citep{Cohen:etal:2018} or permutation invariance \citep{Zaheer:etal:2017,Hartford:etal:2018,Herzig:etal:2018,Lee:etal:2018:set:transformer}.

Within this growing body of work, symmetry is most often addressed by designing a specific neural network architecture for which the relevant invariance can be verified. A more general approach aims to answer the question:
\begin{displayquote}
\emph{For a particular symmetry property, can {all} invariant neural network architectures be characterized?}
\end{displayquote}
General results have been less common in the literature; important exceptions include characterizations of feed-forward networks (i.e., linear maps composed with pointwise non-linearities) that are invariant under the action of discrete groups \citep{ShaweTaylor:1989,Ravanbakhsh:etal:2017}, finite linear groups \citep{Wood:ShaweTaylor:1996}, or compact groups \citep{Kondor:Trivedi:2018}.

In the probability and statistics literature, there is a long history of probabilistic model specification motivated by symmetry considerations. In particular, if a random variable $X$ is to be modeled as respecting some symmetry property, formalized as invariance under certain transformations, then a model should only contain invariant distributions $P_X$. The relevant theoretical question is:
\begin{displayquote}
\emph{For a particular symmetry property of $X$, can {all} invariant distributions $P_X$ be characterized?}
\end{displayquote}
Work in this area dates at least to the 1930s, and the number of general results reflects the longevity of the field. 
The most famous example is de Finetti's theorem \citep{deFinetti:1930}, which is a cornerstone of Bayesian statistics and machine learning. It shows that all infinitely exchangeable (i.e., permutation-invariant) sequences of random variables have distributional representations that are conditionally i.i.d., conditioned on a random probability measure. Other examples include rotational invariance, translation invariance, and a host of others.

In the present work, we approach the question of invariance in neural network architectures from the perspective of probabilistic symmetry. In particular, we seek to understand the symmetry properties of joint probability distributions of $(X,Y)$ that are necessary and sufficient for the existence of a \emph{noise-outsourced functional representation} $Y = f(\noise,X)$, with generic noise variable $\noise$, such that $f$ obeys the relevant functional symmetries. 
Our approach sheds light on the core statistical issues involved, and provides a broader view of the questions posed above: from considering classes of deterministic functions to stochastic ones; and from invariant marginal distributions to invariant joint and conditional distributions. Taking a probabilistic approach also leads, in many cases, to simpler proofs of results relative to their deterministic counterparts. Furthermore, stochastic networks are desirable for a number of practical reasons; we discuss these in detail in \cref{sec:practical}. 

\paragraph{Outline} 
The remainder of this section provides further background and related work, gives a high-level overview of our main results, and introduces the necessary measure theoretic technicalities and notation. 
\Cref{sec:two:types} defines the relevant functional and probabilistic notions of symmetries; the similarities between the two suggests that there is deeper a mathematical link. 
\Cref{sec:sufficiency:adequacy} provides statistical background, reviewing the ideas of sufficiency and adequacy, and of a technical tool known as noise outsourcing.
\Cref{sec:connecting:symmetries} makes the precise mathematical link alluded to in \cref{sec:two:types} by establishing functional representations of conditional distributions that are invariant or equivariant under the action of a compact group. 
\Cref{sec:practical} examines a number of practical considerations and related ideas in the context of the current machine learning literature, and gives a general program for designing invariant neural networks in \cref{sec:general:program}. 
\Cref{sec:finite:exch:seq,sec:finite:exch:arrays} develop the details of the program for exchangeable sequences, arrays, and graphs. Technical details and proofs not given in the main text are in \cref{sec:proof:noise:out:suff,sec:technical,sec:proof:arrays}.

\subsection{Symmetry in Deep Learning} 

Interest in neural networks that are invariant to discrete groups acting on the network nodes dates back at least to the text of \citet{Minsky:Papert:1988} on single-layer perceptrons (SLPs), who used their results to demonstrate a certain limitation of SLPs. \Citet{ShaweTaylor:1989,ShaweTaylor:1993} extended the theory to multi-layer perceptrons, under the name Symmetry Networks. The main findings of that theory, that invariance is achieved by weight-preserving automorphisms of the neural network, and that the connections between layers must be partitioned into weight-sharing orbits, were rediscovered by \citet{Ravanbakhsh:etal:2017}, who also proposed novel architectures and new applications. 

\Citet{Wood:ShaweTaylor:1996} extended the theory to invariance of feed-forward networks under the action of finite linear groups. Some of their results overlap with results for compact groups found in \citet{Kondor:Trivedi:2018}, including the characterization of equivariance in feed-forward networks in terms of group theoretic convolution.

The most widely applied invariant neural architecture is the CNN for input images. 
Recently, there has been a surge of interest in generalizing the idea of invariant architectures to other data domains such as sets and graphs, with most work belonging to either of two categories:
\begin{enumerate}[label=(\roman*)]
  \item properly defined convolutions \citep{Bruna:etal:2014,Duvenaud:etal:2015,Niepert:etal:2016}; or 
  \item equivariance under the action of groups that lead to weight-tying schemes \citep{Gens:Domingos:2014,Cohen:Welling:2016,Ravanbakhsh:etal:2017}. 
\end{enumerate}
Both of these approaches rely on group theoretic structure in the set of symmetry transformations, and \citet{Kondor:Trivedi:2018} used group theory to show that the two approaches are the same (under homogeneity conditions on the group's action) when the network layers consist of pointwise non-linearities applied to linear maps; \citet{Cohen:etal:2019} extended the theory to more general settings. \Citet{Ravanbakhsh:etal:2017} demonstrated an exception (violating the homogeneity condition) to this correspondence for discrete groups.

Specific instantiations of invariant architectures abound in the literature; they are too numerous to collect here. However, we give a number of examples in \cref{sec:practical}, and in \cref{sec:finite:exch:seq,sec:finite:exch:arrays} in the context of sequence- and graph-valued input data that are invariant under permutations.

\subsection{Symmetry in Probability and Statistics}

The study of probabilistic symmetries has a long history. Laplace's ``rule of succession'' dates to 1774; it is the conceptual precursor to exchangeability \citep[see][for a historical and philosophical account]{Zabell:2005}. Other examples include invariance under rotation and stationarity in time \citep{Freedman:1963}; the former has roots in Maxwell's work in statistical mechanics in 1875 \citep[see, for example, the historical notes in][]{Kallenberg:2005}. The present work relies on the deep connection between sufficiency and symmetry; \citet{Diaconis:1988} gives an accessible overview.

In Bayesian statistics, the canonical probabilistic symmetry is exchangeability. A sequence of random variables, $\Xn=(X_1,\dotsc,X_n)$, is exchangeable if its distribution is invariant under all permutations of its elements. 
If that is true for every $n\in\bbN$ in an infinite sequence $\Xinf$, then the sequence is said to be \emph{infinitely exchangeable}. de Finetti's theorem \citep{deFinetti:1930,Hewitt:Savage:1955} shows that infinitely exchangeable distributions have particularly simple structure. Specifically, $\Xinf$ is infinitely exchangeable if and only if there exists some random distribution $Q$ such that the elements of $\Xinf$ are conditionally \iid with distribution $Q$. Therefore, each infinitely exchangeable distribution $P$ has an integral decomposition: there is a unique (to $P$) distribution $\nu$ on the set $\calM_1(\calX)$ of all probability measures on $\calX$, such that
\begin{align} \label{eq:deFinetti:cond:iid}
  P(\Xinf) = \int_{\calM_1(\calX)} \prod_{i=1}^{\infty} Q(X_i) \nu(dQ) \;. 
\end{align}
The simplicity is useful: by assuming the data are infinitely exchangeable, only models that have a conditionally \iid structure need to be considered. This framework is widely adopted throughout Bayesian statistics and machine learning. 

de Finetti's theorem is a special case of a more general mathematical result, the ergodic decomposition theorem, which puts probabilistic symmetries in correspondence with integral decompositions like \eqref{eq:deFinetti:cond:iid}; see \citet{Orbanz:Roy:2015} for an accessible overview. 
de Finetti's results inspired a large body of work on other symmetries in the probability literature; \citet{Kallenberg:2005} gives a comprehensive treatment and \citet{Kallenberg:2017} contains further results. Applications of group symmetries in statistics include equivariant estimation and testing; see, for example, \citet[][Ch.\ 6]{Lehmann:Romano:2005}; \citet{Eaton:1989,WIjsman:1990,Giri:1996}. 
The connection between symmetry and statistical sufficiency, which we review in \cref{sec:sufficiency:adequacy}, was used extensively by \citet{Diaconis:Freedman:1984} and by \citet{Lauritzen:1984} in their independent developments of ``partial exchangeability'' and ``extremal families'', respectively; see \citet{Diaconis:1988} for an overview. 
\Cref{sec:connecting:symmetries} makes use of maximal invariants to induce conditional independence; related ideas in a different context were developed by \citet{Dawid:1985}.

\subsection{Overview of Main Results}
\label{sec:main:results}

Our main results put functional symmetries in correspondence with probabilistic symmetries. To establish this correspondence, we obtain functional representations of the conditional distribution of an output random variable, $Y\in\calY$, given an input random variable, $X\in\calX$, subject to symmetry constraints. The results provide a general program for constructing deterministic or stochastic functions in terms of statistics that encode a symmetry between input and output, with neural networks as a special case. 

\paragraph{Permutation-invariant output and exchangeable input sequences} 
Consider the example of an exchangeable input sequence $\Xn$. That is, $\perm\cdot\Xn \equdist \Xn$ for all permutations $\perm\in\Symn$, where `$\equdist$' denotes equality in distribution. By definition, the order of the elements has no statistical relevance; only the values contain any information about the distribution of $\Xn$. The \emph{empirical measure}\footnote{The Dirac delta function $\delta_{x}(B) = 1$ if $x\in B$ and is $0$ otherwise, for any measurable set $B$.} (or counting measure),
\begin{align*}
  \empirical_{\Xn}(\argdot) := \sum_{i=1}^n \delta_{X_i}(\argdot) \;,
\end{align*}
acts to separate relevant from irrelevant statistical information in an exchangeable sequence: it retains the values appearing in $\Xn$, but discards their order. In other words, the empirical measure is a \emph{sufficient statistic} for models comprised of exchangeable distributions. 

Under certain conditions on the output $Y$, the empirical measure also contains all relevant  information for predicting $Y$ from $\Xn$, a property known as \emph{adequacy} (see \cref{sec:sufficiency:adequacy}). In particular, we show in \cref{sec:finite:exch:seq} that for an exchangeable input $\Xn$, the conditional distribution of an output $Y$ is invariant to permutations of $\Xn$ if and only if there exists a function $f$ such that\footnote{`$\equas$' denotes equal almost surely.}
\begin{align} \label{eq:exch:rep:intro}
  (\Xn, Y) \equas \big( \Xn, f(\noise, \empirical_{\Xn} ) \big) \quad \text{where} \quad \noise\sim\Unif[0,1] \quad \text{and} \quad \noise \condind \Xn  \;.
\end{align}
This is an example of a noise-outsourced functional representation of samples from a conditional probability distribution. It can be viewed as a general version of the so-called ``reparameterization trick'' \citep{Kingma:Welling:2014,Rezende:Mohamed:Wierstra:2014} for random variables taking values in general measurable spaces (not just $\bbR$). The noise variable $\noise$ acts as a generic source of randomness that is ``outsourced'', a term borrowed from \citet{Austin:2015}. 
The relevant property of $\noise$ is its independence from $X$, and the uniform distribution is not special in this regard; it could be replaced by, for example, a standard normal random variable and the result would still hold, albeit with a different $f$. For modeling purposes, the outer function $f$ may contain ``pooling functions'' that compress $\empirical_{\Xn}$ further, such as sum or max. The only restriction imposed by being a function of $\empirical_{\Xn}$ is that the order of elements in $\Xn$ has been discarded: $f$ must be invariant to permutations of $\Xn$. We give further examples in \cref{sec:finite:exch:seq}.

The representation \eqref{eq:exch:rep:intro} is a characterization of permutation-invariant stochastic functions, and generalizes a deterministic version that \citet{Zaheer:etal:2017} obtained under more restrictive assumptions; \citet{Murphy:etal:2019} and \citet{Wagstaff:2019aa}, among others, provide extensions and further theoretical study of deterministic permutation-invariant functions.

\paragraph{Permutation-equivariant output and exchangeable input sequences} 
For the purposes of constructing deep neural networks, invariance often imposes stronger constraints than are desirable for hidden layers. Instead, the output should transform predictably under transformations of the input, in a manner that preserves the structure of the input; this is a property known as \emph{equivariance}. For simplicity, consider an output, $\Yn = (Y_1,\dotsc,Y_n)$, of the same size as the input $\Xn$. For exchangeable input $\Xn$, the conditional distribution of $\Yn$ is equivariant to permutations of $\Xn$ if and only if there exists a function $f$ such that\footnote{The representation also requires that the elements of $\Yn$ are conditionally independent given $\Xn$, which is trivially satisfied by most neural network architectures; see \cref{sec:equivariant:seq}.}
\begin{align*}
  (\Xn, \Yn) \equas \big( \Xn, (f(\noise_i,X_i,\empirical_{\Xn}))_{1\leq i \leq n} \big) \quad \text{where} \quad \noise_i\simiid\Unif[0,1] \quad \text{and} \quad (\noise_i)_{1\leq i \leq n} \condind \Xn \;.
\end{align*}
Special (deterministic) cases of $f$ have appeared in the literature \citep{Zaheer:etal:2017,Lee:etal:2018:set:transformer}, with equivariance demonstrated on a case-by-case basis; to the best of our knowledge, no version of this general result has previously appeared. 
We give further examples in \cref{sec:finite:exch:seq}.

\paragraph{Invariance and equivariance under compact groups} 
The representation \eqref{eq:exch:rep:intro} explicitly separates the conditional distribution of $Y$ given $\Xn$ into relevant structure (the empirical measure $\empirical_{\Xn}$) and independent random noise. Such separation is a recurring theme throughout the present work. For more general groups of transformations than permutations, the relevant structure is captured by a \emph{maximal invariant} statistic $M(X)$, which can be used to partition the input space into equivalence classes under the action of a group $\grp$. We show in \cref{sec:connecting:symmetries} that for an input $X$ satisfying $g\cdot X \equdist X$ for all $g\in\grp$, the conditional distribution of an output $Y$ is invariant to the action of $\grp$ on $X$ if and only if there exists some function $f$ such that 
\begin{align} \label{eq:invariant:rep:intro}
  (X, Y) \equas \big( X, f(\noise, M(X) ) \big) \quad \text{where} \quad \noise\sim\Unif[0,1] \quad \text{and} \quad \noise \condind X  \;.
\end{align}
Because $M$ is $\grp$-invariant and $f$ depends on $X$ only through $M$, $f$ is also $\grp$-invariant. The empirical measure is an example of a maximal invariant for $\Symn$ acting on $\calX^n$, leading to \eqref{eq:exch:rep:intro} as a special case of \eqref{eq:invariant:rep:intro}. 

We also obtain a functional representation of all $\grp$-equivariant conditional distributions: for input $X$ with $\grp$-invariant distribution, the conditional distribution of $Y$ given $X$ is $\grp$-equivariant if and only if 
\begin{align*}  
  (X, Y) \equas \big( X, f(\noise, X ) \big) \quad \text{where} \quad \noise\sim\Unif[0,1] \quad \text{and} \quad \noise \condind X  \;,
\end{align*}
for $f$ satisfying
\begin{align*}  
  g\cdot Y = g\cdot f(\noise, X) = f(\noise, g\cdot X) \;, \quad \text{a.s., for all } g\in\grp \;.  
\end{align*}
Specifically, distributional equivariance is equivalent to for $(X,Y)\equdist (g\cdot X, g\cdot Y)$, $g \in \grp$. For $(X,Y)$ satisfying that condition, one may construct an equivariant function from a \emph{representative equivariant} $\tau(X)$, which is an element of $\grp$ that maps $X$ to a representative element of its equivalence class. See \cref{sec:equivariant:representations} for details.

\paragraph{Exchangeable arrays and graphs} 
In addition to applying the general theory to exchangeable sequences (\cref{sec:finite:exch:seq}), we develop the details of functional representations of exchangeable arrays and graphs in \cref{sec:finite:exch:arrays}. Consider an input array $\Xntwo$, modeled as invariant under separate permutations of its rows and columns. Such \emph{separately exchangeable} arrays arise in, for example, collaborative filtering problems like recommender systems. The analogue of the empirical measure in \eqref{eq:exch:rep:intro} is any \emph{canonical form} $\Cntwo := \canon(\Xntwo)$.  
The conditional distribution of an output $Y$ is invariant under permutations of the rows and columns of a separately exchangeable input $\Xntwo$ if and only if there exists a function $f$ such that
\begin{align*}  
  (\Xntwo,Y) \equas (\Xntwo,f(\noise ,\Cntwo)) \quad \text{where} \quad \noise\sim\Unif[0,1] \quad \text{and} \quad \noise \condind \Xntwo \;.
\end{align*}
Analogously, the conditional distribution of an output array $\Yntwo$ is equivariant under permutations of the rows and columns of $\Xntwo$ if and only if there exists a function $f$ such that
\begin{align*}  
    \big(\Xntwo, \Yntwo  \big)
    \equas 
    \big(\Xntwo,
      \big(
        f(
        \noise_{i,j}, 
        X_{i,j}, 
        \Cntwo^{(i,j)}
        )
      \big)_{i\leq n_1, j\leq n_2}
    \big) \;,
\end{align*}
where $\Cntwo^{(i,j)}$ is an augmented version of $\Cntwo$, with the $i$th row and $j$th column of $\Xntwo$ broadcast to the entire array.  Deterministic special cases of these representations have appeared in, for example \citet{Hartford:etal:2018,Herzig:etal:2018}. 
See \cref{sec:finite:exch:arrays} for details and more examples, where analogous results are developed for graphs (i.e., symmetric arrays that are exchangeable under the same permutation applied to the rows and columns) and for arrays with features associated with each row and column. Results for $d$-dimensional arrays (i.e., tensors) are given in \cref{sec:proof:arrays}.

\paragraph{Contributions} The probabilistic techniques employed here are generally not new (most are fairly standard), though to our knowledge they have not been applied to the types of problems considered here. Furthermore, the invariant deep learning literature has been in apparent disconnect from important ideas concerning probabilistic symmetry, particularly as related to statistical sufficiency (see \cref{sec:sufficiency}) and foundational ideas in Bayesian statistics; and vice versa. This allows for exchange of ideas between the areas. In addition to certain practical advantages of taking a probabilistic approach (which we detail in \cref{sec:practical:advantages}), we obtain \emph{the most general possible representation results} for stochastic or deterministic functions that are invariant or equivariant with respect to a compact group.  Our representation results are \emph{exact}, as opposed to recent results on universal approximation (see \cref{sec:practical:choosing}). To the best of our knowledge, no results at that level of generality exist in the deep learning literature.

\subsection{Technicalities and Notation}
\label{sec:meas:tech} 

In order to keep the presentation as clear as possible, we aim to minimize measure theoretic technicalities. Throughout, it is assumed that there is a background probability space $(\Omega,\calA,\bbP)$ that is rich enough to support all required random variables. All random variables are assumed to take values in standard Borel spaces (spaces that are Borel isomorphic to a Borel subset of the unit interval \citep[see, e.g.,][]{Kallenberg:2002}). For example, $X$ is a $\calX$-valued random variable in $(\calX,\calB_{\calX})$, where $\calB_{\calX}$ is the Borel $\sigma$-algebra of $\calX$. Alternatively, we may say that $X$ is a random element of $\calX$. 
For notational convenience, for a $\calY$-valued random variable $Y$, we write $P(Y\in\argdot \mid X)$ as shorthand for, ``for all sets $A\in \calB_{\calY}$, $P(Y \in A \mid \sigma(X))$'', where $\sigma(X)$ is the $\sigma$-algebra generated by $X$. We use $\calM(\calX)$ to denote the set of measures on $\calX$, and $\calM_1(\calX)$ to denote the set of probability measures on $\calX$. Many of our results pertain to conditional independence relationships; $Y \condind_{Z} X$ means that $Y$ and $X$ are conditionally independent, given $\sigma(Z)$. Finally, $\equdist$ denotes equality in distribution, and $\equas$ denotes almost sure equality. 

We consider symmetries induced by the action of a group. A group is a set, $\grp$, and a binary composition operator $\cdot$ that together must satisfy four properties: for each $g, g'\in\grp$ the composition $g\cdot g' \in\grp$ is in the group; the group operation is associative, that is, $g\cdot (g' \cdot g'') = (g \cdot g') \cdot g''$ for all $g, g', g'' \in \grp$; there is an identity element $e\in\grp$ such that $e\cdot g = g\cdot e = g$ for all $g\in\grp$; and for each $g\in\grp$ there is an inverse $g^{-1}\in\grp$ such that $g^{-1} \cdot g = g\cdot g^{-1}=e$. See, e.g., \citet{Rotman:1995}. 
Let ${\trans_{\calX} : \grp \times \calX \to \calX}$ be the left-action of $\grp$ on the input space $\calX$, such that $\trans_{\calX}(e,x) = x$ is the identity mapping for all $x\in\calX$, and $\trans_{\calX}(g,\trans_{\calX}(g',x))=\trans_{\calX}(g\cdot g',x)$ for $g,g'\in\grp$, $x\in\calX$. For convenience, we write $g\cdot x = \trans_{\calX}(g,x)$.  Similarly, let $\trans_{\calY}$ be the action of $\grp$ on the output space $\calY$, and $g\cdot y = \trans_{\calY}(g,y)$ for $g\in\grp$, $y\in\calY$. In this paper, we always assume $\grp$ to be compact. 
A group $\grp$, along with a $\sigma$-algebra $\sigma(\grp)$, is said to be measurable if the group operations of inversion $g \mapsto g^{-1}$ and composition $(g,g')\mapsto g\cdot g'$ are $\sigma(\grp)$-measurable. $\grp$ \emph{acts measurably} on $\calX$ if $\trans_{\calX}$ is a measurable function $\sigma(\grp) \otimes \borel_{\calX} \to \borel_{\calX}$ \citep{Kallenberg:2017}.

\section{Functional and Probabilistic Symmetries}
\label{sec:two:types}

We consider the relationship between two random variables $X\in\calX$ and $Y\in\calY$, with $Y$ being a predicted output based on input $X$.  For example, in image classification, $X$ might be an image and $Y$ a class label; in sequence prediction, $X=(X_i)_{i=1}^n$ might be a sequence and $Y$ the next element, $X_{n+1}$, to be predicted; and in a variational autoencoder, $X$ might be an input vector and $Y$ the corresponding latent variable whose posterior is to be inferred using the autoencoder. Throughout, we denote the joint distribution of both variables as $P_{X,Y}$, and the conditional distribution $P_{Y|X}$ is the primary object of interest.

Two basic notions of symmetry are considered, as are the connections between them. The first notion, defined in \cref{sec:background:networks}, is functional, and is most relevant when $Y$ is a deterministic function of $X$, say $Y=f(X)$; the symmetry properties pertain to the function $f$. The second notion, defined in \cref{sec:background:prob:symm}, is probabilistic, and pertains to the conditional distribution of $Y$ given $X$.

\subsection{Functional Symmetry in Neural Networks}
\label{sec:background:networks}

In many machine learning settings, a prediction $y$ based on an input $x$ is modeled as a deterministic function, $y=f(x)$, where $f$ belongs to some function class $\calF = \{ f ; f : \calX \to \calY  \}$, often satisfying some further conditions.  For example, $f$ might belong to a Reproducing Kernel Hilbert Space, or to a subspace of all strongly convex functions. Alternatively, $f$ may be a neural network parameterized by weights and biases collected into a parameter vector $\bftheta$, in which case the function class $\calF$ corresponds to the chosen network architecture, and a particular $f$ corresponds to a particular set of values for $\bftheta$.  We are concerned with implications on the choice of the network architecture (equivalently, the function class $\calF$) due to symmetry properties we impose on the input-output relationship.  In this deterministic setting, the conditional distribution $P_{Y|X}$ is simply a point mass at $f(x)$. 

Two properties, invariance and equivariance, formalize the relevant symmetries. 
A function $f : \calX \to \calY$ is \emph{invariant under $\grp$}, or \emph{$\grp$-invariant}, if the output is unchanged by transformations of the input induced by the group:
\begin{align} \label{eq:invariant:function:def}
  f(g \cdot x) = f(x) \quad \text{for all } g\in\grp, \ x\in\calX \;.
\end{align}
Alternatively, a function $f:\calX \to \calY$ is \emph{equivariant under $\grp$}, or \emph{$\grp$-equivariant}, if
\begin{align} \label{eq:equivariant:function:def}
  f(g \cdot x) = g \cdot f(x) \quad \text{for all } g\in\grp, \ x\in\calX \;.
\end{align}
The action of $\grp$ commutes with the application of an equivariant $f$; transforming the input is the same as transforming the output. 
Note that the action of $\grp$ on $\calX$ and $\calY$ may be different. In particular, invariance is a special case of equivariance, whereby the group action in the output space is trivial: $g\cdot y=y$ for each $g\in\grp$ and $y\in\calY$. Invariance imposes stronger restrictions on the functions satisfying it, as compared to equivariance. 

These properties and their implications for network architectures are illustrated with examples from the literature. For notational convenience, let $[n]=\{1,\dotsc,n\}$ and $\Xn=(X_1,\dotsc,X_n)\in\calX^n$. Finally, denote the finite symmetric group of a set of $n$ elements (i.e., the set of all permutations of $[n]$) by $\Symn$.

\begin{example}[Deep Sets: $\Symn$-invariant functions of sequences] 
  \label{ex:deep:sets}
  \Citet{Zaheer:etal:2017} considered a model $Y=f(\Xn)$, where the input $\Xn$ was treated as a set, i.e., the order among its elements did not matter. Those authors required that the output of $f$ be unchanged under all permutations of the elements of $\Xn$, i.e., that $f$ is $\Symn$-invariant. They found that $f$ is $\Symn$-invariant if and only if it can be represented as $f(\Xn) = \tilde{f}(\sum_{i=1}^n \phi(X_i))$ for some functions $\tilde{f}$ and $\phi$. Clearly, permutations of the elements of $\Xn$ leave such a function invariant: $f(\Xn)=f(\perm\cdot \Xn)$ for $\perm\in\Symn$. The fact that \emph{all} $\Symn$-invariant functions can be expressed in such a form was proved by \citet{Zaheer:etal:2017} in two different settings: (i) for sets of arbitrary size when $\calX$ is countable; and (ii) for sets of fixed size when $\calX$ is uncountable. In both proofs, the existence of such a form is shown by constructing a $\phi$ that uniquely encodes the elements of $\calX$. Essentially, $\phi$ is a generalization of a one-hot encoding, which gets sum-pooled and passed through $\tilde{f}$. The authors call neural architectures satisfying such structure \emph{Deep Sets}. See \Cref{fig:mlp:deep:sets}, left panel, for an example diagram. 
  In \cref{sec:invariant:seq}, we give a short and intuitive proof using basic properties of exchangeable sequences.
\end{example}

\begin{example}[$\Symn$-equivariant neural network layers]
  \label{ex:deep:sets:equivariant}
  Let $\Xn$ and $\Yn$ represent adjacent layers of a standard feed-forward neural network, such that the nodes are indexed by $[n]$, with $X_i\in\bbR$ the $i$th node in layer $\Xn$, and similarly for $Y_i$. In a feed-forward layer, $\Yn=\activation(\boldsymbol{\theta} \Xn)$ where $\activation$ is an element-wise nonlinearity and $\boldsymbol{\theta}$ is a weight matrix (we ignore biases for simplicity). \Citet{ShaweTaylor:1989,Wood:ShaweTaylor:1996,Zaheer:etal:2017}  showed that the only weight matrices that lead to $\Symn$-equivariant layers are the sum of a diagonal matrix, $\theta_0 \bbI_n$, $\theta_0\in\bbR$, and a constant one, $\theta_1 \bfOne_n \bfOne_n^T$, $\theta_1\in\bbR$, such that
  \begin{align} \label{eq:deep:sets:layer}
    [\boldsymbol{\theta} \Xn]_i = \theta_0 X_i + \theta_1 \sum_{j=1}^n X_j \;.
  \end{align}
  \Cref{fig:mlp:deep:sets}, right panel, shows the weight-sharing patterns of the connections.
\end{example}

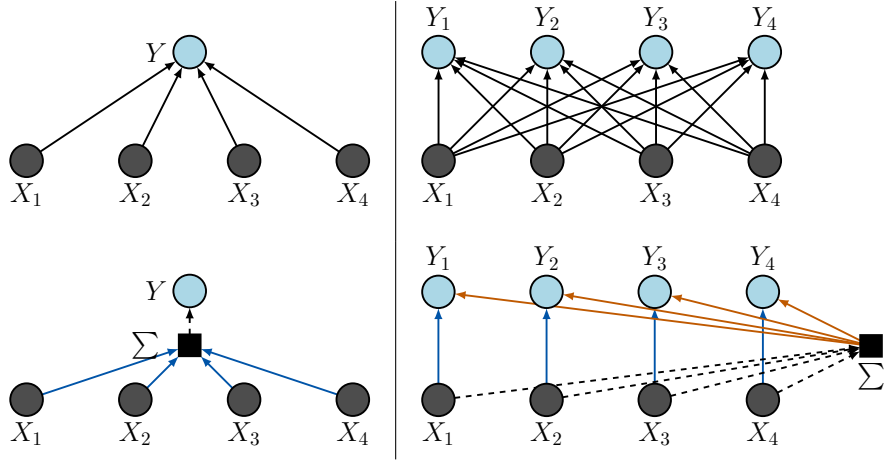
\begin{figure}
  \begin{center}
  \begin{tabular}{c | c}
    \resizebox{0.45\textwidth}{!}{
       
      \begin{tikzpicture}
        \Vertex[x=4,y=-1,Pseudo]{BPs}
        \Vertex[x=0,y=0,color=darkgrayClj,label=$X_1$,fontsize=\Large,position=below]{I1}
        \Vertex[x=2,y=0,color=darkgrayClj,label=$X_2$,fontsize=\Large,position=below]{I2}
        \Vertex[x=4,y=0,color=darkgrayClj,label=$X_3$,fontsize=\Large,position=below]{I3}
        \Vertex[x=6,y=0,color=darkgrayClj,label=$X_4$,fontsize=\Large,position=below]{I4}
        \Vertex[x=3,y=2,label=$Y$,fontsize=\Large,position=left]{O1}
        \Vertex[x=-2,y=1,Pseudo]{Ps}

        \Edge[lw=1,color=black,Direct=true](I1)(O1)
        \Edge[lw=1,color=black,Direct=true](I2)(O1)
        \Edge[lw=1,color=black,Direct=true](I3)(O1)
        \Edge[lw=1,color=black,Direct=true](I4)(O1)
      \end{tikzpicture}
    }
    &
    \resizebox{0.45\textwidth}{!}{
       
      \begin{tikzpicture}
        \Vertex[x=4,y=-1,Pseudo]{BPs}
        \Vertex[x=0,y=0,color=darkgrayClj,label=$X_1$,fontsize=\Large,position=below]{I1}
        \Vertex[x=2,y=0,color=darkgrayClj,label=$X_2$,fontsize=\Large,position=below]{I2}
        \Vertex[x=4,y=0,color=darkgrayClj,label=$X_3$,fontsize=\Large,position=below]{I3}
        \Vertex[x=6,y=0,color=darkgrayClj,label=$X_4$,fontsize=\Large,position=below]{I4}
        \Vertex[x=0,y=2,label=$Y_1$,fontsize=\Large,position=above]{H1}
        \Vertex[x=2,y=2,label=$Y_2$,fontsize=\Large,position=above]{H2}
        \Vertex[x=4,y=2,label=$Y_3$,fontsize=\Large,position=above]{H3}
        \Vertex[x=6,y=2,label=$Y_4$,fontsize=\Large,position=above]{H4}
        \Vertex[x=8,y=1,Pseudo]{Ps}

        \Edge[lw=1,color=black,Direct=true](I1)(H1)
        \Edge[lw=1,color=black,Direct=true](I2)(H1)
        \Edge[lw=1,color=black,Direct=true](I3)(H1)
        \Edge[lw=1,color=black,Direct=true](I4)(H1)

        \Edge[lw=1,color=black,Direct=true](I1)(H2)
        \Edge[lw=1,color=black,Direct=true](I2)(H2)
        \Edge[lw=1,color=black,Direct=true](I3)(H2)
        \Edge[lw=1,color=black,Direct=true](I4)(H2)

        \Edge[lw=1,color=black,Direct=true](I1)(H3)
        \Edge[lw=1,color=black,Direct=true](I2)(H3)
        \Edge[lw=1,color=black,Direct=true](I3)(H3)
        \Edge[lw=1,color=black,Direct=true](I4)(H3)

        \Edge[lw=1,color=black,Direct=true](I1)(H4)
        \Edge[lw=1,color=black,Direct=true](I2)(H4)
        \Edge[lw=1,color=black,Direct=true](I3)(H4)
        \Edge[lw=1,color=black,Direct=true](I4)(H4)
      \end{tikzpicture}
    }
    \\
    \resizebox{0.45\textwidth}{!}{
       
      \begin{tikzpicture}
        \Vertex[x=0,y=0,color=darkgrayClj,label=$X_1$,fontsize=\Large,position=below]{I1}
        \Vertex[x=2,y=0,color=darkgrayClj,label=$X_2$,fontsize=\Large,position=below]{I2}
        \Vertex[x=4,y=0,color=darkgrayClj,label=$X_3$,fontsize=\Large,position=below]{I3}
        \Vertex[x=6,y=0,color=darkgrayClj,label=$X_4$,fontsize=\Large,position=below]{I4}
        \Vertex[x=3,y=1,color=black,shape=rectangle,size=0.4,label=$\sum\ \ $,fontsize=\Large,position=left]{phi}
        \Vertex[x=3,y=2,label=$Y$,fontsize=\Large,position=left]{O1}
        \Vertex[x=-2,y=1,Pseudo]{Ps}

        \Edge[lw=1,color=blueClj,Direct=true](I1)(phi)
        \Edge[lw=1,color=blueClj,Direct=true](I2)(phi)
        \Edge[lw=1,color=blueClj,Direct=true](I3)(phi)
        \Edge[lw=1,color=blueClj,Direct=true](I4)(phi)
        \Edge[lw=1,color=black,Direct=true,style={dashed}](phi)(O1)
      \end{tikzpicture}
    }
    &
    \resizebox{0.45\textwidth}{!}{
       
      \begin{tikzpicture}
        \Vertex[x=0,y=0,color=darkgrayClj,label=$X_1$,fontsize=\Large,position=below]{I1}
        \Vertex[x=2,y=0,color=darkgrayClj,label=$X_2$,fontsize=\Large,position=below]{I2}
        \Vertex[x=4,y=0,color=darkgrayClj,label=$X_3$,fontsize=\Large,position=below]{I3}
        \Vertex[x=6,y=0,color=darkgrayClj,label=$X_4$,fontsize=\Large,position=below]{I4}
        \Vertex[x=0,y=2,label=$Y_1$,fontsize=\Large,position=above]{H1}
        \Vertex[x=2,y=2,label=$Y_2$,fontsize=\Large,position=above]{H2}
        \Vertex[x=4,y=2,label=$Y_3$,fontsize=\Large,position=above]{H3}
        \Vertex[x=6,y=2,label=$Y_4$,fontsize=\Large,position=above]{H4}
        \Vertex[x=8,y=1,color=black,size=0.4,shape=rectangle,label=$\sum$,fontsize=\Large,position=below]{phi}

        \Edge[lw=1,color=blueClj,Direct=true](I1)(H1)
        \Edge[lw=1,color=blueClj,Direct=true](I2)(H2)
        \Edge[lw=1,color=blueClj,Direct=true](I3)(H3)
        \Edge[lw=1,color=blueClj,Direct=true](I4)(H4)

        \Edge[lw=1,color=black,Direct=true,style={dashed}](I1)(phi)
        \Edge[lw=1,color=black,Direct=true,style={dashed}](I2)(phi)
        \Edge[lw=1,color=black,Direct=true,style={dashed}](I3)(phi)
        \Edge[lw=1,color=black,Direct=true,style={dashed}](I4)(phi)

        \Edge[lw=1,color=orangeClj,Direct=true](phi)(H1)
        \Edge[lw=1,color=orangeClj,Direct=true](phi)(H2)
        \Edge[lw=1,color=orangeClj,Direct=true](phi)(H3)
        \Edge[lw=1,color=orangeClj,Direct=true](phi)(H4)
      \end{tikzpicture}
    }
  \end{tabular}
  \end{center}
  \caption{Computation diagrams for the neural networks in \cref{ex:deep:sets,ex:deep:sets:equivariant}. Black solid arrows indicate general, uncoupled weights; black dashed arrows indicate a fixed function, i.e., an activation function; colored arrows indicate shared weights between arrows of the same color. \emph{Left panel}: A general output layer with a different weight from each $X_i$ to $Y$ (top), and a simple weight-sharing $\Symn$-invariant architecture (bottom). \emph{Right panel}: A fully connected MLP with $n^2$ weights (top), and the $\Symn$-equivariant architecture corresponding to \eqref{eq:deep:sets:layer}, with two weights (bottom).} 
  \label{fig:mlp:deep:sets}
\end{figure}

These examples demonstrate that equivariance is less restrictive than invariance, and thus allows for more expressive parameterizations; in \cref{ex:deep:sets:equivariant}, invariance would require that $\theta_0=0$. At the same time, equivariance appears to be strong enough to greatly reduce the dimension of the parameter space: generic fully connected layers contain $n^2$ weights, compared to the two used by the $\Symn$-equivariant architecture. 

At a high level, equivariance in feed-forward neural networks ensures that transformations of the input lead to predictable, symmetry-preserving transformations of higher layers, which allows the network to exploit the symmetry in all layers through weight-sharing \citep{Cohen:Welling:2016}, and to summarize features at multiple scales through pooling \citep{Kondor:Trivedi:2018}. 
In addition to being theoretically interesting, group invariance often indicates simplified architectures through parameter-sharing \citep[e.g.,][]{Ravanbakhsh:etal:2017,Cohen:Welling:2017}. These in turn lead to simpler, more stable training and may lead to better generalization \citep{ShaweTaylor:1991}. Related ideas pertaining to the benefits of invariance in the context of data augmentation and feature averaging are found in \citet{Chen:etal:2019,Lyle:etal:2020}. (See also the discussion in \cref{sec:discussion}.)

\subsection{Symmetry in Conditional Probability Distributions}
\label{sec:background:prob:symm}

An alternative approach to the deterministic models of \cref{sec:background:networks} is to model the relationship between input $X$ and output $Y$ as stochastic, either by directly parameterizing the conditional distribution $P_{Y|X}$, or by defining a procedure for generating samples from $P_{Y|X}$. For example, the encoder network of a variational autoencoder computes an approximate posterior distribution over the latent variable $Y$ given observation $X$; in classification, the use of a soft-max output layer is interpreted as a network which predicts a distribution over labels; in implicit models like Generative Adversarial Networks \citep{Goodfellow:etal:2014} or simulation-based models without a likelihood \citep[e.g.,][]{Gutmann:Corander:2016}, and in many probabilistic programming languages \citep[e.g.,][]{Meent:2018aa}, $P_{Y|X}$ is not explicitly represented, but samples are generated and used to evaluate the quality of the model. 

The relevant properties that encode symmetry in such settings are therefore probabilistic. 
One way to define symmetry properties for conditional distributions is by adding noise to invariant or equivariant functions. For example, if $f$ is $\grp$-invariant and $\noise$ is a standard normal random variable independent of $X$, then $Y = \tilde{f}(X) + \noise$ corresponds to a conditional distribution $P_{Y | X}$ that is $\grp$-invariant. This type of construction, with $Y = f(\noise,X)$ satisfying invariance (equivariance) in its second argument, will lead to invariant (equivariant) conditional distributions. 
While intuitive and constructive, it  does not characterize what probabilistic properties must be satisfied by random variables $X$ and $Y$ in order for such representations to exist. Furthermore, it leaves open the question of whether there are other approaches that may be used. 
To avoid these ambiguities, we define notions of symmetries for probability models directly in terms of the distributions. 

The discussion on exchangeability in \cref{sec:intro} pertains only to invariance of the marginal distribution $P_X$ under $\Symn$. Suitable notions of probabilistic symmetry under more general groups are needed for the conditional distribution of $Y$ given $X$. Let $\grp$ be a group acting measurably on $\calX$ and on $\calY$. 
The conditional distribution $P_{Y|X}$ of $Y$ given $X$ is $\grp$-invariant if $Y|X \equdist Y|g\cdot X$.  More precisely, for all $A \in \borel_{\calY}$ and $B \in \borel_{\calX}$,  
\begin{align} \label{eq:prob:invariance}
  P_{Y|X}(Y\in A \mid X\in B) = P_{Y|X}(Y\in A \mid g\cdot X\in B) \quad \text{for all } g\in\grp \;. 
\end{align}
On the other hand, $P_{Y|X}$ is $\grp$-equivariant if $Y|X \equdist g\cdot Y | g\cdot X$ for all $g\in\grp$. That is, transforming $X$ by $g$ leads to the same conditional distribution of $Y$ except that $Y$ is also transformed by $g$.  More precisely, for all $A \in \borel_{\calY}$ and $B \in \borel_{\calX}$,  
\begin{align} \label{eq:prob:equivariance}
  P_{Y|X}(Y\in A\mid X\in B) = P_{Y|X}(g\cdot Y\in A \mid g\cdot X\in B) \quad \text{for all } g\in\grp \;.
\end{align}
Typically, conditional invariance or equivariance is desired because of symmetries in the marginal distribution $P_X$.  In \cref{ex:deep:sets}, it was assumed that the ordering among the elements of the input sequence is unimportant, and therefore it is reasonable to assume that the marginal distribution of the input sequence is exchangeable.  In general, we assume throughout the present work that $X$ is marginally $\grp$-invariant:
\begin{align}
  P_X(X\in B) = P_X(g\cdot X\in B) \quad \text{for all } g\in\grp, B\in\borel_{\calX} \;.
\end{align}
Clearly, $\grp$-invariance of $P_X$ and of $P_{Y|X}$ will result in $P_{X,Y}(X,Y) = P_{X,Y}(g\cdot X, Y)$; similarly, if $P_X$ is $\grp$-invariant and $P_{Y|X}$ is $\grp$-equivariant, then $P_{X,Y}(X,Y) = P_{X,Y}(g\cdot X, g\cdot Y)$. The converse is also true for sufficiently nice groups and spaces (such that conditioning is well-defined). Therefore, we may work with the joint distribution of $X$ and $Y$, which is often more convenient than working with the marginal and conditional distributions. These ideas are summarized in the following proposition, which is a special case of more general results on invariant measures found in, for example, \citet[][Ch.~7]{Kallenberg:2017}.
\begin{proposition}\label{prop:joint:symmetry}
  For a group $\grp$ acting measurably on Borel spaces $\calX$ and $\calY$, if $P_X$ is marginally $\grp$-invariant then
  \begin{enumerate}[label=(\roman*)]
    \item $P_{Y|X}$ is conditionally $\grp$-invariant if and only if $(X,Y) \equdist (g\cdot X,Y)$ for all $g\in\grp$.
    \item $P_{Y|X}$ is conditionally $\grp$-equivariant if and only if $(X,Y) \equdist g\cdot(X,Y) := (g\cdot X, g\cdot Y)$ for all $g\in\grp$, i.e., $X$ and $Y$ are jointly $\grp$-invariant. 
  \end{enumerate}
\end{proposition}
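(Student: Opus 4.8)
The plan is to fix a regular conditional distribution for $P_{Y|X}$, that is, a probability kernel $\mu\colon\calX\times\borel_{\calY}\to[0,1]$ with $P_{X,Y}(dx\,dy)=P_X(dx)\,\mu(x,dy)$; such a $\mu$ exists and is $P_X$-essentially unique because $\calX$ and $\calY$ are standard Borel. I would then re-express each of the two equivalences as a statement about how $\mu$ interacts with the action $\trans_{\calX}$ on $\calX$ and $\trans_{\calY}$ on $\calY$. In this language, conditional $\grp$-invariance of $P_{Y|X}$ means that for each $g$ one has $\mu(g\cdot x,\argdot)=\mu(x,\argdot)$ for $P_X$-a.e.\ $x$, while conditional $\grp$-equivariance means $\mu(g\cdot x,A)=\mu(x,g^{-1}\cdot A)$ for $P_X$-a.e.\ $x$ and all $A\in\borel_{\calY}$, i.e.\ $\mu(g\cdot x,\argdot)$ is the pushforward of $\mu(x,\argdot)$ under $y\mapsto g\cdot y$. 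Since invariance is exactly the equivariant case with trivial action on $\calY$ (so that $g\cdot(X,Y)=(g\cdot X,Y)$), it suffices to prove (ii) and read off (i) as the special case; alternatively one runs the two arguments in parallel with the $\calY$-action suppressed in (i).

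For the forward direction of (ii), fix $g\in\grp$ and a bounded measurable $h$ on $\calX\times\calY$ and compute $\bbE[h(g\cdot X,g\cdot Y)]$ by conditioning on $X$: it equals $\int_{\calX}\int_{\calY}h(g\cdot x,g\cdot y)\,\mu(x,dy)\,P_X(dx)$. The change of variable $y'=g\cdot y$ in the inner integral, followed by the equivariance identity for $\mu$, rewrites the inner integral as $\int_{\calY}h(g\cdot x,y')\,\mu(g\cdot x,dy')$; then the change of variable $x'\mapsto g\cdot x$ in the outer integral, which preserves $P_X$ by its $\grp$-invariance, recovers $\int_{\calX}\int_{\calY}h(x',y')\,\mu(x',dy')\,P_X(dx')=\bbE[h(X,Y)]$. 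Hence $(X,Y)\equdist g\cdot(X,Y)$.

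For the converse, start from $(X,Y)\equdist(g\cdot X,g\cdot Y)$ and disintegrate both sides over their $\calX$-marginals. The marginal of $g\cdot X$ equals $P_X$ because $P_X$ is $\grp$-invariant, so by essential uniqueness of the disintegration the conditional law of $g\cdot Y$ given $g\cdot X=x$ must equal $\mu(x,\argdot)$ for $P_X$-a.e.\ $x$. On the other hand, because $y\mapsto g\cdot y$ and $x\mapsto g\cdot x$ are bimeasurable bijections, that conditional law is precisely $A\mapsto\mu(g^{-1}\cdot x,g^{-1}\cdot A)$. Equating the two expressions gives $\mu(x,A)=\mu(g^{-1}\cdot x,g^{-1}\cdot A)$ for $P_X$-a.e.\ $x$, which is the equivariance identity for $\mu$ (relabel $g\leftrightarrow g^{-1}$ if desired), i.e.\ conditional $\grp$-equivariance of $P_{Y|X}$. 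Item (i) is the same argument with $\trans_{\calY}$ trivial.

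The change-of-variables manipulations are routine; the point that genuinely needs care — and the reason the statement is hedged for ``sufficiently nice'' groups and spaces — is the existence and essential uniqueness of the disintegration $\mu$, and the fact that for each fixed $g$ the kernel identities hold only off a $P_X$-null set that may depend on $g$ (so no swapping of ``for all $g$'' and ``a.e.\ $x$'' is attempted). The standard Borel hypothesis supplies the regular conditional distribution, and measurability of $\trans_{\calX}$ and $\trans_{\calY}$ together with compactness of $\grp$ legitimise the pushforwards and the comparison of conditional kernels; for these facts I would cite the general treatment of invariant disintegrations in \citet[][Ch.~7]{Kallenberg:2017} rather than reproving them here.
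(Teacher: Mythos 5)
Your proof is correct; note, though, that the paper itself supplies no proof of this proposition — the forward implications are declared to be clear, and the converses are delegated to the general results on invariant measures in Kallenberg (2017, Ch.~7). What you have written is essentially the argument one would extract from that reference, specialized to the present setting: fix a disintegration $P_{X,Y}(dx\,dy)=P_X(dx)\,\mu(x,dy)$, read conditional invariance and equivariance as kernel identities holding $P_X$-a.e.\ for each fixed $g$, get the forward direction by two changes of variables together with marginal invariance of $P_X$, and get the converse by disintegrating $(g\cdot X,g\cdot Y)$ over its $\calX$-marginal and invoking essential uniqueness of the disintegration. Your handling of the $g$-dependent null sets is exactly the right level of care, and treating (i) as the trivial-$\calY$-action special case of (ii) mirrors the reduction the paper uses for functions, so the two items do not need separate arguments. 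Two small remarks: compactness of $\grp$ is nowhere needed for this proposition (measurable actions on standard Borel spaces already legitimise the pushforwards and the existence and essential uniqueness of the regular conditional distribution), and in the converse directions the marginal invariance of $P_X$ is automatic from the joint distributional hypothesis, so the standing assumption only does real work in the forward implications.
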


There is a simple algebra of compositions of equivariant and invariant functions.  Specifically, compositions of equivariant functions are equivariant (equivariance is transitive under function composition), and composing an equivariant function with an invariant one yields an invariant function \citep{Cohen:Welling:2016,Kondor:Trivedi:2018}.  Such compositions generate a grammar with which to construct elaborate functions with the desired symmetry properties. Examples abound in the deep learning literature.  These compositional properties carry over to the probabilistic case as well.

\begin{proposition} 
\label{prop:transitivity} \leavevmode
Let $X,Y,Z$ be random variables such that $X\condind_Y Z$. Suppose $X$ is marginally $\grp$-invariant.
\begin{enumerate}[label=(\roman*)]
    \item If $P_{Y|X}$ is conditionally $\grp$-equivariant, and $P_{Z|Y}$ is conditionally $\grp$-equivariant, then $P_{Z|X}$ is also conditionally $\grp$-equivariant.
    \item If $P_{Y|X}$ is conditionally $\grp$-equivariant, and $P_{Z|Y}$ is conditionally $\grp$-invariant, then $P_{Z|X}$ is conditionally $\grp$-invariant. 
  \end{enumerate}
\end{proposition}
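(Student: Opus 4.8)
The plan is to lift the hypotheses to statements about joint laws via \cref{prop:joint:symmetry} and then exploit the conditional independence $X\condind_Y Z$ through a composition-of-kernels argument, mirroring the deterministic fact that a composition of equivariant maps is equivariant. Since all spaces are standard Borel, regular conditional distributions (Markov kernels) $\kappa_{X|Y}$ and $\kappa_{Z|Y}$ exist, and $X\condind_Y Z$ means the regular conditional law of $(X,Z)$ given $Y$ factors as the product kernel $\kappa_{X|Y}\otimes\kappa_{Z|Y}$. First I would record the symmetry consequences of the hypotheses. Because $P_X$ is marginally $\grp$-invariant and $P_{Y|X}$ is conditionally $\grp$-equivariant, \cref{prop:joint:symmetry}(ii) gives $(X,Y)\equdist g\cdot(X,Y)$ for all $g\in\grp$; marginalizing over $X$ shows $P_Y$ is marginally $\grp$-invariant, and disintegrating the joint $\grp$-invariance of $(X,Y)$ over the $\grp$-invariant $P_Y$ shows $\kappa_{X|Y}$ is $\grp$-equivariant, in the sense that $\kappa_{X|Y}(g\cdot y,\argdot)$ is the pushforward of $\kappa_{X|Y}(y,\argdot)$ under $x\mapsto g\cdot x$, for $P_Y$-a.e.\ $y$ and every $g$. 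In case (i), conditional $\grp$-equivariance of $P_{Z|Y}$ (together with $\grp$-invariance of $P_Y$, via \cref{prop:joint:symmetry}(ii) applied to $(Y,Z)$) yields the analogous pushforward identity for $\kappa_{Z|Y}$; in case (ii), conditional $\grp$-invariance of $P_{Z|Y}$ instead yields $\kappa_{Z|Y}(g\cdot y,\argdot)=\kappa_{Z|Y}(y,\argdot)$ for $P_Y$-a.e.\ $y$ and every $g$.

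The main computation is then short. For bounded measurable $h$ on $\calX\times\calZ$, I would evaluate $\bbE[h(g\cdot X,g\cdot Z)]$ in case (i), or $\bbE[h(g\cdot X,Z)]$ in case (ii), by conditioning on $Y$, expanding via the product kernel, using the pushforward (resp.\ invariance) identities to transfer the action from the integrand onto the base point $y\mapsto g\cdot y$ inside both kernels, and finally performing the substitution $y\mapsto g\cdot y$ under the $\grp$-invariant $P_Y$. This produces $(X,Z)\equdist g\cdot(X,Z)$ in case (i) and $(X,Z)\equdist(g\cdot X,Z)$ in case (ii); one final application of \cref{prop:joint:symmetry} --- part (ii) in case (i), part (i) in case (ii), both using $\grp$-invariance of $P_X$ --- converts these back to the conditional $\grp$-equivariance (resp.\ $\grp$-invariance) of $P_{Z|X}$.

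I expect the only real obstacle to be the measure-theoretic bookkeeping rather than anything structural: one must (a) choose kernels realizing the relevant a.e.\ identities simultaneously for all $g$ --- routine for a (compact, hence measurable) group acting measurably, e.g.\ by fixing $g$ first and then using measurability/separability of the action to choose the exceptional null sets uniformly; (b) justify the product-kernel disintegration of $P_{(X,Z)|Y}$ from $X\condind_Y Z$; and (c) make the change of variables $y\mapsto g\cdot y$ rigorous. These are precisely the points where the standing assumptions of \cref{sec:meas:tech} enter. An essentially equivalent route skips kernels entirely: from $(X,Y)\equdist g\cdot(X,Y)$, $(Y,Z)\equdist g\cdot(Y,Z)$, and $X\condind_Y Z$, deduce $(X,Y,Z)\equdist g\cdot(X,Y,Z)$ directly, then marginalize over $Y$ and apply \cref{prop:joint:symmetry}.
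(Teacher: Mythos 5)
Your proposal is correct and follows essentially the same route as the paper: both use \cref{prop:joint:symmetry} to translate the hypotheses into joint distributional invariance of $(X,Y)$ and $(Y,Z)$, combine this with $X\condind_Y Z$ to obtain the relevant invariance of the law of $(X,Y,Z)$ (your product-kernel computation is just a detailed justification of the step the paper asserts in one line), marginalize out $Y$, and convert back via \cref{prop:joint:symmetry}. The closing remark of your proposal is, in fact, verbatim the paper's argument.
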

\begin{proof}
(i) \cref{prop:joint:symmetry} shows that $X,Y$ are jointly $\grp$-invariant, implying that $Y$ is marginally $\grp$-invariant. \cref{prop:joint:symmetry} applied to $Y$ and $Z$ shows that $Y,Z$ are jointly $\grp$-invariant as well.  Joint invariance of $X,Y$, along with $X\condind_Y Z$, implies that $X,Y,Z$ are jointly $\grp$-invariant. After marginalizing out $Y$, the joint distribution of $(X,Z)$ is $\grp$-invariant. \cref{prop:joint:symmetry} then shows that $P_{Z|X}$ is conditionally $\grp$-equivariant.

(ii) A similar argument as above shows that $(g\cdot X,g\cdot Y,Z)\equdist (X,Y,Z)$ for each $g\in\grp$. Marginalizing out $Y$, we see that $(g\cdot X,Z)\equdist (X,Z)$, so that $P_{Z|X}$ is conditionally $\grp$-invariant.
\end{proof}

\section{Sufficiency, Adequacy, and Noise Outsourcing}
\label{sec:sufficiency:adequacy}

Functional and probabilistic notions of symmetries  
represent two different approaches to achieving the same goal: a principled framework for constructing models from symmetry considerations. Despite their apparent similarities, the precise mathematical link is not immediately clear. 
The connection consists of two components: \emph{noise outsourcing}, a generic technical tool, allows us to move between a conditional probability distribution $P_{Y|X}$ and a representation of $Y$ as a function of $X$ and random noise; \emph{statistical sufficiency and adequacy} allow us to restrict the noise-outsourced representations to functions of certain statistics of $X$. We review these ideas in this section.

\subsection{Sufficiency and Adequacy}
\label{sec:sufficiency}

Sufficiency is based on the idea that a statistic may contain all information that is needed for an inferential procedure; for example, to completely describe the distribution of a sample or for parameter inference. Adequacy extends that idea to prediction. The ideas go hand-in-hand with notions of symmetry: while invariance describes information that is irrelevant,  
sufficiency and adequacy describe the information that is relevant.  

Sufficiency and adequacy are defined with respect to a probability \emph{model}: a family of distributions indexed by some parameter $\theta\in\Theta$. Throughout, we consider a model for the joint distribution over $X$ and $Y$, $\model_{X,Y}=\{P_{X,Y;\theta} \,:\, \theta\in\Theta\}$, from which there is an induced marginal model $\model_X = \{ P_{X;\theta} \,:\, \theta\in\Theta\}$ and a conditional model $\model_{Y|X} = \{ P_{Y|X;\theta} \,:\, \theta\in\Theta\}$. For convenience, and because the parameter does not play a meaningful role in subsequent developments, we suppress the notational dependence on $\theta$ when there is no chance of confusion.

Sufficiency originates in the work of \citet{Fisher:1922}; it formalizes the notion that a statistic might be used in place of the data for any statistical procedure. It has been generalized in a variety of different directions, including predictive sufficiency \citep{Bahadur:1954,Lauritzen:1974,Fortini:etal:2000} and adequacy \citep{Skibinsky:1967}. 
There are a number of ways to formalize sufficiency, which are equivalent under certain regularity conditions; see \citet{Schervish:1995}. The definition that is most convenient here is due to \citet{Halmos:Savage:1949}: 
there is a \emph{single} Markov kernel that gives the \emph{same} conditional distribution of $X$ conditioned on $S(X)=s$ for \emph{every} distribution $P_X \in \model_X$.
\begin{definition} \label{def:pred:suff}
  Let $\calS$ be a Borel space and $S : \calX \to \calS$ a measurable map. 
  $S$ is a \emph{sufficient statistic for $\model_{X}$} if there is a Markov kernel $q : \borel_{\calX} \times \calS \to \bbR_+$ such that for all $P_X \in\model_X$ and $s\in\calS$, we have $P_X(\argdot \mid S(X)=s) =  q(\argdot, s)$. 
\end{definition}
A canonical example is that of a sequence of $n$ i.i.d.\ coin tosses. If the probability model is the family of Bernoulli distributions with probability of heads equal to $p$, then the number of heads $N_h$ is sufficient: conditioned on $N_h = n_h$, the distribution of the data is uniform on all sequences with $n_h$ heads. Equivalently, the number of heads is also sufficient in estimating $p$. For example, the maximum likelihood estimator is $N_h/n$.  

\Cref{sec:finite:exch:seq} pertains to the more nuanced example of finite exchangeable sequences $\Xn \in\calX^n$. A distribution $P_X$ on $\calX^n$ is \emph{finitely exchangeable} if for all sets $A_1,\dotsc,A_n\in\borel_{\calX}$,
\begin{align} \label{eq:def:finite:exch:seq}
  P(X_1\in A_1,\dotsc,X_n\in A_n) = P(X_{\perm(1)}\in A_1,\dotsc,X_{\perm(n)}\in A_n) \;, \quad \text{for all } \perm\in\Symn \;.
\end{align}
Denote by $\model^{\Symn}_{\Xn}$ the family of all exchangeable distributions on $\calX^n$. 
The de Finetti conditional i.i.d.\ representation \eqref{eq:deFinetti:cond:iid} may fail for a finitely exchangeable sequence \citep[see][for examples]{Diaconis:1977,Diaconis:Freedman:1980}. However, the {empirical measure} plays a central role in both the finitely and infinitely exchangeable cases. 
The empirical measure (or counting measure) of a sequence $\Xn\in\calX^n$ is defined as
\begin{align} \label{eq:empirical:seq}
  \empirical_{\Xn}(\argdot) = \sum_{i=1}^n \delta_{X_i}(\argdot) 
  \;,
\end{align}
where $\delta_{X_i}$ denotes an atom of unit mass at $X_i$. 
The empirical measure discards the information about the order of the elements of $\Xn$, but retains all other information. A standard fact (see \cref{sec:finite:exch:seq}) is that a distribution $P_{\Xn}$ on $\calX^n$ is exchangeable if and only if the conditional distribution $P_{\Xn}(\Xn \mid \empirical_{\Xn}=m)$ is the uniform distribution on all sequences that can be obtained by applying a permutation to $\Xn$. 
This is true for all distributions in $\model^{\Symn}_{\Xn}$; therefore $\empirical_{\Xn}$ is a {sufficient statistic} for $\model^{\Symn}_{\Xn}$ according to \cref{def:pred:suff}, and we may conclude for any probability model $\model_{\Xn}$: 
\begin{center}
  \emph{$\Symn$-invariance of all $P_{\Xn}\in\model_{\Xn}$ is equivalent to the sufficiency of $\empirical_{\Xn}$}.
\end{center}
In this case invariance and sufficiency clearly are two sides of the same coin: a sufficient statistic captures all information that is relevant to a model for $\Xn$; invariance discards the irrelevant information. \Cref{sec:finite:exch:seq} explores this in further detail. 

We note that in this work, there is a clear correspondence between group invariance and a sufficient statistic (see \cref{sec:maximal:invariants:adequate} for details), as in the example of exchangeable sequences. In other situations, there is a sufficient statistic but the set of symmetry transformations may not correspond to a group. See \citet{Freedman:1962,Freedman:1963,Diacnois:Freedman:1987} for examples.

\paragraph{Adequacy} 
The counterpart of sufficiency for modeling the conditional distribution of $Y$ given $X$ is \emph{adequacy} \citep{Skibinsky:1967,Speed:1978}.\footnote{When $Y$ is the prediction of $X_{n+1}$ from $\Xn$, adequacy is also known as \emph{predictive sufficiency} \citep{Fortini:etal:2000} and, under certain conditions, it is equivalent to sufficiency and transitivity \citep{Bahadur:1954} or to total sufficiency \citep{Lauritzen:1974}. See \citet{Lauritzen:1974a} for a precise description of how these concepts are related.} 
The following definition adapts one given by \citet{Lauritzen:1974a}, which is more intuitive than the measure theoretic definition introduced by \citet{Skibinsky:1967}. 
\begin{definition} \label{def:adequate:stat}
  Let $\calS$ be a Borel space, and let $S : \calX \to \calS$ be a measurable map. Then $S$ is an \emph{adequate statistic of $X$ for $Y$ with respect to $\model_{X,Y}$} if
  \begin{enumerate}[label=(\roman*)]
    \item $S$ is sufficient for $\model_X$; and
    \item for all $x\in\calX$ and $P_{X,Y}\in\model_{X,Y}$, 
    \begin{align}
    P_{X,Y}(Y \in \argdot \mid X=x) = P_{X,Y}(Y \in \argdot \mid S = S(x)) \,.
    \label{eq:dseparate}
    \end{align}
  \end{enumerate}
\end{definition}
\Cref{eq:dseparate} amounts to the conditional independence of $Y$ and $X$, given $S(X)$. To see this, note that because $S(X)$ is a measurable function of $X$,
\begin{align*}
  P_{X,Y}(Y \in \argdot \mid X=x) = P_{X,Y}(Y \in \argdot \mid X = x, S = S(x)) = P_{X,Y}(Y \in \argdot \mid S = S(x))\;,
\end{align*}
which is equivalent to $Y \condind_{S(X)} X$. Borrowing terminology from the graphical models literature, we say that $S$ \emph{d-separates} $X$ and $Y$ \citep{Lauritzen:1996}. Therefore, adequacy is equivalent to sufficiency for $\model_X$ and d-separation of $X$ and $Y$, for all distributions in $\model_{X,Y}$.

\subsection{Noise Outsourcing and Conditional Independence}
\label{sec:noise:out:dsep}

Noise outsourcing is a standard technical tool from measure theoretic probability, where it is also known by other names such as \emph{transfer} \citep{Kallenberg:2002}. For any two random variables $X$ and $Y$ taking values in nice spaces (e.g., Borel spaces), noise outsourcing says that there exists a functional representation of samples from the conditional distribution $P_{Y|X}$ in terms of $X$ and independent noise: $Y \equas f(\noise,X)$.  
As noted in \cref{sec:intro}, the relevant property of $\noise$ is its independence from $X$, and the uniform distribution could be replaced by any other random variable taking values in a Borel space, for example a standard normal on $\bbR$, and the result would still hold, albeit with a different $f$. 

Basic noise outsourcing can be refined in the presence of conditional independence. Let $S : \calX \to \calS$ be a statistic such that $Y$ and $X$ are conditionally independent, given $S(X)$: $Y \condind_{S(X)} X$. The following basic result, upon which the results in \cref{sec:connecting:symmetries} rely, says that if there is a statistic $S$ that d-separates $X$ and $Y$, then it is possible to represent $Y$ as a noise-outsourced function of $S$. 

\begin{lemma} 
  \label{lem:noise:out:suff}
  Let $X$ and $Y$ be random variables with joint distribution $P_{X,Y}$. Let $\calS$ be a standard Borel space and $S : \calX \to \calS$ a measurable map. Then $S(X)$ d-separates $X$ and $Y$ if and only if there is a measurable function $f : [0,1] \times \calS \to \calY$ such that
  \begin{align} \label{eq:noise:outsourcing}
    (X,Y) \equas (X,f(\noise ,S(X))) \quad \text{where} \quad \noise\sim\Unif[0,1] \quad \text{and} \quad \noise \condind X \;.
  \end{align}
  In particular, $Y = f(\noise, S(X))$ has distribution $P_{Y|X}$.
\end{lemma}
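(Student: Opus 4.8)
The plan is to prove the equivalence by treating the two implications separately: the backward direction will be a short computation, and the forward direction will combine regular conditional distributions on Borel spaces, the standard parametrization of probability kernels, and a transfer argument.

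\textbf{Backward direction.} Assuming $(X,Y)\equas(X,f(\noise,S(X)))$ with $\noise\sim\Unif[0,1]$ and $\noise\condind X$, so that $Y=f(\noise,S(X))$ almost surely, I would compute, using that $S(X)$ is $\sigma(X)$-measurable and $\noise\condind X$, that for every $A\in\borel_{\calY}$,
\begin{align*}
  \bbP(Y\in A\mid X)=\int_{[0,1]}\indicator[f(u,S(X))\in A]\,\bbP(\noise\in du\mid X)=\int_{[0,1]}\indicator[f(u,S(X))\in A]\,\bbP(\noise\in du)\;,
\end{align*}
which is a measurable function of $S(X)$ alone. Hence $\bbP(Y\in\argdot\mid X)$ is $\sigma(S(X))$-measurable and therefore agrees with $\bbP(Y\in\argdot\mid S(X))$, which is exactly $Y\condind_{S(X)}X$.

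\textbf{Forward direction.} Starting from $Y\condind_{S(X)}X$, I would first pick (using that $\calY$ is standard Borel) a regular conditional distribution $\mu$ of $Y$ given $S(X)$, so $\mu(S(X),\argdot)=\bbP(Y\in\argdot\mid S(X))$; d-separation then upgrades this to $\bbP(Y\in\argdot\mid X)=\mu(S(X),\argdot)$ a.s., so the joint law factors as $P_{X,Y}(dx,dy)=P_X(dx)\,\mu(S(x),dy)$. Next I would invoke the standard parametrization of probability kernels — after a Borel isomorphism sending $\calY$ onto a Borel subset of $[0,1]$, this is the generalized inverse-CDF construction — to get a measurable $f:[0,1]\times\calS\to\calY$ with $f(\vartheta,s)\sim\mu(s,\argdot)$ whenever $\vartheta\sim\Unif[0,1]$. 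On an auxiliary space I would take $\widehat X\sim P_X$ and $\widehat\noise\sim\Unif[0,1]$ independent and set $\widehat Y:=f(\widehat\noise,S(\widehat X))$; then $\bbP(\widehat Y\in\argdot\mid\widehat X)=\mu(S(\widehat X),\argdot)$ shows $(\widehat X,\widehat Y)\equdist(X,Y)$, while by construction $\widehat\noise\condind\widehat X$, $\widehat\noise\sim\Unif[0,1]$, and $\widehat Y=f(\widehat\noise,S(\widehat X))$ identically. Finally, since $(X,Y)\equdist(\widehat X,\widehat Y)$ and the background space $(\Omega,\calA,\bbP)$ is rich enough, the transfer theorem \citep{Kallenberg:2002} gives a $\noise$ on $\Omega$ with $(X,Y,\noise)\equdist(\widehat X,\widehat Y,\widehat\noise)$; every clause of the lemma is an event or distributional statement that holds for the hatted triple and so transfers to $(X,Y,\noise)$, and the concluding sentence follows because $Y=f(\noise,S(X))$ with $\noise\condind X$ has conditional law $\mu(S(X),\argdot)=P_{Y|X}$.

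\textbf{Main obstacle.} I expect the delicate part to be upgrading equality in distribution to almost-sure equality while keeping $\noise$ independent of all of $X$ rather than merely of $S(X)$ — a direct application of basic noise outsourcing to the pair $(S(X),Y)$ would only give $\noise\condind S(X)$. This is exactly where the richness of the background space is needed (to make room for the transfer step) and where the d-separation hypothesis earns its keep, since it is what lets the conditional law of $Y$ given $X$ factor through $S(X)$, making the kernel $\mu$ on $\calS$ — rather than on $\calX$ — the correct object to parametrize.
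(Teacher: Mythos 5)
Your proof is correct, and it is essentially the route the paper takes, except that you carry out by hand what the paper obtains by citation: the paper's proof consists of applying the conditional-independence-and-randomization result of \citet[][Prop.~6.13]{Kallenberg:2002} (restated as Lemma~A.1 in the appendix) with $Z = S(X)$, which directly yields $Y \equas f(\noise, S(X))$ with $\noise \condind (X, S(X))$, and hence $\noise \condind X$ since $S(X)$ is a function of $X$. Your forward direction --- regular conditional distribution of $Y$ given $S(X)$, upgraded to $\bbP(Y\in\argdot\mid X)=\mu(S(X),\argdot)$ by d-separation, kernel parametrization via a Borel isomorphism and inverse-CDF, and then the transfer theorem on a rich enough background space to pull $\noise$ back to $\Omega$ while preserving the almost-sure identity $Y=f(\noise,S(X))$ and the independence $\noise\condind X$ --- is precisely the standard proof of that randomization lemma, so nothing is gained or lost mathematically; the paper's version is just shorter because the heavy lifting is delegated to Kallenberg. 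Your closing remark also identifies the right subtlety: naive noise outsourcing applied to the pair $(S(X),Y)$ only gives $\noise\condind S(X)$, and it is exactly the stronger conclusion $\noise\condind(X,S(X))$ in the cited lemma (or, in your argument, transferring the full triple including $X$ rather than only $S(X)$) that delivers independence from all of $X$.
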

The proof is a straightforward application of a more general result given in \cref{sec:proof:noise:out:suff}. 
Note that in general, $f$ is measurable but need not be differentiable or otherwise have desirable properties, although for modeling purposes it can be limited to functions belonging to a tractable class (e.g., differentiable, parameterized by a neural network).  
Note also that the identity map $S(X)=X$ trivially d-separates $X$ and $Y$, so that $Y\equas f(\eta,X)$, which is standard noise outsourcing \citep[e.g.,][Lem. 3.1]{Austin:2015}.

To make the connections between ideas clear, we state the following corollary of \cref{lem:noise:out:suff}, which is proved by checking the definition of adequacy. 
\begin{corollary} \label{cor:adeq:rep}
  Let $S: \calX \to \calS$ be a sufficient statistic for the model $\model_X$. Then $S$ is an adequate statistic for $\model_{X,Y}$ if and only if, for each $P_{X,Y}\in\model_{X,Y}$, there exists a corresponding measurable function $f : [0,1] \times \calS \to \calY$ such that \eqref{eq:noise:outsourcing} holds.
\end{corollary}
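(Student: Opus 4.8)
The plan is to reduce the statement directly to \cref{lem:noise:out:suff}, applied one distribution at a time over the model $\model_{X,Y}$. Note that the hypothesis already supplies clause (i) of \cref{def:adequate:stat} — that $S$ is sufficient for $\model_X$ — so all that remains is to show that clause (ii) of adequacy is equivalent to the existence, for every $P_{X,Y}\in\model_{X,Y}$, of a representation of the form \eqref{eq:noise:outsourcing}.

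First I would recall the observation made just after \cref{def:adequate:stat}: for a fixed joint law $P_{X,Y}$, the identity \eqref{eq:dseparate} holds for all $x$ if and only if $Y\condind_{S(X)}X$, i.e.\ $S(X)$ d-separates $X$ and $Y$ under $P_{X,Y}$. Hence $S$ is adequate for $\model_{X,Y}$ if and only if $S(X)$ d-separates $X$ and $Y$ under \emph{every} $P_{X,Y}\in\model_{X,Y}$.

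For the forward implication I would fix an arbitrary $P_{X,Y}\in\model_{X,Y}$; adequacy gives d-separation under $P_{X,Y}$, and the ``only if'' half of \cref{lem:noise:out:suff} then produces a measurable $f:[0,1]\times\calS\to\calY$ with $(X,Y)\equas(X,f(\noise,S(X)))$, $\noise\sim\Unif[0,1]$ and $\noise\condind X$. Since $P_{X,Y}$ was arbitrary, this yields a corresponding $f$ for each member of the model. For the converse, given such an $f$ for each $P_{X,Y}\in\model_{X,Y}$, the ``if'' half of \cref{lem:noise:out:suff} shows $S(X)$ d-separates $X$ and $Y$ under that distribution, hence under all of them; combined with the standing sufficiency hypothesis, both clauses of \cref{def:adequate:stat} are met and $S$ is adequate.

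I expect there to be no genuine obstacle here — the corollary is essentially a bookkeeping exercise on top of \cref{lem:noise:out:suff}. The one point to handle carefully is the quantifier over the model: \cref{lem:noise:out:suff} concerns a single joint distribution, so it must be invoked separately for each $P_{X,Y}\in\model_{X,Y}$, with the representing function $f$ permitted to depend on the distribution (as the statement of the corollary already makes explicit). No measurability or regularity difficulties arise beyond those already absorbed into \cref{lem:noise:out:suff}, since $\calX$, $\calY$ and $\calS$ are standard Borel.
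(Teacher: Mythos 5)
Your proposal is correct and matches the paper's intended argument: the paper states that \cref{cor:adeq:rep} follows from \cref{lem:noise:out:suff} by checking \cref{def:adequate:stat}, which is exactly your reduction — sufficiency is assumed, clause (ii) is the d-separation condition noted after the definition, and \cref{lem:noise:out:suff} is invoked separately for each $P_{X,Y}\in\model_{X,Y}$ in both directions. Your care with the per-distribution quantifier (allowing $f$ to depend on the distribution) is precisely the bookkeeping the corollary's statement encodes.
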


\section{Functional Representations of Probabilistic Symmetries}
\label{sec:connecting:symmetries}

\Cref{lem:noise:out:suff} shows that if it is possible to find conditions on $P_{X,Y}$ such that there is a $\grp$-invariant d-separating statistic $S$, then there exists a $\grp$-invariant functional representation of $Y$ that depends on $X$ only through $S(X)$. Furthermore, families of probability measures (i.e., statistical models) consisting of such distributions correspond to families of functions with the desired invariance property. 
The main results of this section establish necessary and sufficient conditions on $P_{X,Y}$ to guarantee $\grp$-invariant (\cref{thm:group:invariant:rep}) and $\grp$-equivariant (\cref{thm:kall:equiv}) functional representations of $Y$; they also establish generic properties of the associated d-separating statistics. Together, they form a general program for identifying function classes and neural network architectures corresponding to symmetry conditions.

The results of this section imply that the two approaches in \cref{sec:two:types} are equivalent up to stochasticity (which is not trivial; see \cref{sec:practical:advantages}): symmetry can be incorporated into an explicit probabilistic model by considering invariant and equivariant distributions, or it can be incorporated into an implicit model by considering invariant and equivariant stochastic functions. The deterministic functional models in \cref{sec:background:networks} are special cases of the latter: for every invariant noise-outsourced function $f$ such that $Y = f(\noise,X)$, we have that  
\begin{align*}
  \bbE[ Y \mid M(X)] = \int_{[0,1]} f(\noise,X)\; d\noise = f'(X) \;,
\end{align*} 
such that $f'(X)$ is also invariant (and similarly for equivariant functions).

\subsection{Invariant Conditional Distributions}
 \label{sec:maximal:invariants}

To state the first main result, on the function representation of invariant conditional distributions, some definitions are required. 
For a group $\grp$ acting on a set $\calX$, the \emph{orbit} of any $x\in\calX$ is the set of elements in $\calX$ that can be generated by applying the elements of $\grp$. It is denoted $\grp\cdot x = \{ g\cdot x ; g\in\grp \}$. The \emph{stabilizer}, or isotropy subgroup, of $x\in\calX$ is the subgroup of $\grp$ that leaves $x$ unchanged: $\grp_x = \{ g\in \grp ; g\cdot x = x  \}$. An \emph{invariant statistic} $S : \calX \to \calS$ is a measurable map that satisfies $S(x) = S(g\cdot x)$ for all $g\in\grp$ and $x\in\calX$. A \emph{maximal invariant statistic}, or maximal invariant, is an invariant statistic $M : \calX \to \calS$ such that $M(x_1) = M(x_2)$ implies $x_2 = g\cdot x_1$ for some $g\in\grp$; equivalently, $M$ takes a different constant value on each orbit. The orbits partition $\calX$ into equivalence classes, on each of which $M$ takes a different value.

By definition, an invariant distribution $P_X$ is constant on any particular orbit. Consider the conditional distribution $P_X(X \mid M(X)=m):= P_{X|m}(X)$. Conditioning on the maximal invariant taking a particular value is equivalent to conditioning on $X$ being in a particular orbit; for invariant $P_X$, $P_{X|m}$ is zero outside the orbit on which $M(X)=m$, and ``uniform'' on the orbit, modulo fixed points (see \cref{fn:orbit:uniform} in \cref{sec:technical}). Furthermore, for any $Y$ such that $(g\cdot X,Y)\equdist (X,Y)$ for all $g\in\grp$, $M(X)$ contains all relevant information for predicting $Y$ from $X$; that is, $Y \condind_{M(X)} X$. \Cref{fig:thm:group:invariant} illustrates the structure. 
These high-level ideas, which are made rigorous in \cref{sec:technical}, lead to the following functional representation of invariant conditional distributions.

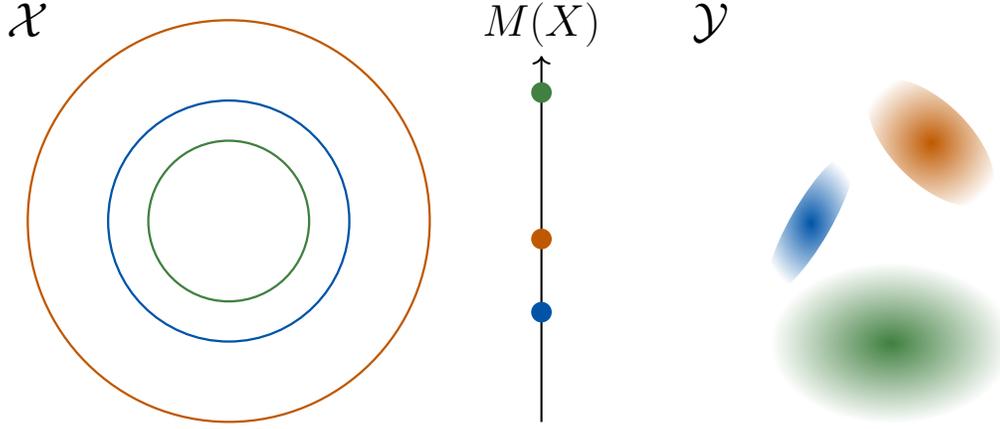
\begin{figure}[t]
  \begin{center}
    \begin{tabular}{c c c c}
        \resizebox{!}{0.25\textheight}{
          \begin{tikzpicture}
            \draw[thick,color=orangeClj] (0,0) circle (2.5cm);
            \draw[thick,color=blueClj] (0,0) circle (1.5cm);
            \draw[thick,color=darkgreenClj] (0,0) circle (1cm);
            \node at (-2.5,2.5) {\LARGE{$\calX$}};
          \end{tikzpicture}
        } & 
        \resizebox{!}{0.25\textheight}{
          \begin{tikzpicture}
            \draw[thick,->] (0,-2.5) -- (0,2.5) node[anchor=south] {\LARGE{$M(X)$}};
            \fill[color=orangeClj] (0,0) circle (4pt);
            \fill[color=blueClj] (0,-1) circle (4pt);
            \fill[color=darkgreenClj] (0,2) circle (4pt);
          \end{tikzpicture}
        }
        & &
        \resizebox{!}{0.25\textheight}{
          \begin{tikzpicture}
            \shade[inner color=orangeClj, outer color=white, rotate around={45:(0.5,1)}] (0.5,1) ellipse (0.5cm and 1cm);
            \shade[inner color=blueClj, outer color=white, rotate around={-120:(-1,0)}] (-1,0) ellipse (1cm and 0.25cm);
            \shade[inner color=darkgreenClj, outer color=white] (0,-1.5) ellipse (1.5cm and 1cm);
            \node at (-2.25,2.5) {\LARGE{$\calY$}};
          \end{tikzpicture}
        }
    \end{tabular}
  \end{center}
  \caption{
    An illustration of structure of \cref{thm:group:invariant:rep}. The maximal invariant $M$ (middle) acts as an index of the orbits of $\calX$ under $\grp$ (left), and mediates all dependence of $Y$ on $X$, i.e., $P_{Y|X} = P_{Y|M(X)}$ (right). (Best viewed in color.)
  }
  \label{fig:thm:group:invariant}
\end{figure}

\begin{restatable}{theorem}{groupinvariantrep} 
  \label{thm:group:invariant:rep}
  Let $X$ and $Y$ be random elements of Borel spaces $\calX$ and $\calY$, respectively, and $\grp$ a compact group acting measurably on $\calX$. Assume that $P_X$ is $\grp$-invariant, and pick a maximal invariant $M : \calX \to \calS$, with $\calS$ another Borel space. Then $P_{Y|X}$ is $\grp$-invariant if and only if there exists a measurable function $f : [0,1] \times \calS \to \calY$ such that
  \begin{align} \label{eq:max:invariant:rep}
    (X, Y) \equas \big(X, f(\noise, M(X)) \big) \quad \text{with } \noise\sim \Unif[0,1] \text{ and } \noise\condind X \;.
  \end{align}
\end{restatable}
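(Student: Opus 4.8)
The plan is to treat the two directions separately, with essentially all the work in one of them. The ``if'' direction is a short distributional computation. Assume the representation \eqref{eq:max:invariant:rep} holds. Since $P_X$ is $\grp$-invariant and $\noise\condind X$, for each $g\in\grp$ the pair $(g\cdot X,\noise)$ has the same law as $(X,\noise)$ (both equal the product of $\mathcal{L}(X)$ and $\mathcal{L}(\noise)$, using marginal invariance). Applying the measurable map $(x,u)\mapsto\bigl(x,f(u,M(x))\bigr)$ and using $M(g\cdot X)=M(X)$ gives $\bigl(g\cdot X,f(\noise,M(X))\bigr)\equdist\bigl(X,f(\noise,M(X))\bigr)$, and combining with \eqref{eq:max:invariant:rep} (which forces $Y\equas f(\noise,M(X))$ in the coupling) yields $(g\cdot X,Y)\equdist(X,Y)$ for all $g$; \cref{prop:joint:symmetry}(i) then gives that $P_{Y|X}$ is $\grp$-invariant.

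For the ``only if'' direction the crux is the claim that $M(X)$ \emph{d-separates} $X$ and $Y$, i.e.\ $Y\condind_{M(X)}X$; once this is in hand, \cref{lem:noise:out:suff} applied with $S=M$ produces exactly the required measurable $f:[0,1]\times\calS\to\calY$ and noise variable, finishing the proof. To prove the d-separation I would first invoke \cref{prop:joint:symmetry}(i): since $P_X$ and $P_{Y|X}$ are both $\grp$-invariant, $(g\cdot X,Y)\equdist(X,Y)$ for every $g\in\grp$. Writing $\nu(x):=P_{Y|X}(\,\cdot\mid X=x)$ for a regular conditional distribution, equality of the first marginals together with this joint identity forces $\nu(g\cdot x)=\nu(x)$ for $P_X$-almost every $x$, for each fixed $g$. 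Using that $\grp$ is compact, so carries a normalized Haar measure $\normhaar$, together with joint measurability of the action $(g,x)\mapsto g\cdot x$ and Fubini on $(\grp\times\calX,\normhaar\otimes P_X)$, this ``for each $g$, a.e.\ $x$'' statement upgrades to: for $P_X$-a.e.\ $x$, $\nu(g\cdot x)=\nu(x)$ for $\normhaar$-a.e.\ $g$. Define the symmetrized kernel $\tilde\nu(x)(\argdot):=\int_\grp \nu(g\cdot x)(\argdot)\,\normhaar(dg)$. Then $\tilde\nu=\nu$ on a $P_X$-full set, $\tilde\nu$ is genuinely $\grp$-invariant by right-invariance of $\normhaar$ on the compact group $\grp$, hence constant on $\grp$-orbits, and since $M$ is a maximal invariant it factors measurably as $\tilde\nu=\rho\circ M$ for some measurable $\rho:\calS\to\calM_1(\calY)$ (the factorization lemmas of \cref{sec:technical}, using that $\calM_1(\calY)$ is standard Borel). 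Therefore $P_{Y|X}(\,\cdot\mid X=x)=\rho(M(x))$ for $P_X$-a.e.\ $x$, which is precisely $Y\condind_{M(X)}X$.

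I expect the main obstacle to be the measure-theoretic bookkeeping in the ``only if'' direction rather than any conceptual difficulty: promoting the family of $P_X$-null sets (one per $g\in\grp$, with $\grp$ possibly uncountable) to a single null set via Fubini, which needs joint measurability of the action and a countable separating family for equality of measures on $\calY$; and the existence and bi-invariance of normalized Haar measure on $\grp$ together with the fact that a $\grp$-invariant measurable kernel factors measurably through a maximal invariant. These are exactly the technical points deferred to \cref{sec:technical}, so at the level of this section the proof reduces to assembling \cref{prop:joint:symmetry}, the Haar-symmetrization step, and \cref{lem:noise:out:suff}. An alternative to the symmetrization is to disintegrate the invariant measure $P_X$ over $M(X)$ into its ``orbit-uniform'' conditionals and verify $Y\condind_{M(X)}X$ directly against those; this reaches the same conclusion but relies more heavily on the structural results of \cref{sec:technical}.
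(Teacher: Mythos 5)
Your proposal is correct and shares the paper's top-level skeleton---prove the d-separation $Y \condind_{M(X)} X$ and then invoke \cref{lem:noise:out:suff}---but the way you obtain that conditional independence is genuinely different from the paper's. The paper delegates it to \cref{lem:maximal:invariant} in \cref{sec:technical}: since the action on $\calY$ is trivial, $\widetilde M(X,Y)=(M(X),Y)$ is a maximal invariant for the product action on $\calX\times\calY$, so the conditional law of $X$ given $(M(X),Y)$ is the orbit law $\orbitlaw_{M(X)}$, which depends on $M(X)$ alone; conditional independence is read off by conditioning $X$ on $(M(X),Y)$ rather than $Y$ on $X$. You instead Haar-symmetrize the conditional kernel $\nu(x)=P_{Y|X}(\cdot\mid X=x)$ and factor the exactly invariant kernel $\tilde\nu$ through $M$. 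Both arguments run on Haar measure plus Fubini; the paper's route buys the explicit orbit-law disintegration, which it reuses for \cref{thm:kall:equiv} and \cref{thm:suff:maximal:invariant}, while yours is more self-contained and makes the statement ``$P_{Y|X}$ depends on $x$ only through $M(x)$'' completely explicit. Your ``if'' direction matches the paper's (which simply declares the converse straightforward).

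One caveat: the ``factorization lemmas of \cref{sec:technical}'' you cite do not exist as such---the appendix contains no lemma asserting that a measurable $\grp$-invariant map factors \emph{measurably} through a maximal invariant. The set-theoretic factorization is immediate from maximality of $M$, but measurability of $\rho$ needs an argument. It does hold under the theorem's hypotheses: a compact group acting measurably on a Borel space admits a measurable cross-section $\calC$ with orbit-representative map $t$ (as recalled in \cref{sec:technical}); exact invariance gives $\tilde\nu(x)=\tilde\nu(t(x))$, and $M$ restricted to $\calC$ is an injective Borel map between standard Borel spaces, hence a Borel isomorphism onto its Borel range by the Lusin--Souslin theorem, so $\rho:=\tilde\nu\circ(M|_{\calC})^{-1}$, extended arbitrarily off the range, is measurable and satisfies $\tilde\nu=\rho\circ M$. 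With that patch---or by switching to the orbit-law disintegration you sketch at the end, which is exactly the paper's route---your argument is complete.
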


The proof is given in \cref{sec:technical} after the necessary intermediate technical results have been developed. Note that because $f$ is a function of a maximal invariant, it is $\grp$-invariant: $f(\noise, M(g\cdot X)) = f(\noise,M(X))$, almost surely. 
A maximal invariant always exists for sufficiently nice $\grp$ and $\calX$ \citep{Hall:Wijsman:Ghosh}; for example, define $M : \calX \to \bbR$ to be a function that takes a unique value on each orbit. Therefore, $P_{X|M}$ can always be defined in such cases. The assumptions made in the present work, that $\grp$ acts measurably on the Borel space $\calX$, allow for the existence of maximal invariants, and in many settings of interest a maximal invariant is straightforward to construct. For example, \cref{sec:finite:exch:seq,sec:finite:exch:arrays} rely on constructing maximal invariants for applications of the results of this section to specific exchangeable structures. 

Versions of \cref{thm:group:invariant:rep} may hold for non-compact groups and more general topological spaces, but require considerably more technical details. At a high level, $\grp$-invariant measures on $\calX$ may be disintegrated into product measures on suitable spaces $\calS \times \calZ$ under fairly general conditions, though extra care is needed in order to ensure that such disintegrations exist. See, for example, \citet{Andersson:1982,Eaton:1989,WIjsman:1990,Schindler:2003}; and especially \citet{Kallenberg:2017}.

\subsection{Equivariant Conditional Distributions} \label{sec:equivariant:representations}

According to \cref{thm:group:invariant:rep}, \emph{any} maximal invariant can be used to establish a functional representation of $\grp$-invariant conditional distributions. 
If a particular type of equivariant function exists and is measurable, then it can be used to establish a functional representation of equivariant conditional distributions. 
Let $\tau : \calX \to \grp$ be an equivariant function, 
\begin{align} \label{eq:tau:def}
  \tau(g\cdot x) = g\cdot \tau(x) \;, \quad \text{for all } g\in\grp, \; x\in\calX \;.
\end{align}
For ease of notation, let $\tau_x := \tau(x)$, and denote by $\tau_x^{-1}$ the inverse of $\tau_x$ in $\grp$, such that $\tau_x^{-1} \cdot \tau_x = e$.  
$\tau$ has some remarkable properties that make it suitable for constructing equivariant functions.

\begin{restatable}{lemma}{tauproperties}
  \label{prop:tau:properties}
  For a group $\grp$ acting measurably on Borel spaces $\calX$ and $\calY$, a representative equivariant $\tau : \calX \to \grp$, as defined in \eqref{eq:tau:def}, has the following properties:
  \begin{enumerate}[label=(\roman*)]
    \item The function $M_{\tau} : \calX \to \calX$ defined by $M_{\tau}(x) = \tau_x^{-1} \cdot x$ is a maximal invariant.
    \item For any mapping $b : \calX \to \calY$, the function  
    \begin{align} \label{eq:tau:equiv}
      f(x) = \tau_x \cdot b(\tau_x^{-1} \cdot x), \quad x\in\calX,
    \end{align}  
    is $\grp$-equivariant: $f(g\cdot x) = g\cdot f(x)$, $g \in \grp$.
  \end{enumerate}
\end{restatable}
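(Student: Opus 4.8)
The plan is to verify both claims by direct computation with the action of $\grp$, using only the defining identity \eqref{eq:tau:def} of a representative equivariant, the action axioms recorded in \cref{sec:meas:tech} (in particular $e\cdot z = z$ and $(g\cdot g')\cdot z = g\cdot(g'\cdot z)$, which hold on $\calX$ and, for the action $\trans_{\calY}$, on $\calY$), the group identity $(g\cdot g')^{-1} = g'^{-1}\cdot g^{-1}$, and the definition of a maximal invariant from \cref{sec:maximal:invariants}. The single thing one must stay careful about is that ``$\cdot$'' is overloaded: in $\tau_x^{-1}\cdot\tau_x = e$ it is composition in $\grp$, in $\tau_x^{-1}\cdot x$ it is the action $\trans_{\calX}$, and in $\tau_x\cdot b(\,\cdot\,)$ it is the action $\trans_{\calY}$.

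\emph{Part (i).} Measurability of $M_\tau$ is immediate, as it is the composite of the measurable maps $x\mapsto\tau_x$, inversion in $\grp$, and $\trans_{\calX}$. For invariance, I compute, for every $g\in\grp$,
\[
  M_\tau(g\cdot x) = \tau_{g\cdot x}^{-1}\cdot(g\cdot x) = (g\cdot\tau_x)^{-1}\cdot(g\cdot x) = (\tau_x^{-1}\cdot g^{-1})\cdot(g\cdot x) = \tau_x^{-1}\cdot\big((g^{-1}\cdot g)\cdot x\big) = \tau_x^{-1}\cdot x = M_\tau(x),
\]
where the second equality uses \eqref{eq:tau:def}. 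For maximality, suppose $M_\tau(x_1) = M_\tau(x_2)$, i.e.\ $\tau_{x_1}^{-1}\cdot x_1 = \tau_{x_2}^{-1}\cdot x_2$; acting on both sides by $\tau_{x_2}$ and using $\tau_{x_2}\cdot\tau_{x_2}^{-1} = e$ gives $x_2 = (\tau_{x_2}\cdot\tau_{x_1}^{-1})\cdot x_1$, so $x_2\in\grp\cdot x_1$ — exactly the condition required of a maximal invariant.

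\emph{Part (ii).} Again $f$ is measurable, being a composite of measurable maps. The key observation is that the argument of $b$ inside $f$ is $M_\tau(x)$, which part (i) shows is $\grp$-invariant; equivalently, $\tau_{g\cdot x}^{-1}\cdot(g\cdot x) = \tau_x^{-1}\cdot x$ by the same chain of equalities as above. Therefore, using \eqref{eq:tau:def} and then the action axiom on $\calY$,
\[
  f(g\cdot x) = \tau_{g\cdot x}\cdot b\big(\tau_{g\cdot x}^{-1}\cdot(g\cdot x)\big) = (g\cdot\tau_x)\cdot b\big(\tau_x^{-1}\cdot x\big) = g\cdot\Big(\tau_x\cdot b\big(\tau_x^{-1}\cdot x\big)\Big) = g\cdot f(x),
\]
which is $\grp$-equivariance.

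There is no substantive obstacle: the proof is pure bookkeeping. The only pitfalls are keeping the three meanings of ``$\cdot$'' apart; remembering that inversion reverses order, so $\tau_{g\cdot x}^{-1} = (g\cdot\tau_x)^{-1} = \tau_x^{-1}\cdot g^{-1}$ and not $g^{-1}\cdot\tau_x^{-1}$; and invoking measurability of $\trans_{\calX}$, $\trans_{\calY}$, inversion, and composition in the measurable group $\grp$ to justify that $M_\tau$ and $f$ are measurable. It is worth flagging that part (i) provides a concrete maximal invariant $M_\tau$ eligible for use in \cref{thm:group:invariant:rep}, while part (ii) is precisely the mechanism, used in \cref{sec:equivariant:representations}, for upgrading an arbitrary map $b$ to a $\grp$-equivariant one.
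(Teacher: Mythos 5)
Your proof is correct and takes essentially the same route as the paper's: both parts are verified by direct computation from the defining identity $\tau_{g\cdot x}=g\cdot\tau_x$, with maximality obtained by exhibiting the explicit group element relating $x_1$ and $x_2$, and equivariance of $f$ following from invariance of the argument of $b$. Your added remarks on measurability and on keeping the three uses of ``$\cdot$'' distinct are harmless refinements of the same argument.
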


We call $\tau$ a \emph{representative equivariant} of its use in constructing the maximal invariant $M_{\tau}(x):= \tau_x^{-1}\cdot x$, which is a representative element from each orbit in $\calX$. (We give examples in \cref{sec:finite:exch:seq,sec:finite:exch:arrays}.)  
The properties in \cref{prop:tau:properties} are used to establish the equivariant counterpart of \cref{thm:group:invariant:rep}. In essence, for $\grp$-invariant $P_X$, $P_{Y|X}$ is conditionally $\grp$-equivariant if and only if there exists a noise-outsourced function such that
\begin{align} \label{eq:kall:equiv}
  g\cdot Y \equas f(\noise,g\cdot X) \;.  
\end{align}
Observe that this is equivalent to $f$ being $\grp$-equivariant, as defined in \eqref{eq:equivariant:function:def}, in the second argument: $Y = e\cdot Y = f(\noise,e\cdot X) = f(\noise,X)$ and therefore
\begin{align*} 
  f(\noise,g \cdot X) = g \cdot Y = g \cdot f(\noise,X) \;, \quad \text{a.s., for each } g\in \grp \;.
\end{align*}
It is straightforward to show that for $X$ and $Y$ constructed in this way, the resulting distributions have the desired equivariance property. The existence of an equivariant functional representation is harder to show, and the representative equivariant $\tau$ plays a key role. We elaborate after stating the result.

\begin{restatable}{theorem}{groupequivariantrep}
  \label{thm:kall:equiv}
  Let $\grp$ be a compact group acting measurably on Borel spaces $\calX$ and $\calY$, such that there exists a measurable representative equivariant $\tau : \calX \to \grp$ satisfying \eqref{eq:tau:def}.  Suppose $P_X$ is $\grp$-invariant. Then $P_{Y|X}$ is $\grp$-equivariant if and only if there exists a measurable $\grp$-equivariant function $f : [0,1] \times \calX \to \calY$ satisfying \eqref{eq:kall:equiv} such that
  \begin{align*}
    (X,Y) \equas (X, f(\noise,X)) \quad \text{for each } g\in \grp , \text{ with } \noise\sim\Unif[0,1] \text{ and } \noise\condind X \;.
  \end{align*}
\end{restatable}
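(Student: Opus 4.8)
The plan is to deduce \cref{thm:kall:equiv} from the invariant case \cref{thm:group:invariant:rep} by using the representative equivariant $\tau$ to ``untwist'' $Y$, apply the invariant representation, and then ``re-twist'' the resulting function via \cref{prop:tau:properties}(ii). Throughout I would invoke \cref{prop:joint:symmetry}(ii), which lets me replace ``$P_{Y|X}$ is conditionally $\grp$-equivariant'' by the more symmetric statement that $(X,Y)$ is jointly $\grp$-invariant, i.e.\ $(g\cdot X, g\cdot Y)\equdist(X,Y)$ for all $g\in\grp$.

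For the (easy) ``if'' direction, suppose such an $f$ exists. Since $f$ is $\grp$-equivariant in its second argument, $g\cdot Y \equas g\cdot f(\noise,X) = f(\noise, g\cdot X)$; and since $\noise\condind X$ while $P_X$ is $\grp$-invariant, $(\noise,g\cdot X)\equdist(\noise,X)$, so pushing forward by the measurable map $(u,x)\mapsto(x,f(u,x))$ yields $(g\cdot X, g\cdot Y)\equdist(X,Y)$. By \cref{prop:joint:symmetry}(ii) this is exactly conditional $\grp$-equivariance of $P_{Y|X}$.

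For the ``only if'' direction, assume $(X,Y)$ is jointly $\grp$-invariant and set $Y'' := \tau_X^{-1}\cdot Y$. The crucial observation is that the measurable map $\Psi(x,y) := (x, \tau_x^{-1}\cdot y)$ satisfies $\Psi(g\cdot x, g\cdot y) = (g\cdot x,\, \tau_{g\cdot x}^{-1}\cdot(g\cdot y)) = (g\cdot x,\, \tau_x^{-1}\cdot y)$, where \eqref{eq:tau:def} gives $\tau_{g\cdot x}^{-1}\cdot g = (g\cdot\tau_x)^{-1}\cdot g = \tau_x^{-1}$. Hence $(g\cdot X, Y'') = \Psi(g\cdot X,g\cdot Y)\equdist\Psi(X,Y) = (X,Y'')$ for all $g$, so by \cref{prop:joint:symmetry}(i) the conditional law $P_{Y''|X}$ is $\grp$-invariant. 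I would then apply \cref{thm:group:invariant:rep} to $(X,Y'')$ with the maximal invariant $M_\tau(x)=\tau_x^{-1}\cdot x$ from \cref{prop:tau:properties}(i): there is a measurable $h:[0,1]\times\calX\to\calY$ with $Y'' \equas h(\noise, \tau_X^{-1}\cdot X)$, $\noise\sim\Unif[0,1]$, $\noise\condind X$. Applying $\tau_X\cdot(\argdot)$ recovers $Y\equas \tau_X\cdot h(\noise,\tau_X^{-1}\cdot X)$, so I would set $f(u,x):=\tau_x\cdot h(u,\tau_x^{-1}\cdot x)$. By \cref{prop:tau:properties}(ii), applied with $b=h(u,\argdot)$ for each fixed $u$, this $f$ is $\grp$-equivariant in its second argument; it is jointly measurable because $x\mapsto\tau_x$, inversion in $\grp$, and the actions $\trans_{\calX},\trans_{\calY}$ are measurable and $h$ is measurable; it satisfies $(X,Y)\equas(X,f(\noise,X))$ with $\noise\condind X$; and equivariance gives $g\cdot Y = g\cdot f(\noise,X)=f(\noise,g\cdot X)$, which is \eqref{eq:kall:equiv}.

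I expect the main obstacle to be conceptual rather than technical: recognizing that untwisting $Y$ by $\tau_X^{-1}$ is the right move and that it genuinely produces a pair $(X,Y'')$ whose conditional law is $\grp$-invariant — so that \cref{thm:group:invariant:rep} applies — which rests entirely on the cancellation $\tau_{g\cdot x}^{-1}\cdot g = \tau_x^{-1}$ afforded by \eqref{eq:tau:def}. The remaining points are routine: verifying joint measurability of $f$ (where the standing assumptions that $\grp$ is a measurable group acting measurably on $\calX$ and $\calY$ are used), and noting that compactness enters only through the invocation of \cref{thm:group:invariant:rep}.
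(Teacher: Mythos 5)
Your proposal is correct, and in the hard direction it takes a genuinely different (and shorter) route than the paper. The paper proves the forward direction by directly establishing the conditional independence $\tau_X^{-1}\cdot Y \condind_{M_{\tau}(X)} X$ via an explicit Haar-randomization argument adapted from Kallenberg's Lemma~7.11: it introduces $G\sim\normhaar$ independent of $(X,Y)$, derives a chain of distributional identities culminating in $(X, M_{\tau}(X), \tau_X^{-1}\cdot Y)\equdist(G\cdot M_{\tau}(X), M_{\tau}(X), \tau_X^{-1}\cdot Y)$, transfers a trivial conditional independence across this equality, and only then invokes \cref{lem:noise:out:suff}. You instead observe that the untwisting map $\Psi(x,y)=(x,\tau_x^{-1}\cdot y)$ intertwines the joint action with the action on $X$ alone — via the same cancellation $\tau_{g\cdot x}^{-1}\cdot g=\tau_x^{-1}$ that drives the paper's argument — so that joint invariance of $(X,Y)$ gives $(g\cdot X,\tau_X^{-1}\cdot Y)\equdist(X,\tau_X^{-1}\cdot Y)$, and then \cref{prop:joint:symmetry}(i) plus \cref{thm:group:invariant:rep} (with the maximal invariant $M_{\tau}$ from \cref{prop:tau:properties}(i)) hand you the representation of $\tau_X^{-1}\cdot Y$ with no further probabilistic work; both proofs then finish identically by re-twisting with \cref{prop:tau:properties}(ii). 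Your reduction buys economy and conceptual clarity ("equivariant = invariant after untwisting"), with compactness and the Haar/orbit-law machinery entering only through the already-proved \cref{thm:group:invariant:rep}; the paper's self-contained derivation buys the explicit statement $\tau_X^{-1}\cdot Y \condind_{M_{\tau}(X)} X$, which it later reuses verbatim in the proof of \cref{thm:suff:maximal:invariant}(ii) — under your route that relation still follows (apply \cref{lem:maximal:invariant} to the untwisted pair), but it would have to be recorded separately rather than read off the proof.
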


\begin{figure}
  \begin{center}
    \begin{tabular}{c c c}
      \resizebox{!}{0.25\textheight}{
        \begin{tikzpicture}
          \draw[thick,color=orangeClj] (0,0) circle (2.5cm);
          \draw[thick,color=blueClj] (0,0) circle (1.5cm);
          \draw[thick,color=darkgreenClj] (0,0) circle (1cm);

          \fill[color=orangeClj] (-2.5,0) circle (4pt);
          \node[color=orangeClj] at (0,2.5) {\Huge $*$};

          \fill[color=blueClj] (1.225,0.866) circle (4pt);
          \node[color=blueClj] at (0,1.5) {\Huge $*$};

          \fill[color=darkgreenClj] (0.548,-0.837) circle (4pt);
          \node[color=darkgreenClj] at (0,1) {\Huge $*$};

          \draw[thick,color=orangeClj,->] (0,0) ++(180:2.22) arc (180:90:2.2);
          \draw[thick,color=blueClj,->] (0,0) ++(35.25:1.8) arc (35.25:90:1.8);
          \draw[thick,color=darkgreenClj,->] (0,0) ++(-56.77:0.7) arc (-56.77:90:0.7);

          \node at (-2.75,2.75) {\LARGE{$\calX$}};
        \end{tikzpicture}
      } &
      \resizebox{!}{0.25\textheight}{
        \begin{tikzpicture}[rotate=90, transform shape]
          \begin{axis}[
          no markers, axis y line=none, axis x line*=bottom, 
          xtick={-5,-0.25,3.5}, ytick=\empty, x tick style={draw=none},
          xticklabels={\textcolor{cyanClj}{\Huge $*$},\textcolor{purpleClj}{\Huge $*$},\textcolor{redClj}{\Huge $*$}},
          domain=-10:10, samples=100,
          height=3cm,width=7cm
          ]
            \addplot[very thick,blueClj] {gauss(3,1.5)};
            \addplot[very thick,orangeClj] {gauss(-3,1)};
            \addplot[very thick,darkgreenClj] {gauss(1,0.85)};
          \end{axis}
        \end{tikzpicture}
      } &
      \resizebox{!}{0.25\textheight}{
        \begin{tikzpicture}
          \draw[thick,color=cyanClj] (1,1.5) circle (0.5cm);
          \draw[thick,color=purpleClj] (-2,1) circle (1.414cm);
          \draw[thick,color=redClj] (0.5,-0.5) circle (1.118cm);

          \node[color=cyanClj] at (1,2) {\Huge $*$};
          \fill[color=cyanClj] (0.5,1.5) circle (4pt);

          \node[color=purpleClj] at (-2,2.414) {\Huge $*$};
          \fill[color=purpleClj] (-0.845,1.816) circle (4pt);

          \fill[color=redClj] (1.112,-1.435) circle (4pt);
          \node[color=redClj] at (0.5,0.618) {\Huge $*$};

          \draw[thick,color=cyanClj,<-] (1,1.5) ++(180:0.8) arc (180:90:0.8);
          \draw[thick,color=purpleClj,<-] (-2,1) ++(35.25:1.714) arc (35.25:90:1.714);
          \draw[thick,color=redClj,<-] (0.5,-0.5) ++(-56.77:0.818) arc (-56.77:90:0.818);

          \node at (-2.75,2.75) {\Large{$\calY$}};
        \end{tikzpicture}
      }
    \end{tabular}
  \end{center}
  \caption{
    Illustration of the structure of \cref{thm:kall:equiv}. Each orbit of $\calX$ (left) induces a distribution over the orbits of $\calY$ (middle). Given $X$, a sample $Y$ is generated by: 1) obtaining the orbit representative $\tau_X^{-1}\cdot X$; 2) sampling an orbit representative in $\calY$, conditioned on $\tau_X^{-1}\cdot X$; and 3) applying $\tau_X$ to the orbit representative in $\calY$. (Best viewed in color.)
  }
  \label{fig:thm:kall:equiv}
\end{figure}
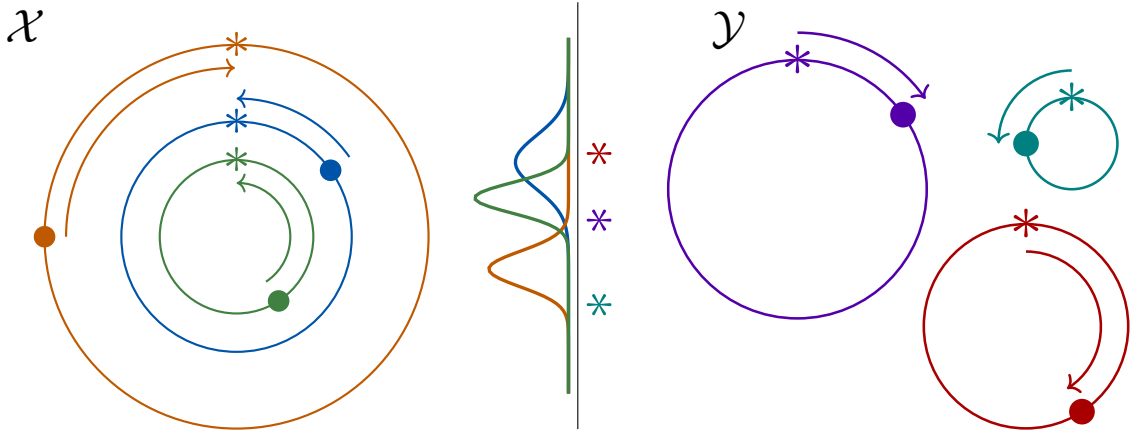

The proof of \cref{thm:kall:equiv} adapts an argument due to \citet[][Lem.~7.11]{Kallenberg:2005}. 
The proof, given in full in \cref{sec:proof:kall:equiv}, is rather technical but we sketch the basic ideas here. $\grp$-invariance of $P_X$ and $\grp$-equivariance of $P_{Y|X}$ is equivalent to the joint invariance $(g\cdot X, g\cdot Y) \equdist (X,Y)$, for all $g\in\grp$. As such, we require a frame of reference relative to which $X$ and $Y$ will equivary. The representative equivariant $\tau$ plays this role. Recall that $\tau$ is used to define a maximal invariant, $M_{\tau}(x):= \tau_x^{-1}\cdot x$, which maps $x$ to the representative element of its orbit. The proof of \cref{thm:kall:equiv} relies on establishing the conditional independence relationship $\tau_X^{-1} \cdot Y \condind_{M_{\tau}(X)} X$. Noise outsourcing (\cref{lem:noise:out:suff}) then implies that $\tau_X^{-1}\cdot Y = b(\noise, M_{\tau}(X))$ for some $b : [0,1] \times \calX \to \calY$, and hence
\begin{align} \label{eq:equivariant:construction}
  Y = f(\noise,X) := \tau_X \cdot b(\noise, M_{\tau}(X)) = \tau_X \cdot b(\noise, \tau_X^{-1}\cdot X) \;.
\end{align} 
Finally, \cref{prop:tau:properties} (see \cref{sec:technical}) establishes that $f$ so defined is equivariant in the second argument: $f(\noise, g\cdot X) = g\cdot f(\noise,X)$ for all $g\in\grp$. 

Observe that if the action of $\grp$ on $\calY$ is trivial, $\tau_X^{-1}\cdot Y \condind_{M_{\tau}(X)} X$ reduces to $Y \condind_{M_{\tau}(X)} X$, which is precisely the relationship needed to establish the invariant representation in \cref{thm:group:invariant:rep}. However, in the invariant case, \emph{any} maximal invariant will d-separate $X$ and $Y$, and therefore \cref{thm:group:invariant:rep} is more general than simply applying \cref{thm:kall:equiv} to the case with trivial group action on $\calY$. 
The additional assumptions in \cref{thm:kall:equiv} account for the non-trivial action of $\grp$ on $\calY$, which requires setting up a fixed frame of reference through $\tau$ and the orbit representatives $M_{\tau}(X)$. 

\Cref{thm:kall:equiv} may hold for non-compact groups, but our proof for compact groups makes use of the normalized Haar measure; extending to non-compact groups requires additional technical overhead.

\subsection{Symmetry-induced Adequate Statistics}
\label{sec:maximal:invariants:adequate}

The statistics upon which the functional representations in \cref{thm:group:invariant:rep,thm:kall:equiv} rely capture all of the relevant structural information from $X$, and discard the rest. 
The utility of this orbit-resolving property can be characterized statistically. For suitably defined models---families of distributions satisfying the required invariance conditions---a maximal invariant is a sufficient statistic for the marginal model $\model_X$, and an adequate statistic for the joint model $\model_{X,Y}$. Similarly, for models satisfying the equivariance conditions, a representative equivariant is an adequate statistic.  

To make this precise, denote by $\model_{X,Y}^{\inv}$ the family of probability measures on $\calX\times\calY$ that satisfy $(g\cdot X, Y) \equdist (X,Y)$, for all $g\in\grp$.  
Similarly, denote by $\model_{X,Y}^{\equ}$ the family of probability measures on $\calX\times\calY$ that satisfy $(g\cdot X, g\cdot Y) \equdist (X,Y)$, for all $g \in \grp$. Let $\model_{X,\tau_X^{-1}\cdot Y}$ denote the family of probability measures obtained from $\model_{X,Y}^{\equ}$ via $P_{X,Y} \mapsto P_X P_{Y|X}\circ \tau_X^{-1}$.

\begin{restatable}{theorem}{sufficientmaximalinvariant} 
  \label{thm:suff:maximal:invariant}
  Let $\grp$ be a compact group acting measurably on standard Borel spaces $\calX$ and $\calY$, and let $\calS$ be another Borel space. Then:  
  \begin{enumerate}[label=(\roman*)]
    \item Any maximal invariant $M : \calX \to \calS$ on $\calX$ under $\grp$ is an adequate statistic of $X$ for $Y$ with respect to any model $\model_{X,Y} \subseteq \model_{X,Y}^{\inv}$. In particular, to each $P_{X,Y}\in \model_{X,Y}^{\inv}$ corresponds a measurable $\grp$-invariant function $f : [0,1] \times \calS \to \calY$ as in \eqref{eq:max:invariant:rep}. 
    \item $M_{\tau}$ as in \cref{prop:tau:properties} is an adequate statistic of $X$ for $\tau_X^{-1}\cdot Y$ with respect to any model $\model_{X,Y} \subseteq \model_{X,\tau_X^{-1}\cdot Y}$, and to each $P_{X,Y} \in \model_{X,\tau_X^{-1}\cdot Y}$ there corresponds a measurable function $b : [0,1] \times \calX \to \calY$ as in \eqref{eq:equivariant:construction} such that $\tau_X\cdot b(\eta,\tau_X^{-1}\cdot X)$ is $\grp$-equivariant.
  \end{enumerate} 
\end{restatable}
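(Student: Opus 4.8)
The plan is to verify, for each part, the two requirements of \cref{def:adequate:stat}: that the statistic is sufficient for the marginal model $\model_X$, and that it d-separates $X$ from the relevant output for every member of the model; the functional representations then come for free from \cref{lem:noise:out:suff} (equivalently \cref{cor:adeq:rep}). The structural heart of the argument is the sufficiency claim: conditioning \emph{any} $\grp$-invariant law on $\calX$ on a maximal invariant produces the \emph{same} Markov kernel, namely the canonical invariant probability measure supported on the corresponding orbit.

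For part (i), observe first that if $\model_{X,Y}\subseteq\model_{X,Y}^{\inv}$ then each marginal $P_X$ is $\grp$-invariant (take the $Y$-coordinate trivial in $(g\cdot X,Y)\equdist(X,Y)$), so it suffices to show $M$ is sufficient for the family of \emph{all} $\grp$-invariant laws on $\calX$. Here I would use the orbit disintegration developed for \cref{thm:group:invariant:rep} in \cref{sec:technical}: since $M$ is invariant and takes a distinct value on each orbit, the conditional law $P_X(X\in\argdot\mid M(X)=m)$ is concentrated on the orbit $\grp\cdot x$ (any $x$ with $M(x)=m$) and is itself $\grp$-invariant; a homogeneous $\grp$-space carries a unique $\grp$-invariant probability measure, namely the image of the normalized Haar measure $\normhaar$ under $g\mapsto g\cdot x$ (this is where compactness of $\grp$ enters). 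Hence the conditional kernel does not depend on which $\grp$-invariant $P_X$ we started with, giving sufficiency in the sense of \cref{def:pred:suff}. For the d-separation requirement and the ``in particular'' clause: $(g\cdot X,Y)\equdist(X,Y)$ together with $\grp$-invariance of $P_X$ and \cref{prop:joint:symmetry}(i) shows $P_{Y|X}$ is conditionally $\grp$-invariant, so \cref{thm:group:invariant:rep} yields a measurable $f:[0,1]\times\calS\to\calY$ with $(X,Y)\equas(X,f(\noise,M(X)))$, $\noise\condind X$; then \cref{lem:noise:out:suff} shows $M(X)$ d-separates $X$ and $Y$, completing the verification of adequacy. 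That $f$ is $\grp$-invariant is automatic, since it depends on $x$ only through the invariant $M$.

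For part (ii), $M_{\tau}(x)=\tau_x^{-1}\cdot x$ is a maximal invariant into $\calX$ by \cref{prop:tau:properties}(i), so the task is to cast the pair $(X,\,\tau_X^{-1}\cdot Y)$ into the setting of part (i). Put $W:=\tau_X^{-1}\cdot Y$ and $\psi(x,y):=(x,\tau_x^{-1}\cdot y)$; this $\psi$ is measurable, since $\tau$, group inversion, and the action on $\calY$ are all measurable. Equivariance of $\tau$ gives $\tau_{g\cdot x}^{-1}=(g\cdot\tau_x)^{-1}=\tau_x^{-1}\cdot g^{-1}$, hence $\psi(g\cdot x,g\cdot y)=(g\cdot x,\,\tau_x^{-1}\cdot g^{-1}\cdot g\cdot y)=(g\cdot x,\,\tau_x^{-1}\cdot y)$. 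Therefore, for $P_{X,Y}\in\model_{X,Y}^{\equ}$ (where $(g\cdot X,g\cdot Y)\equdist(X,Y)$), applying $\psi$ to both sides gives $(X,W)\equdist(g\cdot X,W)$ for all $g\in\grp$; that is, the transported family $\model_{X,\tau_X^{-1}\cdot Y}$ lies in $\model_{X,W}^{\inv}$, with the same (still $\grp$-invariant) $X$-marginal. Part (i) applied to this family shows $M_{\tau}$ is an adequate statistic of $X$ for $W$ and produces a measurable $b:[0,1]\times\calX\to\calY$ with $(X,W)\equas\big(X,b(\noise,M_{\tau}(X))\big)=\big(X,b(\noise,\tau_X^{-1}\cdot X)\big)$; multiplying through by $\tau_X$ yields $Y\equas\tau_X\cdot b(\noise,\tau_X^{-1}\cdot X)$, and \cref{prop:tau:properties}(ii) (applied to the map $b(\noise,\argdot):\calX\to\calY$ for fixed $\noise$) shows $x\mapsto\tau_x\cdot b(\noise,\tau_x^{-1}\cdot x)$ is $\grp$-equivariant.

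The main obstacle is the sufficiency step in part (i): making rigorous, uniformly in $m$ and measurably, the statement that the conditional law given $M(X)=m$ is the canonical invariant orbit measure, independent of the choice of $P_X$. This is exactly the disintegration-of-invariant-measures machinery (compactness, the normalized Haar measure, uniqueness of invariant probabilities on homogeneous spaces, and regular conditional distributions on standard Borel spaces) assembled for \cref{thm:group:invariant:rep} in \cref{sec:technical}; the remainder is bookkeeping, together with the minor check that $(x,y)\mapsto\tau_x^{-1}\cdot y$ is jointly measurable so that $W$ is a bona fide random element and $\psi$ a measurable map.
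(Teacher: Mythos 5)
Your argument is correct, and for part (i) it is essentially the paper's: sufficiency of $M$ comes from the orbit-law disintegration of $\grp$-invariant measures (the content of \cref{lem:maximal:invariant} in \cref{sec:technical}, which you explicitly defer to), and the d-separation plus the representation come from \cref{lem:noise:out:suff}; your only deviation there is cosmetic, routing d-separation through \cref{prop:joint:symmetry} and \cref{thm:group:invariant:rep} and then back through the ``if and only if'' of \cref{lem:noise:out:suff}, rather than quoting the second claim of \cref{lem:maximal:invariant} directly. Part (ii) is where you genuinely depart from the paper: the paper simply imports the conditional independence $\tau_X^{-1}\cdot Y \condind_{M_{\tau}(X)} X$ from the proof of \cref{thm:kall:equiv}, which is established there by the Haar-randomization computation with $G\sim\normhaar$, whereas you re-derive it by pushing $(X,Y)$ through the measurable map $\psi(x,y)=(x,\tau_x^{-1}\cdot y)$, using $\tau_{g\cdot x}^{-1}=\tau_x^{-1}\cdot g^{-1}$ to show the transported law satisfies $(g\cdot X,\,\tau_X^{-1}\cdot Y)\equdist(X,\,\tau_X^{-1}\cdot Y)$, and then applying part (i) with the maximal invariant $M_{\tau}$; equivariance of $\tau_X\cdot b(\noise,\tau_X^{-1}\cdot X)$ then follows from \cref{prop:tau:properties}(ii) exactly as in the paper. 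Your route makes part (ii) a self-contained corollary of part (i) and is arguably more transparent (the only nontrivial input beyond part (i) is the one-line equivariance identity for $\tau$), at the cost of duplicating work that \cref{thm:kall:equiv} already performs; the paper's choice avoids that duplication but hides the key step inside another proof. Both ultimately rest on the same compact-group/Haar machinery of \cref{lem:maximal:invariant}, so the difference is organizational rather than in mathematical substance, and I see no gap.
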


\Cref{thm:suff:maximal:invariant} puts explicit probability models in correspondence with implicit functional models. 
It is a consequence of \cref{lem:maximal:invariant} in \cref{sec:technical}, which establishes the necessary structural properties of all invariant distributions. 
To illustrate, consider again the example of an exchangeable $\calX$-valued sequence.

\begin{example}[Adequate statistics of an exchangeable sequence] \label{ex:max:invariant:exch:seq}
 Let $\Xn$ be an \\exchangeable $\calX$-valued sequence. $\bbS_n$ acts on any point $\xn\in\calX^n$ by permuting its indices, which defines an action on $\calX^n$; the orbit of $\xn$ is the set of sequences that can be obtained from $\xn$ by applying a permutation. The empirical measure is a well-known sufficient statistic for $\model^{\bbS_n}_{\Xn}$; it is easy to see that the empirical measure is also a maximal invariant (see \cref{sec:invariant:seq}). If $\calX=\bbR$ (or any other set with a total order), then so too is the vector of order statistics, $\xn^{\uparrow}=(x_{(1)},\dotsc,x_{(n)})$, with $x_{(1)} \leq \dotsb \leq x_{(n)}$. A representative equivariant may be obtained by defining $\tau_{\xn}^{-1}$ as any permutation $\perm$ (not necessarily unique) that satisfies $\perm \cdot \xn = \xn^{\uparrow}$. 
\end{example}

\paragraph{Exchangeable random structures}  
\Cref{sec:finite:exch:seq,sec:finite:exch:arrays} work out the details of the program outlined in \cref{sec:general:program} to various exchangeable random structures, and relates the resulting representations to some recent \citep{Zaheer:etal:2017,Gilmer:etal:2017,Hartford:etal:2018} and less recent \citep{ShaweTaylor:1989} work in the neural networks literature. Exchangeability is distributional invariance under the action of $\Symn$ (or other groups defined by composing $\Symn$ in different ways for other data structures). \Cref{ex:max:invariant:exch:seq} states that the empirical measure is a maximal invariant of $\Symn$ acting on $\calX^n$; suitably defined generalizations of the empirical measure are maximal invariants for other exchangeable structures. With these maximal invariants, obtaining a functional representation of an invariant conditional distribution is straightforward.

With an additional conditional independence assumption that is satisfied by most neural network architectures, \cref{thm:kall:equiv} can be refined to obtain a detailed functional representation of the relationship between $\Symn$-equivariant random variables (\cref{thm:finite:exch:seq:equivariant}). That refinement relies on the particular subgroup structure of $\Symn$, and it raises the question, not pursued here, of whether, and under what conditions, other groups may yield similar refinements.

\section{Practical Implications and Considerations}
\label{sec:practical}

In order to make the previous sections' theory more useful to practitioners, we formulate a general program for model specification, briefly discuss computational considerations, and interpret some aspects of the theory in the context of the recent literature.

\subsection{A Program for Obtaining Symmetric Functional Representations}
\label{sec:general:program}

The previous section suggests that for a model consisting of $\grp$-invariant distributions, if an adequate statistic can be found then all distributions in the conditional model, $P_{Y|X} \in \model_{Y|X}$ have a noise-outsourced functional representation in terms of the adequate statistic, as in \eqref{eq:noise:outsourcing}. Furthermore, \cref{thm:suff:maximal:invariant} says that any maximal invariant is an adequate statistic under such a model. Therefore, for a specified compact group $\grp$ acting measurably on $\calX$ and $\calY$, a program for functional model specification through distributional symmetry is as follows:
\begin{mdframed}
\textsc{Program for $\grp$-equivariant and -invariant model specification.}
  \begin{enumerate}[label=(\roman*)]
    \item Determine a maximal $\grp$-invariant statistic $M : \calX \to \calS$. 
    \item Fix a representative equivariant under $\grp$, $\tau : \calX \to\grp$, that can be used to map any element of $\calX$ to its orbit's representative.
    \item Specify the model with a function class $\calF_{\tau,M}$, consisting of compositions of noise-outsourced equivariant functions and a final noise-outsourced invariant function. 
  \end{enumerate}
\end{mdframed}

\subsection{Computing Maximal Invariants and Representative Equivariants}
\label{sec:computing:max:inv}

In theory, a maximal invariant can be computed by specifying a representative element for each orbit. In practice, this can always be done when $\grp$ is discrete because the relevant elements can be enumerated and reduced to permutation operations. Several systems for computational discrete mathematics, such as Mathematica \citep{Mathematica} and GAP \citep{GAP4}, have built-in functions to do so. For continuous groups it may not be clear how to compute a maximal invariant, and it may be impractical. 
Furthermore, maximal invariants are not unique, and some may be better suited to a particular application than others. As such, they are best handled on a case-by-case basis, depending on the problem at hand. Some examples from the classical statistics literature are reviewed in \citet[][Ch.~6]{Lehmann:Romano:2005} and \citet[][Ch.~2]{Eaton:1989}; \citet[][Ch.~7]{Kallenberg:2017} presents a generic method based on so-called projection maps. \Cref{sec:finite:exch:seq,sec:finite:exch:arrays} apply the theory of this section to exchangeable structures by explicitly constructing maximal invariants.

Likewise, a representative equivariant $\tau$, as used in \cref{thm:kall:equiv}, always exists for a discrete group acting on a Borel space $\calX$, a proof of which is given in \citet[][Lem. 7.10]{Kallenberg:2005}. If $\grp$ is compact and acts measurably on Borel spaces $\calX$ and $\calY$, then a representative equivariant exists; see \citet[][Remark 2.46]{Schindler:2003}. In more general settings, $\tau$ with the desired properties may not be well-defined or it may not be measurable, in which case a $\grp$-invariant probability kernel (called an \emph{inversion kernel}) may by constructed from a maximal invariant to obtain similar representations; see \citet[][Ch.~7]{Kallenberg:2017}. Whether a representative equivariant is readily computed is a different matter. If, as in \cref{ex:max:invariant:exch:seq}, a maximal invariant corresponding to a representative element of each orbit is computed, it corresponds to $M_{\tau}$ and the representative equivariant is computed as a by-product. However, even this can be difficult. For example, in the case of graphs (\cref{sec:finite:exch:arrays}), finding a maximal invariant statistic is equivalent to finding a complete graph invariant \citep[e.g.,][]{Lovasz:2012}; finding a representative equivariant is equivalent to graph canonization, which is at least as computationally hard as the graph isomorphism problem.

\subsection{On the Advantages of Stochasticity} \label{sec:practical:advantages}

Much of deep learning (and machine learning in general) is conducted in the framework of learning a deterministic function that does not have stochastic component apart from the data. Although that approach has been successful in a wide variety of settings, and the methods are often more accessible to users without training in probability, there are some fundamental limitations to the approach. One body of work has argued that injecting noise into a neural network during training has beneficial effects on training \citep[e.g.,][]{Srivastava:etal:2014,Wager:etal:2014}.

We focus here on a different aspect. Let $X$ and $Y\in\bbR$ be the input and output data, respectively, and let $Y'=f(X)$, for some $f : \calX \to \bbR$. A basic result about conditional expectation \citep[see, e.g.,][Thm.\ IV.1.17]{Cinlar:2011} says that an $\bbR$-valued random variable $Y'$ is a deterministic (measurable) function of $X$ if and only if $Y'$ corresponds to the conditional expectation of some (typically not unique) random variable (say $Z$) given $X$. That is,
\begin{align*}
  Y' = f(X) \quad \iff \quad Y' = \bbE[Z \mid X] \;.
\end{align*}
Much of machine learning relies on (approximately) minimizing a risk $\bbE[\ell(f(X),Y)]$ with respect to $f$. 
The best such a procedure can do is to learn the conditional expectation $\bbE[Y \mid X]$; the model does not allow $Y$ to exhibit any randomness apart from that in $X$. Except for certain cases (see below), important information may be lost.  

\paragraph{Energy-based models} The extension from conditional expectation $\bbE[Y \mid X]$ to conditional distribution $P_{Y|X}$ is typically not unique. One principled way of bridging the gap is via maximum entropy, which stipulates that $P_{Y|X}$ should be the distribution with the maximum entropy, $H(P_{Y|X})$, subject to a constraint on the conditional expectation, e.g., $\bbE[Y \mid X] = s$. The result is a distribution with density
\begin{align*}
  p_{Y|X}(y) \propto \exp(-\beta y) = \argmin_{p_{Y|X}} \mathcal{L}(p_{Y|X},\beta) = -H(p_{Y|X}) + \beta (\bbE_{p_{Y|X}}[Y \mid X] - s) \;,
\end{align*}
that minimizes the free energy $\mathcal{L}(p_{Y|X},\beta)$. This approach, which has a long history in statistical physics (macrocanonical models) and classical statistics (exponential families) \citep{Jaynes:1957}, was the starting point for work on invariant wavelet-based models \citep{Bruna:Mallat:2013,Bruna:Mallat:2018,Gao:etal:2019}, which can be described in terms of invariant sufficient statistics. More broadly, this approach fits in the framework of energy-based learning with Gibbs distributions \citep{LeCun:etal:2006,Grathwohl:etal:2020}, which subsumes much of the standard machine learning practice of training deterministic functions with (regularized) empirical risk minimization. Despite the long history and easy of interpretation of energy-based models, more sophisticated approaches that incorporate randomness in the network itself seem attractive, for the reasons detailed above.

\paragraph{Simpler proofs and clearer structure} From a theoretical perspective, the probabilistic approach often yields simpler proofs of general results than their deterministic counterparts. Although this assessment can be somewhat subjective, this paper is an example: most of our results rely on establishing conditional independence and appealing to \cref{lem:noise:out:suff}. This includes \cref{thm:finite:exch:seq:invariant}, which characterizes the representation of permutation-invariant distributions. Its the proof of its deterministic counterpart by \citet{Zaheer:etal:2017}, relies on rather lengthy arguments about symmetric polynomials. Furthermore, the structure driving the representation becomes, in our opinion, much clearer from the perspective of sufficiency.

\subsection{Choosing a Function Class}
\label{sec:practical:choosing}

\Cref{thm:group:invariant:rep,thm:kall:equiv} are very general existence results about the representations of symmetric distributions, and in that sense they are related to the recent literature on \emph{universal approximations} \citep[as in][]{Cybenko:1989,Hornik:1991} of invariant and equivariant functions \citep[e.g.,][]{Yarotsky:2018,Maron:etal:2019,Keriven:Payre:2019,Segol:Lipman:2020,Ravanbakhsh:2020}. Those results rely on approximating to arbitrary precision $\grp$-symmetric polynomials. The most practically useful work considers the universal approximation problem in terms of composing linear maps with pointwise activations \citep{Maron:etal:2019,Segol:Lipman:2020,Ravanbakhsh:2020}. For finite groups, that approach also gives upper and lower bounds on the degree of the polynomial required. Despite being conceptually related to that work, \cref{thm:group:invariant:rep,thm:kall:equiv} are \emph{exact} representation results for $\grp$-invariant and -equivariant distributions. Universal distributional approximation is an open question.  

However, like the universal approximation literature, \cref{thm:group:invariant:rep,thm:kall:equiv} are mostly silent about the practical matter of \emph{which} class of functions should be used for a model. Detailed analysis of this type of problem, which is beyond the scope of this paper, is an active area of research requiring detailed analysis of the trade-off between model complexity, computation, and problem-specific requirements. \Citet{Bietti:Mairal:2019} takes some initial steps in this direction.

\paragraph{Convolutions and beyond} The most widely used equivariant layers correspond to convolutions (with respect to $\grp$) with an equivariant convolution kernel. Recent work shows that they are the \emph{only} equivariant linear maps \citep{Kondor:Trivedi:2018,Cohen:etal:2019}. The details of this program have been implemented for a growing number of groups; see \citet[][Appendix D]{Cohen:etal:2019} for a comprehensive list. Modifications of the basic convolutional filters have appeared in the form of convolutional capsules \citep[e.g.,][]{Sabour:etal:2017,Hinton:etal:2018,Lenssen:etal:2018}, and convolution was recently combined with attention mechanisms \citep{Bello:etal:2019,Romero:etal:2020}. 
The representation result in \cref{thm:kall:equiv} says nothing about linearity; non-convolutional equivariant architectures based on self-attention \citep{Parmar:etal:2019} have recently appeared.

\section{Learning from Finitely Exchangeable Sequences}
\label{sec:finite:exch:seq}

In this section,\footnote{The main results of this section appeared in an extended abstract \citep{Bloem-Reddy:Teh:2018:short}.} the program described in \cref{sec:general:program} is fully developed for the case where the conditional distribution of $Y$ given $X$ has invariance or equivariance properties with respect to $\Symn$. Deterministic examples have appeared in the neural networks literature \citep{ShaweTaylor:1989,Zaheer:etal:2017,Ravanbakhsh:etal:2017,Murphy:etal:2019}, and the theory developed in here establishes the necessary and sufficient functional forms for permutation invariance and equivariance, in both the stochastic and deterministic cases.

Throughout this section, the input $X=\Xn$ is a sequence of length $n$, and $Y$ is an output variable, whose conditional distribution given $\Xn$ is to be modeled.\footnote{Note that $\Xn$ may represent a set (i.e., there are no repeated values) or a multi-set (there may be repeated values). It depends entirely on $P_{\Xn}$: if $P_{\Xn}$ places all of its probability mass on sequences $x_n\in\calX^n$ that do not have repeated values, then $\Xn$ represents a set almost surely. Otherwise, $\Xn$ represents a multi-set. The results of this section hold in either case.} 
Recall from \cref{sec:background:prob:symm} that $P_{Y|\Xn}$ is $\Symn$-invariant if $Y|\Xn \equdist Y|\perm \cdot \Xn$ for each permutation $\perm\in\Symn$ of the input sequence.  Alternatively, if $Y=\Yn$ is also a sequence of length $n$,\footnote{In general, $\bfY$ need not be of length $n$, but the results are much simpler when it is; see \cref{sec:seq:partial:exch}.} then we say that $\Yn$ given $\Xn$ is $\Symn$-equivariant if $\Yn|\Xn \equdist \perm\cdot \Yn|\perm\cdot \Xn$ for all $\perm\in\Symn$.  In both cases these symmetry properties stem from the assumption that the ordering in the input sequence $\Xn$ does not matter; that is, the distribution of $\Xn$ is finitely exchangeable: $\Xn\equdist \perm\cdot \Xn$ for each $\perm\in\Symn$. 
Recall that $P_{\Xn}$ and $P_{Y|\Xn}$ denote the marginal and conditional distributions respectively, $\model^{\Symn}_{\Xn}$ is the family of distributions on $\calX^n$ that are $\Symn$-invariant, and $\model^{\Symn}_{Y|\Xn}$ is the family of conditional distributions on $\calY$ given $\Xn$ that are $\Symn$-invariant.

The primary conceptual matter is the central role of the empirical measure $\empirical_{\Xn}$. 
Exchangeable sequences have been studied in great detail, and the sufficiency of the empirical measure for $\model_{\Xn}^{\Symn}$ is well-known (e.g., \citet{Diaconis:Freedman:1980}; \citet[][Prop.\ 1.8]{Kallenberg:2005}). It is also straightforward to show the adequacy of the empirical measure for $\model_{\Xn,Y}^{\Symn}$ using methods that are not explicitly group theoretic. Alternatively, it is enough to show that the empirical measure is a maximal invariant of $\calX^n$ under $\Symn$ and then apply \cref{thm:suff:maximal:invariant}. In either case, the results of the previous sections imply a noise-outsourced functional representation of $\Symn$-invariant conditional distributions (\cref{sec:invariant:seq}). The previous sections also imply a representation for $\Symn$-equivariant conditional distributions, but under an additional conditional independence assumption a more detailed representation can be obtained due to the structure of $\Symn$ (\cref{sec:equivariant:seq}).

\subsection{\texorpdfstring{$\Symn$}{Sn}-Invariant Conditional Distributions}
\label{sec:invariant:seq}

The following special case of \cref{thm:group:invariant:rep,thm:suff:maximal:invariant} establishes a functional representation for all $\Symn$-invariant conditional distributions.

\begin{theorem} 
  \label{thm:finite:exch:seq:invariant} 
  Let $\Xn\in\calX^n$, for some $n\in\bbN$, be an exchangeable sequence, and $Y\in\calY$ some other random variable. Then $P_{Y|\Xn}$ $\Symn$-invariant if and only if there is a measurable function $f : [0,1] \times \calM(\calX) \to \calY$ such that
  \begin{align} \label{eq:finite:seq:rep:invariant}
    (\Xn,Y) \equas (\Xn,f(\noise ,\empirical_{\Xn})) \quad \text{where} \quad \noise\sim\Unif[0,1] \quad \text{and} \quad \noise \condind \Xn \;.
  \end{align}
  Furthermore, $\empirical_{\Xn}$ is sufficient for the family $\model^\Symn_{\Xn}$, 
  and adequate for the family $\model^\Symn_{\Xn,Y}$. 
\end{theorem}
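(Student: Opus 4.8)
The plan is to derive the theorem from the general machinery of \cref{sec:connecting:symmetries} once we verify that the empirical measure is a maximal invariant for $\Symn$ acting on $\calX^n$ by permuting coordinates. Concretely, I would show $M(\argdot) := \empirical_{(\argdot)}$ is a maximal $\Symn$-invariant statistic, then invoke \cref{thm:group:invariant:rep} (with $\grp=\Symn$) for the functional representation \eqref{eq:finite:seq:rep:invariant}, and \cref{thm:suff:maximal:invariant} for adequacy, from which sufficiency is immediate. The hypotheses of those theorems hold because $\Symn$ is finite, hence compact, acts measurably on the standard Borel space $\calX^n$, and $P_{\Xn}$ is $\Symn$-invariant by the exchangeability assumption.

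For the maximal-invariant claim: invariance is immediate, since $\empirical_{\perm\cdot\xn}(\argdot) = \sum_{i=1}^n \delta_{x_{\perm(i)}}(\argdot) = \sum_{i=1}^n \delta_{x_i}(\argdot) = \empirical_{\xn}(\argdot)$ for every $\perm\in\Symn$. Maximality is the elementary observation that two sequences in $\calX^n$ have equal counting measures precisely when they list the same values with the same multiplicities, i.e.\ $\empirical_{\xn} = \empirical_{\xn'}$ forces $\xn' = \perm\cdot\xn$ for some $\perm\in\Symn$. The one point requiring care is measure-theoretic bookkeeping: $\xn\mapsto\empirical_{\xn}$ is measurable into $\calM(\calX)$ equipped with the $\sigma$-algebra generated by the evaluations $m\mapsto m(B)$, $B\in\borel_{\calX}$, and its range --- the integer-valued measures of total mass $n$ --- is Borel isomorphic to the quotient $\calX^n/\Symn$, which is again standard Borel. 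One takes $\calS$ in \cref{thm:group:invariant:rep} to be this range and extends $f$ measurably (and arbitrarily) to all of $\calM(\calX)$ in the statement.

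With $M=\empirical_{(\argdot)}$ a maximal invariant, \cref{thm:group:invariant:rep} gives that $P_{Y|\Xn}$ is $\Symn$-invariant if and only if $(\Xn,Y)\equas(\Xn,f(\noise,\empirical_{\Xn}))$ for a measurable $f$ with $\noise\sim\Unif[0,1]$, $\noise\condind\Xn$; this is \eqref{eq:finite:seq:rep:invariant}. (The ``if'' direction also follows directly from invariance of $\empirical_{\Xn}$, with no appeal to the general theorem.) For the last sentence, \cref{thm:suff:maximal:invariant}(i) says every maximal invariant is an adequate statistic of $X$ for $Y$ relative to any model contained in $\model_{X,Y}^{\inv}$; since by \cref{prop:joint:symmetry}(i) the family $\model^{\Symn}_{\Xn,Y}$ is exactly the family of joint laws satisfying $(\perm\cdot\Xn,Y)\equdist(\Xn,Y)$, it lies in $\model_{X,Y}^{\inv}$, so $\empirical_{\Xn}$ is adequate for $\model^{\Symn}_{\Xn,Y}$; adequacy includes sufficiency for the marginal model (\cref{def:adequate:stat}(i)), which here is $\model^{\Symn}_{\Xn}$ --- recovering the well-known sufficiency noted in \cref{sec:sufficiency}.

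I expect the only genuine obstacle to be the measure-theoretic details surrounding the codomain of $\empirical_{(\argdot)}$ (measurability of the map and standard-Borelness of its range); the rest is a direct application of results already established. Should a self-contained argument avoiding \cref{thm:suff:maximal:invariant} be wanted, one can instead check adequacy by hand: conditionally on $\empirical_{\Xn}=m$, the sequence $\Xn$ is uniform over the finitely many permutations of any representative sequence, and $\Symn$-invariance of $P_{Y|\Xn}$ together with exchangeability forces the conditional law of $Y$ to be constant along that set, giving $Y\condind_{\empirical_{\Xn}}\Xn$; \cref{lem:noise:out:suff} then supplies $f$. Making that ``uniform on the orbit'' step fully rigorous (handling fixed points) would be the delicate part of that alternative.
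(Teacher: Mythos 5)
Your proposal is correct and follows essentially the same route as the paper's proof: show that $\empirical_{\Xn}$ is a maximal invariant of $\Symn$ acting on $\calX^n$ (invariance is immediate, maximality because equal counting measures force the sequences to be permutations of one another), then apply \cref{thm:group:invariant:rep} for the representation and \cref{thm:suff:maximal:invariant} for sufficiency and adequacy. Your additional remarks on the measurability of $\xn\mapsto\empirical_{\xn}$ and the Borel structure of its range are sound supplementary bookkeeping that the paper leaves implicit, but they do not change the argument.
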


\begin{proof}
  With \cref{thm:group:invariant:rep}, we need only prove that $\empirical_{\Xn}$ is a maximal invariant of $\calX^n$ under $\Symn$. Clearly, $\empirical_{\Xn}$ is $\Symn$-invariant. Now, let $\Xn'$ be another sequence such that $\empirical_{\Xn'} = \empirical_{\Xn}$. Then $\Xn'$ and $\Xn$ contain the same elements of $\calX$, and therefore $\Xn' = \perm\cdot \Xn$ for some $\perm\in\Symn$, so $\empirical_{\Xn}$ is a maximal invariant.  

  The second claim follow from \cref{thm:suff:maximal:invariant}.
\end{proof}

\paragraph{Modeling $\Symn$-invariance with neural networks} 
\Cref{thm:finite:exch:seq:invariant} is a general characterization of $\Symn$-invariant conditional distributions. It says that all such conditional distributions must have a noise-outsourced functional representation given by $Y=f(\noise,\empirical_{\Xn})$.  Recall that $\empirical_{\Xn}=\sum_{i=1}^n \delta_{X_i}$.  An atom $\delta_X$ can be thought of as a measure-valued generalization of a one-hot encoding to arbitrary measurable spaces, so that $\empirical_{\Xn}$ is a sum-pooling of encodings of the inputs (which removes information about the ordering of $\Xn$), and the output $Y$ is obtained by passing that, along with independent outsourced noise $\noise$, through a function $f$. In case the conditional distribution is deterministic, the outsourced noise is unnecessary, and we simply have $Y=f(\empirical_{\Xn})$.  

From a modeling perspective, one choice for (stochastic) neural network architectures that are $\Symn$-invariant is 
\begin{align}
Y=f(\noise,\sum_{i=1}^n \phi(X_i))\,,
\label{eq:invariant:sum:pool}
\end{align}
where $f$ and $\phi$ are arbitrary neural network modules, with $\phi$ interpreted as an embedding function of input elements into a high-dimensional space (see first panel of \cref{fig:sym:modules}).  These embeddings are sum-pooled, and passed through a second neural network module $f$. This architecture can be made to approximate arbitrarily well any $\Symn$-invariant conditional distribution \citep[c.f.,][]{Hornik:etal:1989}. Roughly, $\phi(X)$ can be made arbitrarily close to a one-hot encoding of $X$, which can in turn be made arbitrarily close to an atom $\delta_X$ by increasing its dimensionality, and similarly the neural module $f$ can be made arbitrarily close to any desired function. Below, we revisit an earlier example and give some new ones.

\begin{example}[\nameref{ex:deep:sets}, revisited]
  The architecture derived above is exactly the one described in \cref{ex:deep:sets}. \Cref{thm:finite:exch:seq:invariant} generalizes the result in \citet{Zaheer:etal:2017}, from deterministic functions to conditional distributions. The proof technique is also significantly simpler and sheds light on the core concepts underlying the functional representations of permutation invariance.\footnote{The result in \citet{Zaheer:etal:2017} holds for sets of arbitrary size when $\calX$ is countable and for fixed size when $\calX$ is uncountable. We note that the same is true for \cref{thm:finite:exch:seq:invariant}, for measure-theoretic reasons: in the countable $\calX$ case, the power sets of $\bbN$ form a valid Borel $\sigma$-algebra; for uncountable $\calX$, e.g., $\calX=\bbR$, there may be non-Borel sets and therefore the power sets do not form a Borel $\sigma$-algebra on which to define a probability distribution using standard techniques.} 
\end{example}

In general, a function of $\empirical_{\Xn}$ is a function of $\Xn$ that discards the order of its elements. That is, functions of $\empirical_{\Xn}$ are permutation-invariant functions of $\Xn$. The sum-pooling in \eqref{eq:invariant:sum:pool} gives rise to one such class of functions. Other permutation invariant pooling operations can be used. For example: product, maximum, minimum, log-sum-exp, mean, median, and percentiles have been used in various neural network architectures.  Any such function can be written in the form $f(\eta,\empirical_{\Xn})$, by absorbing the pooling operation into $f$ itself. The following examples illustrate.   

\begin{example}[Pooling using Abelian groups or semigroups]
 A group $\grp$, with binary operator $\oplus$, is Abelian if its elements commute: $g\oplus h = h\oplus g$ for all $g,h\in \grp$. Examples are $(\bbR_+,\times)$ and $(\bbZ,+)$. 
A semigroup is a group without the requirements for inverse and identity elements. Examples are $(\bbR_+,\vee)$ ($\vee$ denotes maximum: $x\vee y=\max\{x,y\}$) and $(\bbR_+,\wedge)$ ($\wedge$ denotes minimum).
For an Abelian group or semigroup $\grp$ and a map $\phi : \calX \to \grp$, a $\Symn$-invariant conditional distribution of $Y$ given a sequence $\Xn$ can be constructed with $f : [0,1] \times \grp \to \calY$ as
  \begin{align} \label{eq:symm:n}
    Y = f(\noise, \phi(X_1)\oplus \dotsb \oplus \phi(X_n)) \;.
  \end{align}
\end{example}

\begin{example}[Pooling using U-statistics] \label{ex:u:statistics}
  Given a permutation invariant function of $k\leq n$ elements, $\phi_k : \calX^k \to \calS$, a permutation invariant conditional distribution can be constructed with $f : [0,1] \times \calS \to \calY$ as
  \begin{align} \label{eq:u:statistic}
    f\bigg(\noise,\binom{n}{k}^{-1} \sum_{\{i_1,\dotsc,i_k\}\in [n]} \phi_k(X_{i_1},\dotsc,X_{i_k})\bigg)
  \end{align}
  where the pooling involves averaging over all $k$-element subsets of $[n]$. The average is a U-statistic \citep[e.g.,][]{Cox:Hinkley:1974}, and examples include the sample mean ($k=1$ and $\phi_k(x)=x$), the sample variance ($k=2$ and $\phi_k(x,y)=\frac{1}{2}(x-y)^2$), and estimators of higher-order moments, mixed moments, and cumulants. 

  \Citet{Murphy:etal:2019} developed a host of generalizations to the basic first-order pooling functions from \cref{ex:deep:sets} (\nameref{ex:deep:sets}), many of them corresponding to $k$-order U-statistics, and developed tractable computational techniques that approximate the average over $k$-element subsets by random sampling of permutations.
\end{example}

Exchangeability plays a central role in a growing body of work in the deep learning literature, particularly when deep learning methods are combined with Bayesian ideas. Examples include \citet{Edwards:Storkey:2017,Garnelo:etal:2018,Korshunova:etal:2018}, and the following.

\begin{example}[Neural networks for exchangeable genetic data]
In work by \citet{Chan:etal:2018}, $X_i\in \{ 0,1 \}^d$ is a binary $d$-dimensional vector indicating the presence or absence of $d$ single nucleotide polymorphisms in individual $i$. The individuals are treated as being exchangeable, forming an exchangeable sequence $\Xn$ of $\{ 0,1 \}^d$-valued random variables. \Citet{Chan:etal:2018} analyze the data using a neural network where each vector $X_i$ is embedded into $\bbR^{d'}$ using a convolutional network, the pooling operation is the element-wise mean of the top decile, and the final function is parameterized by a fully-connected network. They demonstrated empirically that encoding permutation invariance into the  network architecture led to  faster training and higher test set accuracy.
\end{example}

The final example, from the statistical relational artificial intelligence literature, uses the sufficiency of the empirical measure to characterize the complexity of inference algorithms for exchangeable sequences.

\begin{example}[Lifted inference]
  \Citet{Niepert:vdb:2014} studied the tractability of exact inference procedures for exchangeable models, through so-called lifted inference. 
  One of their main results shows that if $\Xn$ is a finitely exchangeable sequence of $\calX^d$-valued random variables on a discrete domain (i.e., each element of $\Xn$ is a $d$-dimensional vector of discrete random variables) then there is a sufficient statistic $S$, and probabilistic inference (defined as computing marginal and conditional distributions) based on $S$ has computational complexity that is polynomial in $d\times n$. In the simplest case, where $\calX=\{0,1\}$, $S$ is constructed as follows: encode all possible $d$-length binary vectors with unique bit strings $b_k\in\{0,1\}^d$, $k\in[2^d]$, and let $S(\Xn) = (c_1,\dotsc,c_{2^{d}})$ where $c_k = \sum_{i=1}^n \delta_{X_i}(b_k)$. Although not called the empirical measure by the authors, $S$ is precisely that.
\end{example}

\subsection{\texorpdfstring{$\Symn$}{Sn}-Equivariant Conditional Distributions} 
\label{sec:equivariant:seq}

Let $\Xn$ be an input sequence of length $n$, and $\Yn$ an output sequence. \Cref{thm:kall:equiv} shows that if the conditional distribution of $\Yn$ given $\Xn$ is $\Symn$-equivariant, then $\Yn$ can be expressed in terms of a noisy $\Symn$-equivariant function of $\Xn$. If the elements of $\Yn$ are assumed to be conditionally independent given $\Xn$, then by using properties of the finite symmetric group, we obtain a more detailed representation of $\Yn$ conditioned on $\Xn$. (The implications of the conditional independence assumption are discussed below.)

The resulting theorem is a one-dimensional special case of a more general representation theorem for exchangeable $d$-dimensional arrays (\cref{thm:equivariant:d:arrays} in \cref{sec:proof:arrays}).  The following simplified proof for sequences shows how the necessary conditional independence relationships are established and provides a template for proving the more general result. The $d=2$ case, which corresponds to graphs and networks, is taken up in \cref{sec:finite:exch:arrays}.

\begin{theorem} 
  \label{thm:finite:exch:seq:equivariant}
  Let $\Xn\in\calX^n$ be an exchangeable sequence and $\Yn\in\calY^n$ another random sequence, and assume that $Y_i \condind_{\Xn} (\Yn \setminus Y_i)$, for each $i\in [n]$. Then $P_{\Yn| \Xn}$ is $\Symn$-equivariant if and only if there is a measurable function $f : [0,1] \times \calX \times \calM(\calX) \to \calY$ such that
  \begin{align} \label{eq:finite:seq:rep:equivariant}
    \big( \Xn,\Yn \big) 
    \equas 
    \big(\Xn,
      \big(
        f(\noise_i,X_i,\empirical_{\Xn})
      \big)_{i\in [n]}
    \big) 
    \quad \text{where} \quad \noise_i\simiid\Unif[0,1]  \text{ and }  \noise_i \condind \Xn \;.
  \end{align}
\end{theorem}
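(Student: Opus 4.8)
The plan is to reduce the $\Symn$-equivariant problem to the $\Symn$-invariant result \cref{thm:group:invariant:rep} applied \emph{coordinate by coordinate}, using the stabilizer subgroups of $\Symn$, and then to glue the coordinates back together using the conditional independence hypothesis. The ``if'' direction is a short direct check: if $\Yn \equas (f(\noise_i,X_i,\empirical_{\Xn}))_{i\in[n]}$ with the $\noise_i$ i.i.d.\ and independent of $\Xn$, then for any $\perm\in\Symn$ we have $\empirical_{\perm\cdot\Xn}=\empirical_{\Xn}$ and the law of $(\noise_i)_{i\in[n]}$ is permutation-invariant, so $(\perm\cdot\Xn,\perm\cdot\Yn)\equdist(\Xn,\Yn)$; by \cref{prop:joint:symmetry}(ii) (together with exchangeability of $\Xn$) this is exactly $\Symn$-equivariance of $P_{\Yn|\Xn}$, and the stated conditional independence holds because the $\noise_i$ are independent.

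For the ``only if'' direction, fix $i\in[n]$ and let $H_i:=\{\perm\in\Symn:\perm(i)=i\}$, the stabilizer of $i$; it is a finite (hence compact) group acting on $\calX^n$ by permuting the coordinates other than $i$, and it is isomorphic to $\bbS_{n-1}$. By \cref{prop:joint:symmetry}(ii), $\Symn$-equivariance gives $(\Xn,\Yn)\equdist(\perm\cdot\Xn,\perm\cdot\Yn)$ for all $\perm\in\Symn$; projecting onto coordinate $i$ and restricting to $\perm\in H_i$ (so that $(\perm\cdot\Yn)_i=Y_i$) yields $(\Xn,Y_i)\equdist(\perm\cdot\Xn,Y_i)$, i.e.\ $P_{Y_i|\Xn}$ is $H_i$-invariant. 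A maximal invariant of $\calX^n$ under $H_i$ is $(X_i,\sum_{j\neq i}\delta_{X_j})$, which carries the same information as $(X_i,\empirical_{\Xn})$ since $\empirical_{\Xn}=\delta_{X_i}+\sum_{j\neq i}\delta_{X_j}$ (two sequences agreeing in the $i$th entry and sharing the multiset of remaining entries differ by an element of $H_i$). Since $P_{\Xn}$ is $H_i$-invariant a fortiori, \cref{thm:group:invariant:rep} applies and shows $Y_i\condind_{(X_i,\empirical_{\Xn})}\Xn$; write $P_{Y_i|\Xn}=\kappa_i(\argdot\,;X_i,\empirical_{\Xn})$ for a Markov kernel $\kappa_i$ on $\calY$.

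Next I would show the kernels do not depend on $i$. For $i,j\in[n]$ choose $\perm\in\Symn$ with $\perm(i)=j$; projecting $(\Xn,\Yn)\equdist(\perm\cdot\Xn,\perm\cdot\Yn)$ onto coordinate $j$ and the empirical measure gives $(X_j,\empirical_{\Xn},Y_j)\equdist(X_i,\empirical_{\Xn},Y_i)$, so $\kappa_i$ and $\kappa_j$ agree almost everywhere with respect to the common law of $(X_i,\empirical_{\Xn})\equdist(X_j,\empirical_{\Xn})$; pick a single version $\kappa$ valid for every $i$. Because $\calY$ is Borel, $\kappa$ admits a noise representation: there is a measurable $f:[0,1]\times\calX\times\calM(\calX)\to\calY$ with $\kappa(\argdot\,;x,m)=\Prob(f(\noise,x,m)\in\argdot)$ for $\noise\sim\Unif[0,1]$ (this is \cref{lem:noise:out:suff} with the identity statistic, or the standard kernel-representation fact for Borel spaces). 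Finally, the hypothesis that $Y_i\condind_{\Xn}(\Yn\setminus Y_i)$ holds for every $i$ is precisely mutual conditional independence of $Y_1,\dots,Y_n$ given $\Xn$, so $P_{\Yn|\Xn}=\bigotimes_{i=1}^n\kappa(\argdot\,;X_i,\empirical_{\Xn})$; taking $\noise_1,\dots,\noise_n\simiid\Unif[0,1]$ independent of $\Xn$, the array $(f(\noise_i,X_i,\empirical_{\Xn}))_{i\in[n]}$ has this conditional law given $\Xn$, hence the same joint law as $(\Xn,\Yn)$, and the transfer principle behind \cref{lem:noise:out:suff} lets us realize the $\noise_i$ on a common space so that \eqref{eq:finite:seq:rep:equivariant} holds almost surely.

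The group-theoretic content (the stabilizer is $\bbS_{n-1}$, and $(X_i,\empirical_{\Xn})$ is its maximal invariant on $\calX^n$) is routine; the delicate part is the reassembly: verifying that the per-coordinate conditional kernels genuinely coincide across $i$ (a claim about conditional laws being pinned down by joint laws plus exchangeability), that mutual conditional independence is exactly the leverage needed to glue all coordinates with one shared $f$ and i.i.d.\ noise while preserving the full joint distribution, and that the resulting distributional identity can be upgraded to the almost-sure statement \eqref{eq:finite:seq:rep:equivariant} by transfer. This is also where the conditional independence hypothesis is indispensable: without it one only recovers the coarser single-noise representation $\Yn\equas f(\noise,\Xn)$ supplied by \cref{thm:kall:equiv}.
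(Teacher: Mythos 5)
Your proposal is correct and follows essentially the same route as the paper: restrict the joint invariance $(\perm\cdot\Xn,\perm\cdot\Yn)\equdist(\Xn,\Yn)$ to the stabilizer of $i$, identify $(X_i,\empirical_{\Xn})$ as the corresponding maximal invariant to get $Y_i\condind_{(X_i,\empirical_{\Xn})}\Xn$, use equivariance to make the representation independent of $i$, and use the hypothesis $Y_i\condind_{\Xn}(\Yn\setminus Y_i)$ to assemble the coordinates. The only differences are cosmetic: you invoke \cref{thm:group:invariant:rep} directly for the stabilizer acting on the full sequence where the paper applies \cref{thm:finite:exch:seq:invariant} to $\Xnmi$ with output $(X_i,Y_i)$, and you glue at the level of conditional kernels followed by a transfer step, where the paper applies \cref{lem:noise:out:suff} coordinatewise after establishing the joint conditional independence \eqref{eq:key:ci}.
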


\begin{proof}
  For the forward direction, suppose $\Yn$ is conditionally $\Symn$-equivariant given $\Xn$.  
  For a fixed $i\in[n]$, let $\Xnmi := \Xn \setminus X_i$ be $\Xn$ with its $i$th element removed, and likewise for $\Ynmi$.  The proof in this direction requires that we establish the conditional independence relationship 
    \begin{align} \label{eq:key:ci}
      Y_i \condind_{(X_i,\empirical_{\Xn})} (\Xn, \Ynmi) \;,
    \end{align}
   and then apply \cref{lem:noise:out:suff}. 

  To that end, let $\bbS_{n \setminus i}$ be the stabilizer of $i$, i.e., the subgroup of $\Symn$ that fixes element $i$. $\bbS_{n\setminus i}$ consists of permutations $\perm_{\setminus i}\in\bbS_{n}$ for which $\perm_{\setminus i}(i) = i$. The action of $\perm_{\setminus i}$ on $\Xn$ fixes $X_i$; likewise it fixes $Y_i$ in $\Yn$. By \cref{prop:joint:symmetry}, $(\Xn,\Yn)\equdist (\perm_{\setminus i}\cdot\Xn,\perm_{\setminus i}\cdot\Yn)$, so that, marginalizing out $\Ynmi$ yields
  $(\Xn,Y_i) \equdist (\perm_{\setminus i}\cdot \Xn,Y_i)$ for each $\perm_{\setminus i}\in\bbS_{n\setminus i}$. Moreover, $\bbS_{n\setminus i}$ forms a subgroup and is homomorphic to $\bbS_{n-1}$, so that the previous distributional equality is equivalent to
  \begin{align*}
    \big(  \Xnmi ,(X_i,Y_i)  \big) \equdist \big( \perm' \cdot\Xnmi, (X_i,Y_i)  \big) \quad \text{for each} \quad \perm'\in\bbS_{n-1} \;.
  \end{align*}
  \Cref{thm:finite:exch:seq:invariant}, with input $\Xnmi$ and output $(X_i,Y_i)$ then implies $(X_i,Y_i) \condind_{\empirical_{\Xnmi}} \Xnmi$. Conditioning on $X_i$ as well gives $Y_i \condind_{(X_i, \empirical_{\Xnmi})} \Xn$, marginally for each $Y_i$. With the assumption of mutual conditional independence among $\Yn$ conditioned on $\Xn$, the marginal conditional independence also holds jointly, and by the chain rule for conditional independence \citep[][Prop.~6.8]{Kallenberg:2002},
  \begin{align} \label{eq:yi:cond:ind}
      Y_i \condind_{(X_i,\empirical_{\Xnmi})} (\Xn,\Ynmi) \;.
  \end{align}
  Because conditioning on $(X_i,\empirical_{\Xnmi})$ is the same as conditioning on $(X_i,\empirical_{\Xn})$, \eqref{eq:yi:cond:ind} is equivalent to the key conditional independence relationship \eqref{eq:key:ci}.

  By \cref{lem:noise:out:suff}, there exists a measurable $f_i : [0,1] \times \calX \times \calM(\calX) \to \calX$ such that
    \begin{align*}
      (\Xn, \Ynmi, Y_i) \equas \big(\Xn, \Ynmi, f_i(\noise_i,X_i,\empirical_{\Xn})\big) \;,
    \end{align*}
  for $\noise_i\sim\Unif[0,1]$ and $\noise_i\condind (\Xn,\Ynmi)$. 
  This is true for each $i\in [n]$, and $\Symn$-equivariance implies that $(\Xn, \Ynmi, Y_i)\equdist (\Xn, \Ynmj, Y_j)$ for all $i,j\in[n]$.  Thus it is possible to choose the same function $f_i=f$ for all $i$. This yields \eqref{eq:finite:seq:rep:equivariant}. 

  The reverse direction is easy to verify, since the noise variables are i.i.d., $\Xn$ is exchangeable, and $\empirical_{\Xn}$ is permutation-invariant.  

\end{proof}

\paragraph{The impact of the conditional independence assumption $Y_i \condind_{\Xn}(\Yn \setminus Y_i)$} 
In the deterministic case, the assumed conditional independence relationships among the outputs $\Yn$ are trivially satisfied, so that \eqref{eq:finite:seq:rep:equivariant} (without outsourced noise) is the most general form for a permutation-equivariant function.
However, in the stochastic case, the assumed conditional independence significantly simplifies the structure of the conditional distribution and the corresponding functional representation. 
While the assumed conditional independence is key in the simplicity of the representation \eqref{eq:finite:seq:rep:equivariant}, it may limit the expressiveness: there are permutation-equivariant conditional distributions which do not satisfy the conditional independence assumption (examples are given below). 
On the other hand, without conditional independence between the elements of $\Yn$, it is possible to show that $Y_i = f(\noise_i, X_i, \empirical_{(X_j,Y_j)_{j\in [n]\setminus i}})$. Such dependence between elements of $\Yn$, although more expressive than \eqref{eq:finite:seq:rep:equivariant}, induces cycles in the computation graph, similar to Restricted Boltzmann Machines \citep{Smolensky:1987} and other Exponential Family Harmoniums \citep{Welling:etal:2005}. Furthermore, they may be limited in practice by the computational requirements of approximate inference algorithms. Striking a balance between flexibility and tractability via some simplifying assumption seems desirable.

Two examples illustrate the existence of permutation-equivariant conditional distributions which do not satisfy the conditional independence assumption made in \cref{thm:finite:exch:seq:equivariant}, and suggest another assumption. Both examples have a similar structure: there exists some random variable, say $W$, such that the conditional independence ${Y_i \condind_{(W,\Xn)} (\Yn \setminus Y_i)}$ holds. Assuming the existence of such a $W$ would lead to the representation $Y_i = f(\noise_i,W,X_i,\empirical_{\Xn})$, and potentially allow for more expressive models, as $W$ could be included in the neural network architecture and learned.

For the first example, let $\Yn$ be given as in \eqref{eq:finite:seq:rep:equivariant}, but with a vector of finitely exchangeable (but not i.i.d.) noise $\noisen\condind \Xn$. Then $\Yn$ would still be conditionally $\Symn$-equivariant, but it would not satisfy the conditional independence assumption ${Y_i \condind_{\Xn}(\Yn\setminus Y_i)}$. However, it would satisfy ${Y_i \condind_{(\Xn,\noisen)} (\Yn \setminus Y_i)}$, which by similar arguments as in the proof of \cref{thm:finite:exch:seq:equivariant}, implies the existence of a representation
\begin{align} \label{eq:alt:example}
  Y_i = f'(\noise'_i, \noise_i, X_i, \empirical_{(X_i,\noise_i)_{i\in [n]}}) \;,
\end{align}
for some other function $f' : [0,1]^2 \times \calX \times \calM(\calX \times [0,1])$ and i.i.d.\ noise $\noise'_i \condind (\Xn,\noisen)$, in which case \eqref{eq:finite:seq:rep:equivariant} would be a special case. 

As a second example, in practice it is possible to construct more elaborate conditionally $\Symn$-equivariant distributions by composing multiple ones as in \cref{prop:transitivity}(ii).  Suppose $\Yn$ is conditionally $\Symn$-equivariant and mutually independent given $\Xn$, and $\Zn$ is conditionally $\Symn$-equivariant and mutually independent given $\Yn$.  \Cref{prop:transitivity} shows that with $\Yn$ marginalized out, $\Zn$ is conditionally $\Symn$-equivariant given $\Xn$, while \cref{thm:finite:exch:seq:equivariant} guarantees the existence of the functional representations for each $i \in [n]$:
\begin{align*}
  (Y_i)_{i\in[n]} = (f(\noise_i,X_i,\empirical_{\Xn}))_{i\in[n]} 
    \quad \text{and} \quad 
  (Z_i)_{i\in[n]} = (f'(\noise'_i,Y_i,\empirical_{\Yn}))_{i\in[n]} \;.
\end{align*}
Substituting the functional representation for $\Yn$ into that for $\Zn$ yields, for each $i\in [n]$,
\begin{align*}
  Z_i = f'(\noise'_i,f(\noise_i,X_i,\empirical_{\Xn}),\empirical_{(f(\noise_i,X_i,\empirical_{\Xn}))_{i\in[n]}}) 
  = f''(\noise'_i, \noise_i, X_i ,\empirical_{(X_i,\noise_i)_{i\in [n]}}) \;,
\end{align*}
for some $f''$, which is a special case of \eqref{eq:alt:example} and which implies ${Z_i \condind_{(\Xn,\noisen)} (\Zn \setminus Z_i)}$.

\paragraph{Modeling $\Symn$-equivariance with neural networks} 
One choice for $\Symn$-equivariant neural networks is
\begin{align}
  Y_i = f(\eta_i, X_i, g(\Xn))
  \label{eq:equivariant:form}
\end{align}
where $f$ is an arbitrary (stochastic) neural network module, and $g$ an arbitrary permutation-invariant module (say one of the examples in \cref{sec:invariant:seq}).  
An example of a permutation-equivariant module using a permutation-invariant submodule is shown in the middle panel of \cref{fig:sym:modules}.

\Cref{prop:transitivity} enables the use of an architectural algebra for constructing complex permutation-invariant and -equivariant stochastic neural network modules.  Specifically, permutation-equivariant modules may be constructed by composing simpler permutation-equivariant modules (see the middle panel of \cref{fig:sym:modules}), while permutation-invariant modules can be constructed by composing permutation-equivariant modules with a final permutation-invariant module (right panel of \cref{fig:sym:modules}). Some examples from the literature illustrate.

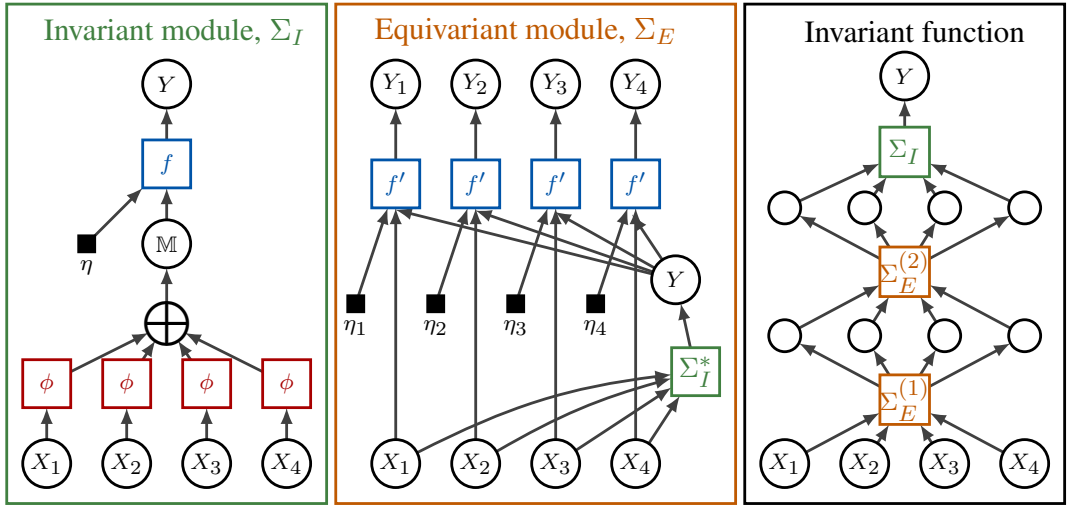
\begin{figure}[!t]
\begin{center}
  \resizebox{\textwidth}{!}{
     
    \begin{tikzpicture}
       
      \Vertex[x=0,y=0,opacity=0,label=$X_1$]{I1}
      \Vertex[x=1,y=0,opacity=0,label=$X_2$]{I2}
      \Vertex[x=2,y=0,opacity=0,label=$X_3$]{I3}
      \Vertex[x=3,y=0,opacity=0,label=$X_4$]{I4}
       
      \Vertex[x=0,y=1,label=$\phi$,shape=rectangle,opacity=0,fontcolor=redClj,style={draw=redClj}]{g1}
      \Vertex[x=1,y=1,label=$\phi$,shape=rectangle,opacity=0,fontcolor=redClj,style={draw=redClj}]{g2}
      \Vertex[x=2,y=1,label=$\phi$,shape=rectangle,opacity=0,fontcolor=redClj,style={draw=redClj}]{g3}
      \Vertex[x=3,y=1,label=$\phi$,shape=rectangle,opacity=0,fontcolor=redClj,style={draw=redClj}]{g4}

      \Vertex[x=1.5,y=1.75,opacity=0,size=0.5,style={color=white},fontsize=\LARGE,label={$\bigoplus$}]{spool}

      \Vertex[x=1.5,y=2.75,opacity=0,label={$\empirical$}]{m}
      \Edge[lw=1,Direct=true](spool)(m)

      \Vertex[x=1.5,y=3.75,opacity=0,shape=rectangle,fontcolor=blueClj,style={draw=blueClj},label={$f$}]{f}
      \Edge[lw=1,Direct=true](m)(f)

      \Vertex[x=0.5,y=2.75,color=black,shape=rectangle,size=0.2,label=$\noise$,position=below]{eta}
      \Edge[lw=1,Direct=true](eta)(f)

      \Vertex[x=1.5,y=4.75,opacity=0,label=$Y$]{Y}
      \Edge[lw=1,Direct=true](f)(Y)

      \Edge[lw=1,Direct=true](I1)(g1)
      \Edge[lw=1,Direct=true](I2)(g2)
      \Edge[lw=1,Direct=true](I3)(g3)
      \Edge[lw=1,Direct=true](I4)(g4)

      \Edge[lw=1,Direct=true](g1)(spool)
      \Edge[lw=1,Direct=true](g2)(spool)
      \Edge[lw=1,Direct=true](g3)(spool)
      \Edge[lw=1,Direct=true](g4)(spool)

      \Text[x=1.6,y=5.4,color=darkgreenClj]{Invariant module, $\Sigma_I$}

      \draw[draw=darkgreenClj,line width=1.0pt] (-0.5,-0.5) rectangle (3.5,5.75);
    \end{tikzpicture}
    \begin{tikzpicture}
      \Vertex[x=0,y=0,opacity=0,label=$X_1$]{I1}
      \Vertex[x=1,y=0,opacity=0,label=$X_2$]{I2}
      \Vertex[x=2,y=0,opacity=0,label=$X_3$]{I3}
      \Vertex[x=3,y=0,opacity=0,label=$X_4$]{I4}
      \Vertex[x=0,y=4.75,opacity=0,label=$Y_1$]{H1}
      \Vertex[x=1,y=4.75,opacity=0,label=$Y_2$]{H2}
      \Vertex[x=2,y=4.75,opacity=0,label=$Y_3$]{H3}
      \Vertex[x=3,y=4.75,opacity=0,label=$Y_4$]{H4}
      \Vertex[x=0,y=3.5,label=$f'$,shape=rectangle,opacity=0,fontcolor=blueClj,style={draw=blueClj}]{g1}
      \Vertex[x=1,y=3.5,label=$f'$,shape=rectangle,opacity=0,fontcolor=blueClj,style={draw=blueClj}]{g2}
      \Vertex[x=2,y=3.5,label=$f'$,shape=rectangle,opacity=0,fontcolor=blueClj,style={draw=blueClj}]{g3}
      \Vertex[x=3,y=3.5,label=$f'$,shape=rectangle,opacity=0,fontcolor=blueClj,style={draw=blueClj}]{g4}

      \Vertex[x=-0.1,y=3.2,Pseudo,size=0]{H1b}
      \Vertex[x=0.9,y=3.2,Pseudo,size=0]{H2b}
      \Vertex[x=1.9,y=3.2,Pseudo,size=0]{H3b}
      \Vertex[x=2.9,y=3.2,Pseudo,size=0]{H4b}

      \Vertex[x=3.75,y=1.15,label=$\Sigma_I^*$,shape=rectangle,style={draw=darkgreenClj},fontsize=\small,,opacity=0,fontcolor=darkgreenClj]{phit}
      \Vertex[x=3.5,y=2.3,opacity=0,label={$Y$}]{Z}

      \Vertex[x=-0.5,y=2,color=black,shape=rectangle,size=0.2,label=$\noise_1$,position=below]{eta1}
      \Vertex[x=0.5,y=2,color=black,shape=rectangle,size=0.2,label=$\noise_2$,position=below]{eta2}
      \Vertex[x=1.5,y=2,color=black,shape=rectangle,size=0.2,label=$\noise_3$,position=below]{eta3}
      \Vertex[x=2.5,y=2,color=black,shape=rectangle,size=0.2,label=$\noise_4$,position=below]{eta4}

      \Edge[lw=1,Direct=true](I1)(g1)
      \Edge[lw=1,Direct=true](I2)(g2)
      \Edge[lw=1,Direct=true](I3)(g3)
      \Edge[lw=1,Direct=true](I4)(g4)

      \Edge[lw=1,Direct=true](g1)(H1)
      \Edge[lw=1,Direct=true](g2)(H2)
      \Edge[lw=1,Direct=true](g3)(H3)
      \Edge[lw=1,Direct=true](g4)(H4)

      \Edge[lw=1,Direct=true,bend=10](I1)(phit)
      \Edge[lw=1,Direct=true,bend=7.5](I2)(phit)
      \Edge[lw=1,Direct=true](I3)(phit)
      \Edge[lw=1,Direct=true](I4)(phit)

      \Edge[lw=1,Direct=true](phit)(Z)

      \Edge[lw=1,Direct=true](Z)(H1b)
      \Edge[lw=1,Direct=true](Z)(H2b)
      \Edge[lw=1,Direct=true](Z)(H3b)
      \Edge[lw=1,Direct=true](Z)(H4b)

      \Edge[lw=1,Direct=true](eta1)(g1)
      \Edge[lw=1,Direct=true](eta2)(g2)
      \Edge[lw=1,Direct=true](eta3)(g3)
      \Edge[lw=1,Direct=true](eta4)(g4)

      \Text[x=1.625,y=5.4,color=orangeClj]{Equivariant module, $\Sigma_E$}

      \draw[draw=orangeClj,line width=1.0pt] (-0.75,-0.5) rectangle (4.25,5.75);
    \end{tikzpicture}
    \begin{tikzpicture}
      \Vertex[x=0,y=0,opacity=0,label=$X_1$]{I1}
      \Vertex[x=1,y=0,opacity=0,label=$X_2$]{I2}
      \Vertex[x=2,y=0,opacity=0,label=$X_3$]{I3}
      \Vertex[x=3,y=0,opacity=0,label=$X_4$]{I4}

      \Vertex[x=1.5,y=0.8,label=$\Sigma_E^{(1)}$,shape=rectangle,style={draw=orangeClj},fontsize=\footnotesize,opacity=0,fontcolor=orangeClj,size=0.62]{em1}

      \Vertex[x=0,y=1.6,opacity=0,size=0.4]{H1}
      \Vertex[x=1,y=1.6,opacity=0,size=0.4]{H2}
      \Vertex[x=2,y=1.6,opacity=0,size=0.4]{H3}
      \Vertex[x=3,y=1.6,opacity=0,size=0.4]{H4}

      \Vertex[x=1.5,y=2.4,label=$\Sigma_E^{(2)}$,shape=rectangle,style={draw=orangeClj},fontsize=\footnotesize,opacity=0,fontcolor=orangeClj,size=0.62]{em2}

      \Vertex[x=0,y=3.2,opacity=0,size=0.4]{H21}
      \Vertex[x=1,y=3.2,opacity=0,size=0.4]{H22}
      \Vertex[x=2,y=3.2,opacity=0,size=0.4]{H23}
      \Vertex[x=3,y=3.2,opacity=0,size=0.4]{H24}

      \Vertex[x=1.5,y=3.9,label=$\Sigma_I$,shape=rectangle,style={draw=darkgreenClj},fontsize=\footnotesize,opacity=0,fontcolor=darkgreenClj,size=0.62]{im}

      \Vertex[x=1.5,y=4.85,opacity=0,label=$Y$]{Y}
      \Edge[lw=1,Direct=true](im)(Y)

      \Edge[lw=1,Direct=true](I1)(em1)
      \Edge[lw=1,Direct=true](I2)(em1)
      \Edge[lw=1,Direct=true](I3)(em1)
      \Edge[lw=1,Direct=true](I4)(em1)

      \Edge[lw=1,Direct=true](em1)(H1)
      \Edge[lw=1,Direct=true](em1)(H2)
      \Edge[lw=1,Direct=true](em1)(H3)
      \Edge[lw=1,Direct=true](em1)(H4)

      \Edge[lw=1,Direct=true](H1)(em2)
      \Edge[lw=1,Direct=true](H2)(em2)
      \Edge[lw=1,Direct=true](H3)(em2)
      \Edge[lw=1,Direct=true](H4)(em2)

      \Edge[lw=1,Direct=true](em2)(H21)
      \Edge[lw=1,Direct=true](em2)(H22)
      \Edge[lw=1,Direct=true](em2)(H23)
      \Edge[lw=1,Direct=true](em2)(H24)

      \Edge[lw=1,Direct=true](H21)(im)
      \Edge[lw=1,Direct=true](H22)(im)
      \Edge[lw=1,Direct=true](H23)(im)
      \Edge[lw=1,Direct=true](H24)(im)

      \Text[x=1.625,y=5.4,color=black]{Invariant function}

      \draw[draw=black,line width=1.0pt] (-0.5,-0.5) rectangle (3.5,5.75);
    \end{tikzpicture}
  }
\end{center}
\caption{
  \emph{Left}: An invariant module depicting \eqref{eq:symm:n}. \emph{Middle}: An equivariant module depicting \eqref{eq:equivariant:form}; note that the invariant sub-module, $\Sigma_{I}^*$, must be deterministic unless there are alternative conditional independence assumptions, such as \eqref{eq:alt:example}. \emph{Right}: An invariant stochastic function composed of equivariant modules. 
  Functional representations of $\Symn$-invariant and -equivariant conditional distributions. Circles denote random variables, with a row denoting an exchangeable sequence. The blue squares denote arbitrary functions, possibly with outsourced noise $\noise$ which are mutually independent and independent of everything else. Same labels mean that the functions are the same. Red squares denote arbitrary embedding functions, possibly parameterized by a neural network, and $\oplus$ denotes a symmetric pooling operation. Orange rectangles denote a module which gives a functional representation of a $\Symn$-equivariant conditional distribution. Likewise green rectangles for permutation-invariant conditional distributions.
}
\label{fig:sym:modules}
\end{figure}

\begin{example}[\nameref{ex:deep:sets:equivariant}, revisited] \label{ex:deep:sets:revisited}
  It is straightforward to see that \cref{ex:deep:sets:equivariant} is a deterministic special case of \cref{thm:finite:exch:seq:equivariant}: 
  \begin{align*}
    Y_i & = \activation(\theta_0 X_i  + \theta_1 \textstyle\sum_{j=1}^n X_j)
  \end{align*}
where $\sum_{j=1}^n X_j = \int_{\calX} \empirical_{\Xn}(dx)$ is a function of the empirical measure. 
While this example's nonlinear activation of linear combinations encodes a typical feed-forward neural network structure, \cref{thm:finite:exch:seq:equivariant} shows that more general functional relationships are allowed, for example \eqref{eq:equivariant:form}. 
\end{example}

\begin{example}[Equivariance and convolution]
  \Citet{Kondor:Trivedi:2018} characterized the properties of deterministic feed-forward neural networks that are equivariant under the action of a compact group, $\grp$. Roughly speaking, their results show that each layer $\ell$ of the network must be a convolution of the output of the previous layer with some filter $\chi_{\ell}$. 
  The general form of the convolution is defined in group theoretic terms that are beyond the scope of this paper. However, in the case of an exchangeable sequence, the situation is particularly simple. As an alternative to \eqref{eq:finite:seq:rep:equivariant}, $Y_i$ may be represented by a function $f'(\eta_i,X_i,\empirical_{\Xnmi})$ (see \eqref{eq:yi:cond:ind} in the proof of \cref{thm:finite:exch:seq:equivariant}), which makes clear the structure of the relationship between $\Xn$ and $\Yn$: element $Y_i$ has a ``receptive field'' that focuses on $X_i$ and treats the elements $\Xnmi$ as a structureless background field via $\empirical_{\Xnmi}$. The dependence on the latter is invariant under permutations $\perm'\in\bbS_{n-1}$; in group theoretic language, $\bbS_{n-1}$ {stabilizes} $i$ in $[n]$. That is, all permutations that fix $i$ form an equivalence class. As such, for each $i$ the index set $[n]$ is in one-to-one correspondence with the set of equivalent permutations that either move the $i$th element to some other element (there are $n-1$ of these), or fix $i$. This is the quotient space $\Symn/\bbS_{n-1}$; by the main result of \cite{Kondor:Trivedi:2018}, any $\Symn$-equivariant feed-forward network with hidden layers all of size $n$ must be composed of connections between layers $X\mapsto Y$ defined by the convolution 
  \begin{align*}
    Y_i = \activation((X * \chi)_i) = \activation\bigg(\sum_{j=1}^n  X_{(i+j) \text{ mod }  n} \chi(j) \bigg) \;, \; \chi(j) = \delta_n(j) (\theta_0 + \theta_1) + \sum_{k=1}^{n-1} \delta_{k}(j) \theta_1 \;.
  \end{align*}
  The representation may be interpreted as a convolution of the previous layer's output with a filter ``centered'' on element $X_i$, and is equivalent to that of \cref{ex:deep:sets:revisited}.
\end{example}

\begin{example}[Self-attention] \label{ex:self:attention}
  \Citet{Lee:etal:2018:set:transformer} proposed a $\Symn$-invariant architecture based on \emph{self-attention} \citep{Vaswani:etal:2017}. For an input set of $n$ $d$-dimensional observations, the so-called \emph{Set Transformer} combines attention over the $d$ input dimensions with nonlinear functions of pairwise interactions between the input observations. In the simplest implementation, with no attention components, the Set Transformer computes in each network layer a nonlinear activation of the Gramian matrix $\Xn \Xn^T$; the full architecture with attention is somewhat complicated, and we refer the reader to \citet{Lee:etal:2018:set:transformer}. Furthermore, to combat prohibitive computational cost, a method inspired by inducing point techniques from the Gaussian Process literature \citep{Snelson:Ghahramani:2006} was introduced. In a range of experiments focusing on tasks that benefit from modeling dependence between elements of the set, such as clustering, the Set Transformers architecture out-performed architectures that did not include pairwise or higher-order interactions, like Deep Sets (\cref{ex:deep:sets}).
\end{example}

\subsection{Input Sets of Variable Size}
\label{sec:var:size}

In applications, the data set may consist of finitely exchangeable sequences of varying length. \Cref{thm:finite:exch:seq:invariant,thm:finite:exch:seq:equivariant} are statements about input sequences of fixed length $n$. In general, they suggest that a separate function $f_n$ needs to be learned for each $n$ for which there is a corresponding observation $\Xn$ in the data set. As a practical matter, clearly this is undesirable. In practice, the most common approach is to compute an independent embedding $\phi : \calX \to \bbR$ of each element of an input set, and then combine the embeddings with a symmetric pooling function like sum or max. For example, $f(\Xn) = \max\{ \phi(X_1),\dotsc,\phi(X_n) \}$, or \eqref{eq:deep:sets:layer} from \cref{ex:deep:sets,ex:deep:sets:equivariant}. 
Clearly, such a function is invariant under permutations. However, recent work has explored the use of pairwise and higher-order interactions in the pooling function \citep[the work by][mentioned in \cref{ex:u:statistics} is an example]{Murphy:etal:2019}; empirical evidence indicates that the increased functional complexity results in higher model capacity and better performance for tasks that rely on modeling the dependence between elements in the input set.

Intuitively, more complex functions, such as those composed of pairwise (or higher-order) functions of an input sequence, give rise to higher-capacity models able to model more complicated forms of dependence between elements of an input sequence. In the context of exchangeability, this can be made more precise, as a difference between finitely and infinitely exchangeable sequences. 
In particular, let $\Xn$ be the length $n$ prefix of an infinitely exchangeable sequence $\Xinf$. If a sequence of sufficient statistics $S_n : \calX^n \to \calS_n$ exists, the conditionally \iid representation \eqref{eq:deFinetti:cond:iid} of the distribution of $\Xinf$ requires that they have the following properties \citep{Freedman:1962,Lauritzen:1984,Lauritzen:1988}:
\begin{enumerate}[label=(\roman*)] 
  \item \emph{symmetry under permutation}:
    \begin{align} \label{eq:symm:stat}
      S_n(\perm\cdot \Xn) = S_n(\Xn) \quad \text{for all } \perm\in\Symn, \quad n\in\bbN \;;
    \end{align}
  \item \emph{recursive computability}: for all $n,m\in\bbN$ there are functions $\psi_{n,m} : \calS_n \times \calS_m \to \calS_{n+m}$ such that
    \begin{align} \label{eq:recursive:stat}
      S_{n+m}(\Xnpm) = \psi_{n,m}(S_n(\Xn),S_m({\Xnpmtail})) \;.
    \end{align}
\end{enumerate}
A statistic that satisfies these properties must be of the form \citep{Lauritzen:1988}
\begin{align} \label{eq:exch:extend}
  S_n({\Xn}) = S_1(X_1)\oplus\dotsb\oplus S_1(X_n) \;,
\end{align}
where $(\calS_1,\oplus)$ is an Abelian group or semigroup. Equivalently, because of symmetry under permutation, we write $S_n(\empirical_{\Xn})$. 
Examples with $\calX = \bbR_+$ include: 
\begin{enumerate}[label=(\roman*)]
  \item $S_1(X_i) = \log X_i$ with $S_1(X_i)\oplus S_1(X_j) = \log X_i + \log X_j$; 
  \item $S_1(X_i)=X_i$ with $S_1(X_i)\oplus S_1(X_j) = X_i \vee X_j$; 
  \item $S_1(X_i) = \delta_{X_i}(\argdot)$ with $S_1(X_i)\oplus S_1(X_j) = \delta_{X_i}(\argdot) + \delta_{X_j}(\argdot)$. 
\end{enumerate}
Observe that pairwise (i.e., second-order) and higher-order statistics that do not decompose into first-order functions are precluded by the recursive computability property; an example is $S_n(\empirical_{\Xn}) = \sum_{i,j\in [n]} X_i X_j$. 

Infinite exchangeability restricts the dependence between elements in $\Xn$: using a model based on only first-order functions, so that the properties of permutation symmetry and recursive computability are satisfied, limits the types of dependence that the model can capture. In practice, this can lead to shortcomings. For example, an infinitely exchangeable sequence cannot have negative correlation $\rho = \text{Corr}(X_i,X_j)$, but a finitely exchangeable sequence that cannot be extended to a longer exchangeable sequence can have $\rho < 0$ \citep[e.g.,][pp.\ 7-8]{Aldous:1983}. One way to interpret this fact is that the type of dependence that gives rise to negative covariance cannot be captured by first-order functions. 
When using $\Xn$ to predict another random variable, the situation becomes more complex, but to the extent that adequate (sufficient and d-separating) statistics are used, the same concepts are relevant. 
Ultimately, a balance between flexibility and computational efficiency must be found; the exact point of balance will depend on the details of the problem and may require novel computational methods (i.e., the inducing point-like methods in \cref{ex:self:attention}, or the sampling approximations in \citet{Murphy:etal:2019}) so that more flexible function classes can be used.

\subsection{Partially Exchangeable Sequences and Layers of Different Sizes}
\label{sec:seq:partial:exch}

\Citet{ShaweTaylor:1989} and \citet{Ravanbakhsh:etal:2017} consider the problem of input-output invariance under a general discrete group $\grp$ acting on a standard feed-forward network, which consists of layers, potentially with different numbers of nodes, connected by weights. Those papers each found that $\grp$ and the neural network architecture must form a compatible pair: a pair of layers, treated as a weighted bipartite graph, forms a $\grp$-equivariant function if the action of $\grp$ on that graph is an automorphism. 
Essentially, $\grp$ must partition the nodes of each layer into weight-preserving orbits; \citet{Ravanbakhsh:etal:2017} provide some illuminating examples.

In the language of exchangeability, such a partition of elements corresponds to \emph{partial exchangeability}:\footnote{\Citet{Diaconis:Freedman:1984} use partial exchangeability to mean any number of probabilistic symmetries; we use it only to mean partial permutation-invariance.} the distributional invariance of $\bfX_{n_x}$ under the action of a subgroup of the symmetric group, $\grp \subset \Symn$. Partially exchangeable analogues of \cref{thm:finite:exch:seq:invariant,thm:finite:exch:seq:equivariant} are possible using the same types of conditional independence arguments. The resulting functional representations would express elements of $\bfY_{n_y}$ in terms of the empirical measures of blocks in the partition of $\bfX_{n_x}$; the basic structure is already present in \cref{thm:kall:equiv}, but the details would depend on conditional independence assumptions among the elements of $\bfY_{n_y}$. We omit a specific statement of the result, which would require substantial notational development, for brevity.

\section{Learning From Finitely Exchangeable Matrices and Graphs}
\label{sec:finite:exch:arrays}

Neural networks that operate on graph-valued input data have been useful for a range of tasks, from molecular design \citep{Duvenaud:etal:2015} and quantum chemistry \citep{Gilmer:etal:2017}, to knowledge-base completion \citep{Hamaguchi:etal:2017}.  

In this section, we consider random matrices\footnote{The results here are special cases of general results for $d$-dimensional arrays. For simplicity, we present the two-dimensional case and consider the general case in \cref{sec:proof:arrays}.} whose distribution is invariant to permutations applied to the index set. In particular, let $\Xntwo$ be a two-dimensional $\calX$-valued array with index set $[\bfn_2]:= [n_1] \times [n_2]$, such that $X_{i,j}$ is the element of $\Xntwo$ at position $(i,j)$. Let $\perm_k\in\Symnd{k}$ be a permutation of the set $[n_k]$ for $k\in\{1,2\}$. Denote by $\bbS_{\bfn_2}$ the direct product $\bbS_{n_1}\times \bbS_{n_2}$. A {collection} of permutations $\permtwo:=(\perm_1,\perm_2) \in \bbS_{\bfn_2}$ acts on $\Xntwo$ in the natural way, separately on the corresponding dimension:
\begin{align} \label{eq:perm:action:sep:matrix}
  [\permtwo \cdot \Xntwo ]_{i,j} = X_{\perm_1(i),\perm_2(j)} \;.
\end{align}
The distribution of $\Xntwo$ is \emph{separately exchangeable} if
\begin{align} \label{eq:sep:exch}
  \permtwo \cdot \Xntwo = (X_{\perm_1(i),\perm_2(j)})_{i\in [n_1],j\in [n_2]} \equdist (X_{i,j})_{i\in [n_1],j\in [n_2]} = \Xntwo  \;,
\end{align}
for every collection of permutations $\permtwo\in \bbS_{\bfn_2}$. 
We say that $\Xntwo$ is separately exchangeable if its distribution is. 

For symmetric arrays, such that $n_1=n_2=n$ and $X_{i,j}=X_{j,i}$, a different notion of exchangeability is needed. The distribution of a symmetric $\calX$-valued array $\Xntwosymm$ is \emph{jointly exchangeable} if, for all $\perm\in\Symn$,
\begin{align} \label{eq:joint:exch}
  \perm\cdot\Xntwosymm = (\Xsymm_{\perm(i),\perm(j)})_{i,j\in [n]} \equdist (\Xsymm_{i,j})_{i,j\in [n]} = \Xntwosymm  \;.
\end{align}

\subsection{Sufficient Representations of Exchangeable Matrices} 
\label{sec:suff:rep:arrays}

In order to obtain functional representation results for matrices, a suitable analogue to the empirical measure is required. 
In contrast to the completely unstructured empirical measure of a sequence defined in \eqref{eq:empirical:seq}, a sufficient representation of a matrix must retain the structural information encoded by the rows and columns of the matrix, but discard any ordering information. 
For matrices, such an object corresponds to a step function, also called the empirical graphon or a {checkerboard function} \citep{Orbanz:Roy:2015,Borgs:Chayes:2017}, which has played a central role in the theory of large exchangeable graphs and their representations as graphons \citep{Lovasz:2012}. 

In the context of the current work, working with the checkerboard function is unnecessarily complicated; any maximal invariant of the symmetric group acting on $\calX$-valued arrays will suffice. Specifically, any \emph{canonical form} $\Cntwo := \canon(\Xntwo)$ (or $\Cntwosymm=\canon(\Xntwosymm)$ for symmetric matrices), which serves as an orbit representative, can be used. The computational problem of finding a canonical representative of a graph is known as \emph{canonical labeling}, and is at least as hard as the graph isomorphism problem (which is in NP, though it is unknown whether it is in P or NP-complete). In practice, some graph automorphism tools rely on canonical labeling algorithms, of which \textsc{Nauty} and \textsc{Traces} \citep{McKay:Piperno:2014} have demonstrated remarkably good performance. 
Canonical labeling results in a permutation $\permtwo$ applied to the input to obtain an orbit representative, and therefore the process also can be used to define a representative equivariant $\tau : \calX^{n_1\times n_2} \to \bbS_{\bfn_2}$. Different input domains $\calX$ may admit different canonicalization procedures, though for most practical purposes, $\calX\subseteq \bbR$.

\subsection{\texorpdfstring{$\bbS_{\bfn_2}$}{Sn2}-Invariant Conditional Distributions}
\label{sec:invariant:output}

As in \cref{sec:invariant:seq}, consider modeling $Y$ as the output of a function whose input is a separately exchangeable matrix $\Xntwo$. 
In analogy to sequences in \cref{thm:finite:exch:seq:invariant}, sufficiency of $\Cntwo$ characterizes the class of all finitely exchangeable distributions on $\calX^{n_1\times n_2}$. 
The proof simply requires showing that $\Cntwo$ is a maximal invariant of $\bbS_{\bfn_2}$ acting on $\calX^{n_1\times n_2}$. To state the result, for any fixed canonicalization procedure, let $\cspace_{\bfn_2}(\calX)$ denote the space of canonical forms of matrices in $\calX^{n_1\times n_2}$.

\begin{theorem}
  \label{thm:invariant:two:arrays}
  Let $\Xntwo$ be a separately exchangeable $\calX$-valued matrix indexed by $[n_1]\times [n_2]$, and $Y\in\calY$ another random variable. Then $P_{Y|\Xntwo}$ is $\bbS_{\bfn_2}$-invariant if and only if there is a measurable function $f : [0,1] \times \cspace_{\bfn_2}(\calX) \to \calY$ such that
  \begin{align} \label{eq:invariant:two:arrays}
    (\Xntwo,Y) \equas (\Xntwo,f(\noise ,\Cntwo)) \quad \text{where} \quad \noise\sim\Unif[0,1] \quad \text{and} \quad \noise \condind \Xntwo \;.
  \end{align}
  Furthermore, $\Cntwo$ is sufficient for the family $\model_{\Xntwo}^{\bbS_{\bfn_2}}$ and adequate for $\model_{\Xntwo,Y}^{\bbS_{\bfn_2}}$.
\end{theorem}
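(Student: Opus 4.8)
The plan is to deduce the result directly from \cref{thm:group:invariant:rep,thm:suff:maximal:invariant}, exactly as \cref{thm:finite:exch:seq:invariant} was deduced for sequences. Almost all of the hypotheses of those theorems are immediate: $\grp := \bbS_{\bfn_2} = \bbS_{n_1}\times\bbS_{n_2}$ is a finite, hence compact, group; it acts measurably — in fact by coordinate permutations, which are continuous — on the standard Borel space $\calX^{n_1\times n_2}$ via \eqref{eq:perm:action:sep:matrix}; and separate exchangeability \eqref{eq:sep:exch} is precisely the statement that $P_{\Xntwo}$ is $\bbS_{\bfn_2}$-invariant. So the only thing that actually needs checking is that a canonical form $\Cntwo := \canon(\Xntwo)$ is a measurable maximal invariant of $\bbS_{\bfn_2}$ acting on $\calX^{n_1\times n_2}$.

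First I would fix the canonicalization procedure and verify it is a well-defined measurable maximal invariant. Writing $\mathrm{orb}(\Xntwo) = \{\permtwo\cdot\Xntwo : \permtwo\in\bbS_{\bfn_2}\}$ for the finite orbit of $\Xntwo$, a canonical form selects one representative from each orbit; when $\calX\subseteq\bbR$ one may take $\canon(\Xntwo)$ to be the lexicographically smallest element of $\mathrm{orb}(\Xntwo)$ under a fixed ordering of the index set $[n_1]\times[n_2]$, and for general standard Borel $\calX$ one transports this construction through a Borel isomorphism $\calX\to[0,1]$. In either case $\canon$ is a minimum of finitely many measurable maps, hence measurable, so $\Cntwo$ takes values in the space $\cspace_{\bfn_2}(\calX)$ of canonical forms. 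By construction $\canon(\permtwo\cdot\Xntwo) = \canon(\Xntwo)$ for every $\permtwo\in\bbS_{\bfn_2}$, so $\Cntwo$ is $\bbS_{\bfn_2}$-invariant; and if $\canon(\Xntwo) = \canon(\Xntwo')$ then $\mathrm{orb}(\Xntwo)$ and $\mathrm{orb}(\Xntwo')$ share an element, hence coincide (orbits partition the space), so $\Xntwo' = \permtwo\cdot\Xntwo$ for some $\permtwo\in\bbS_{\bfn_2}$. Thus $\Cntwo$ is a maximal invariant.

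With this in hand, \cref{thm:group:invariant:rep} applied with $\grp = \bbS_{\bfn_2}$, input space $\calX^{n_1\times n_2}$, and maximal invariant $M = \canon$ gives the claimed equivalence between $\bbS_{\bfn_2}$-invariance of $P_{Y|\Xntwo}$ and the representation \eqref{eq:invariant:two:arrays}, with $f$ automatically $\bbS_{\bfn_2}$-invariant since it depends on $\Xntwo$ only through $\Cntwo$. Sufficiency of $\Cntwo$ for $\model_{\Xntwo}^{\bbS_{\bfn_2}}$ and adequacy for $\model_{\Xntwo,Y}^{\bbS_{\bfn_2}}$ then follow from \cref{thm:suff:maximal:invariant}(i), which identifies any maximal invariant as an adequate statistic for the corresponding invariant models.

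The only real subtlety — and the step I would be most careful about — is the measurable-selection issue hidden in "fix a canonicalization procedure": one must ensure $\canon$ is a genuine measurable orbit-representative map on all of $\calX^{n_1\times n_2}$, not merely an algorithm. For finite groups acting on standard Borel spaces this is routine (the lexicographic-minimum construction suffices, and more generally one can invoke the existence of a measurable representative equivariant $\tau:\calX^{n_1\times n_2}\to\bbS_{\bfn_2}$ and set $\canon(\Xntwo) := \tau_{\Xntwo}^{-1}\cdot\Xntwo$), so it poses no obstacle here — but it is worth flagging, since for continuous groups this is exactly where the analogous argument can fail.
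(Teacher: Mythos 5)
Your proposal is correct and follows essentially the same route as the paper, which likewise notes that $\Cntwo$ is a maximal invariant by construction and then invokes \cref{thm:group:invariant:rep,thm:suff:maximal:invariant}. The extra care you take in verifying that the canonicalization is a genuine measurable orbit-representative map (e.g.\ via the lexicographic minimum or a representative equivariant) is a worthwhile elaboration of a point the paper treats as given, but it does not change the argument.
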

\begin{proof}
  $\Cntwo$ is a maximal invariant by construction, and therefore \cref{thm:group:invariant:rep,thm:suff:maximal:invariant} yield the result.
\end{proof}

An identical result holds for jointly exchangeable matrices $\Xntwosymm$, with symmetric canonical form $\Cntwosymm$.

As was the case for sequences, any function of a canonical form is invariant under permutations of the index set of $\Xntwo$. Most of the recent examples from the deep learning literature incorporate vertex features; that composite case is addressed in \cref{sec:vertex:features}. A simple example without vertex features is the read-out layer of a neural network that operates on undirected, symmetric graphs. 

\begin{example}[Read-out for message-passing neural networks] \Citet{Gilmer:etal:2017} reviewed recent work on neural networks whose input is an undirected graph (i.e., a symmetric matrix) on vertex set $[n]$, and whose hidden layers act as message-passing operations between vertices of the graph. These message-passing hidden layers are equivariant (see \cref{sec:equivariant:arrays,sec:vertex:features}); adding a final invariant layer makes the whole network invariant. A particularly simple architecture involves a single message-passing layer and no input features on the vertices, and a typical permutation-invariant read-out layer of the form
\begin{align*}
  R =  \sum_{i\in [n]} f\bigg( \sum_{j \in [n]} h(X_{i,j} )  \bigg) = f'(\Cntwosymm) \;.
\end{align*}
\end{example}

\subsection{\texorpdfstring{$\bbS_{\bfn_2}$}{Sn2}-Equivariant Conditional Distributions}
\label{sec:equivariant:arrays}

In analogy to $\Xn$ and $\Yn$ in \cref{sec:equivariant:seq}, $\Xntwo$ and $\Yntwo$ might represent adjacent neural network layers; in such cases the goal is to transfer the symmetry of $\Xntwo$ to $\Yntwo$, and permutation-equivariance is the property of interest. With a collection of permutations acting on $\Xntwo$ and $\Yntwo$ as in \eqref{eq:perm:action:sep:matrix}, permutation-equivariance is defined in the same way as for sequences. In particular, if $\Xntwo$ is exchangeable then $\Yntwo$ is conditionally $\bbS_{\bfn_2}$-equivariant if and only if
\begin{align}
  (\permtwo \cdot \Xntwo, \permtwo \cdot \Yntwo) \equdist (\Xntwo,\Yntwo) \quad \text{for all } \permtwo \in \bbS_{\bfn_2} \;.
\end{align}
The main result in this section is a functional representation of $\Yntwo$ in terms of a separately exchangeable array $\Xntwo$, when the elements of $\Yntwo$ are also conditionally independent given $\Xntwo$.\footnote{This is satisfied by neural networks without intra-layer or skip connections. Weaker conditional independence assumptions may be considered, as in \cref{sec:equivariant:seq}.} In particular, each element $Y_{i,j}$ is expressed in terms of $X_{i,j}$, outsourced noise $\noise_{i,j}$, and an augmented canonical form defined as follows.

Let $\Cntwoi$ denote the $i$th row of $\Cntwo$, and $\Cntwoj$ the $j$th column; define the \emph{separately augmented canonical form} as
\begin{align} \label{eq:sep:aug:cboard}
  [\augc]_{k,\ell} = ([\Cntwo]_{k,\ell},[\Cntwo]_{i,\ell},[\Cntwo]_{k,j}) \;.
\end{align} 
$\augc$ augments the canonical form $\Cntwo$ with the $i$th row and $j$th column, which are broadcast over the appropriate dimensions; one encodes $\Cntwoi$ and one encodes $\Cntwoj$. These are analogues of the empirical measures $\empirical_{\Cntwoi}$ and $\empirical_{\Cntwoj}$, but with their structure coupled to that of $\Cntwo$. Denote by $\augcspace_{\bfn_2}(\calX)$ the space of all such functions augmented canonical forms of matrices with dimension $\bfn_2$.

A general version of the following theorem, for $d$-dimensional arrays, is given in \cref{sec:proof:arrays}. 
The proof has the same basic structure as the one for sequences in \cref{thm:finite:exch:seq:equivariant}, but with substantially more notation. Below, the proof for $d=2$ is given in order to highlight the important structure.

\begin{theorem}
  \label{thm:equivariant:two:arrays}

  Suppose $\Xntwo$ and $\Yntwo$ are $\calX$- and $\calY$-valued arrays, respectively, each indexed by $[n_1]\times [n_2]$, and that $\Xntwo$ is separately exchangeable. Assume that the elements of $\Yntwo$ are mutually conditionally independent given $\Xntwo$. Then $\Yntwo$ is conditionally $\bbS_{\bfn_2}$-equivariant given $\Xntwo$ if and only if there is a measurable function $f : [0,1] \times \calX \times \augcspace_{\bfn_2}(\calX) \to \calY$ such that 
  \begin{align} \label{eq:rep:exch:two:array}
    \bigg(\Xntwo, \Yntwo  \bigg)
    \equas 
    \bigg(\Xntwo,
      \big(
        f(
        \noise_{i,j}, 
        X_{i,j}, 
        \augc
        )
      \big)_{i\in [n_1], j\in [n_2]}
    \bigg) \;,
  \end{align}
  for \iid uniform random variables $(\noise_{i,j})_{i\in [n_1],j\in[n_2]} \condind \Xntwo$. 
\end{theorem}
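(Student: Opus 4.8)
The plan is to follow the template of the proof of \cref{thm:finite:exch:seq:equivariant}, with the stabilizer of an index $i\in[n]$ replaced by the stabilizer of a cell $(i,j)\in[n_1]\times[n_2]$ and \cref{thm:finite:exch:seq:invariant} replaced by \cref{thm:invariant:two:arrays}. The reverse direction is routine: if $\Yntwo$ is given by \eqref{eq:rep:exch:two:array}, then, because the $\noise_{i,j}$ are i.i.d., $\Xntwo$ is separately exchangeable, $X_{i,j}$ occupies cell $(i,j)$, and the augmented canonical form is ``relabeling-stable'' in the sense that $\cboard_{\permtwo\cdot\Xntwo}^{(\perm_1(i),\perm_2(j))}=\augc$ for $\permtwo=(\perm_1,\perm_2)\in\bbS_{\bfn_2}$ (since $\canon(\permtwo\cdot\Xntwo)=\Cntwo$ and the broadcast row and column track the relabeling), the pair $(\Xntwo,\Yntwo)$ is jointly $\bbS_{\bfn_2}$-invariant, so by \cref{prop:joint:symmetry} $P_{\Yntwo|\Xntwo}$ is conditionally $\bbS_{\bfn_2}$-equivariant; mutual conditional independence of the $Y_{i,j}$ given $\Xntwo$ holds because each is a function of $X_{i,j}$, the $\Xntwo$-measurable quantity $\augc$, and its own independent noise.

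For the forward direction, fix $(i,j)$ and let $H_{i,j}:=\bbS_{n_1\setminus i}\times\bbS_{n_2\setminus j}\le\bbS_{\bfn_2}$ be the subgroup of index permutations fixing row $i$ and column $j$; it is isomorphic to $\bbS_{n_1-1}\times\bbS_{n_2-1}$ and fixes both $X_{i,j}$ and $Y_{i,j}$. \Cref{prop:joint:symmetry} gives $(\Xntwo,\Yntwo)\equdist(\permtwo\cdot\Xntwo,\permtwo\cdot\Yntwo)$ for all $\permtwo\in\bbS_{\bfn_2}$, so marginalizing out every $Y_{k,\ell}$ with $(k,\ell)\ne(i,j)$ shows that the law of $\big(\Xntwo,(X_{i,j},Y_{i,j})\big)$ is invariant under $H_{i,j}$ acting on $\Xntwo$. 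The $H_{i,j}$-action on $\Xntwo$ decouples into the $\bbS_{n_1-1}\times\bbS_{n_2-1}$-action on the sub-array obtained by deleting row $i$ and column $j$, the column-permutation action of $\bbS_{n_2-1}$ on the remaining entries of row $i$, and the row-permutation action of $\bbS_{n_1-1}$ on the remaining entries of column $j$, while $X_{i,j}$ is held fixed. Assembling these into a single separately exchangeable $\calX^3$-valued array $\mathbf{W}$, indexed by $([n_1]\setminus i)\times([n_2]\setminus j)$, with $W_{k,\ell}=(X_{k,\ell},X_{i,\ell},X_{k,j})$ — so that $(X_{i,j},\mathbf{W})$ and $\Xntwo$ determine each other — we see that the law of $\big(\mathbf{W},(X_{i,j},Y_{i,j})\big)$ is invariant under the natural $\bbS_{n_1-1}\times\bbS_{n_2-1}$-action on $\mathbf{W}$.

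Now apply \cref{thm:invariant:two:arrays} with input $\mathbf{W}$ and output $(X_{i,j},Y_{i,j})$, a random element of the standard Borel space $\calX\times\calY$: this gives $(X_{i,j},Y_{i,j})\condind_{\canon(\mathbf{W})}\mathbf{W}$. One then checks that $\canon(\mathbf{W})$ is, up to a measurable bijection, the separately augmented canonical form $\augc$ of \eqref{eq:sep:aug:cboard} — canonicalizing $\Xntwo$ by a permutation of $H_{i,j}$ keeps row $i$ and column $j$ in place while bringing the bulk to canonical order, and recording at each cell the canonicalized bulk entry together with the broadcast canonicalized $i$th row and $j$th column is exactly the recipe in \eqref{eq:sep:aug:cboard} — so $\augc$ may be used in place of $\canon(\mathbf{W})$. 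Conditioning additionally on $X_{i,j}$ yields $Y_{i,j}\condind_{(X_{i,j},\augc)}\Xntwo$, marginally for each $(i,j)$. Since by hypothesis $Y_{i,j}\condind_{\Xntwo}(\Yntwo\setminus Y_{i,j})$ and $(X_{i,j},\augc)$ is $\Xntwo$-measurable, the chain rule for conditional independence \citep[][Prop.~6.8]{Kallenberg:2002} upgrades this to $Y_{i,j}\condind_{(X_{i,j},\augc)}(\Xntwo,\Yntwo\setminus Y_{i,j})$. Then \cref{lem:noise:out:suff} supplies a measurable $f_{i,j}:[0,1]\times\calX\times\augcspace_{\bfn_2}(\calX)\to\calY$ with $Y_{i,j}\equas f_{i,j}(\noise_{i,j},X_{i,j},\augc)$ and $\noise_{i,j}\sim\Unif[0,1]$ independent of $(\Xntwo,\Yntwo\setminus Y_{i,j})$; outsourcing the noise jointly gives the i.i.d.\ family $(\noise_{i,j})$, and because $\bbS_{\bfn_2}$ acts transitively on cells, equivariance forces $\big(\Xntwo,\Yntwo\setminus Y_{i,j},Y_{i,j}\big)\equdist\big(\Xntwo,\Yntwo\setminus Y_{i',j'},Y_{i',j'}\big)$ up to the attendant relabeling of $\augc$, so a single $f$ can be used for every cell, giving \eqref{eq:rep:exch:two:array}.

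I expect the main obstacle to be the middle step: pinning down $H_{i,j}$ together with its decoupled action on the bulk array and the two distinguished borders, and verifying rigorously that $\augc$ from \eqref{eq:sep:aug:cboard} is interchangeable with a maximal invariant (equivalently an adequate statistic) for this reduced problem. This requires keeping the original indices and the canonicalized indices straight, handling possible automorphisms of the canonical form $\Cntwo$ (which may make the canonical position of the distinguished cell non-unique — a point where the representative equivariant $\tau$ of \cref{thm:kall:equiv} may be needed to fix a labeling), and being careful to treat $(X_{i,j},Y_{i,j})$ jointly as the output of \cref{thm:invariant:two:arrays} and only afterwards move $X_{i,j}$ onto the conditioning side, exactly as in the $\Symn$ proof.
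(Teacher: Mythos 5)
Your proposal is correct and follows essentially the same route as the paper's proof: your array $\mathbf{W}$ with entries $(X_{k,\ell},X_{i,\ell},X_{k,j})$ is exactly the paper's $\bfZ^{(i,j)}$, and the subsequent steps (stabilizer of the cell, marginalizing to $(X_{i,j},Y_{i,j})$, applying \cref{thm:invariant:two:arrays}, identifying $\canon(\mathbf{W})$ with $\augc$, conditioning on $X_{i,j}$, invoking the conditional independence assumption with the chain rule and \cref{lem:noise:out:suff}, and using equivariance to take a single $f$) match the paper's argument. The point you flag as a possible obstacle—equivalence of conditioning on $\cboard_{\bfZ^{(i,j)}}$ versus $\augc$—is asserted in the paper in one line, so your caution there is reasonable but does not indicate a divergence in approach.
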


\begin{proof}
  First, assume that $\Yntwo$ is conditionally $\bbS_{\bfn_2}$-equivariant given $\Xntwo$. Then $\permtwo\cdot(\Xntwo,\Yntwo)\equdist (\Xntwo,\Yntwo)$ for all $\permtwo\in\bbS_{\bfn_2}$. 
  Let $\bbS_{\bfn_2}^{(i,j)}\subset \bbS_{\bfn_2}$ be the stabilizer subgroup of $(i,j)$, i.e., the set of permutations that fixes element $(i,j)$ in $\Xntwo$. Note that each $\permtwo^{(i,j)} \in\bbS^{(i,j)}_{\bfn_2}$ fixes both $X_{i,j}$ and $Y_{i,j}$, and that $\bbS_{\bfn_2}^{(i,j)}$ is homomorphic to $\bbS_{\bfn_2 - \bfOne}$. Observe that any $\permtwo^{(i,j)} \in \bbS_{\bfn_2}^{(i,j)}$ may rearrange the elements within the $i$th row of $\Xntwo$, but it remains the $i$th row in $\permtwo^{(i,j)}\cdot \Xntwo$. Similarly, the elements in the $j$th column, $\Xntwoj$ may be rearranged but remains the $j$th column. As a result, the $j$th element of every row is fixed (though it moves with its row), as is the $i$th element of every column. 

  That fixed-element structure will be used to establish the necessary conditional independence relationships. To that end, let $r_i : [n_1]\setminus i \to [n_1-1]$ map the row indices of $\Xntwo$ to the row indices of the matrix obtained by removing the $i$th row from $\Xntwo$: 
  \begin{align*}
    r_i^{-1}(k) = 
    \begin{cases}
      k & k < i \\
      k - 1 & k > i \;.
    \end{cases}
  \end{align*}
  Analogously, let $c_j^{-1} : [n_2]\setminus j \to [n_2 - 1]$ map the column indices of $\Xntwo$ to those of $\Xntwo$ with the $j$th column removed. 
  Define the $\calX^3$-valued array $\bfZ^{(i,j)}$ as
  \begin{align} \label{eq:z:array}
    [\bfZ^{(i,j)}]_{k,\ell} = (X_{r_i(k),c_j(\ell)},X_{i,c_j(\ell)},X_{r_i(k),j}) \;, \quad k \in [n_1 - 1], \ \ell \in [n_2 -1] \;.
  \end{align}
  That is, $\bfZ^{(i,j)}$ is formed by removing $\Xntwoi$ and $\Xntwoj$ from $\Xntwo$ (and $X_{i,j}$ from $\Xntwoi$ and $\Xntwoj$), and broadcasting the removed row and column entries over the corresponding rows and columns of the matrix that remains. $\bfZ^{(i,j)}$ inherits the exchangeability of $\Xntwo$ in the first element of each entry, and the fixed-elements structure in the second two elements, and therefore overall it is separately exchangeable:
  \begin{align*}
    \permtwo' \cdot \bfZ^{(i,j)} \equdist \bfZ^{(i,j)} \;, 
      \quad \text{for all} \quad \permtwo' \in \bbS_{\bfn_2 - \bfOne} \;.
  \end{align*}
  Now, marginally (for $Y_{i,j}$),
  \begin{align*}
    (\permtwo' \cdot \bfZ^{(i,j)}, (X_{i,j},Y_{i,j}) ) \equdist (\bfZ^{(i,j)}, (X_{i,j}, Y_{i,j})) \;, \quad \text{for all} \quad \permtwo' \in \bbS_{\bfn_2 - \bfOne} \;.
  \end{align*}
  Therefore, by \cref{thm:invariant:two:arrays}, $(X_{i,j},Y_{i,j}) \condind_{\cboard_{\bfZ^{(i,j)}}} \bfZ^{(i,j)}$. Conditioning on $X_{i,j}$ and $\cboard_{\bfZ^{(i,j)}}$ is equivalent to conditioning on $X_{i,j}$ and $\augc$, yielding 
  \begin{align}
    Y_{i,j} \condind_{(X_{i,j},\augc)} \Xntwo \;.
  \end{align}
  By \cref{lem:noise:out:suff}, there is a measurable function $f_{i,j} : [0,1] \times \calX \times \augcspace_{\bfn_2} \to \calY$ such that
  \begin{align*}
    Y_{i,j} = f_{i,j}(\noise_{i,j},X_{i,j},\augc) \;,
  \end{align*} 
  for a uniform random variable $\noise_{i,j} \condind \Xntwo$. 
  This is true for all $i\in[n_1]$ and $j\in[n_2]$; by equivariance the same $f_{i,j}$ must work for every $(i,j)$. Furthermore, by assumption the elements of $\Yntwo$ are mutually conditionally independent given $\Xntwo$, and therefore by the chain rule for conditional independence \citep[][Prop.~6.8]{Kallenberg:2002}, the joint identity \eqref{eq:rep:exch:two:array} holds. 

  The reverse direction is straightforward to verify.
\end{proof}

A particularly simple version of \eqref{eq:rep:exch:two:array} is
\begin{align} \label{eq:exch:two:array:simple}
  Y_{i,j} = f\big(\noise_{i,j},
    {X}_{i,j},
    \sum_{k=1}^{n_2} h_1({X}_{i,k}),
    \sum_{\ell=1}^{n_1} h_2({X}_{\ell,j}),
    \sum_{k,\ell} h_3({X}_{\ell,k})
    \big) \;,
\end{align}
for some functions $h_m : \calX \to \bbR$, $m\in\{1,2,3\}$. 
Clearly, this is conditionally equivariant. The following example from the deep learning literature is an even simpler version.

\begin{example}[Array-based MLPs]
   \Citet{Hartford:etal:2018} determined the parameter-sharing schemes that result from deterministic permutation-equivariant MLP layers for matrices. They call such layers ``exchangeable matrix layers''.\footnote{This is a misnomer: exchangeability is a distributional property and there is nothing random.} The equivariant weight-sharing scheme yields a simple expression:
   \begin{align*}
    Y_{i,j} = \activation\bigg( 
      \theta_0 +  
      \theta_1 X_{i,j} + 
      \theta_2 \sum_{k=1}^{n_2} X_{i,k} +  
      \theta_3 \sum_{\ell=1}^{n_1} X_{\ell,j} +  
      \theta_4 \sum_{k,\ell} X_{\ell,k}  
      \bigg) \;.
   \end{align*}
   That is, $Y_{i,j}$ is a nonlinear activation of a linear combination of $X_{i,j}$, the sums of the $j$th column and $i$th row of $\Xntwo$, and the sum of the entire matrix $\Xntwo$. It is straightforward to see that this is a special deterministic case of \eqref{eq:exch:two:array:simple}. \Citet{Hartford:etal:2018} also derive analogous weight-sharing schemes  for MLPs for $d$-dimensional arrays; those correspond with the $d$-dimensional version of \cref{thm:equivariant:two:arrays} (see \cref{sec:proof:arrays}).
\end{example}

\paragraph{Jointly exchangeable arrays} The case of jointly exchangeable symmetric arrays is of particular interest because it applies to graph-valued data. Importantly, the edge variables are not restricted to be $\{0,1\}$-valued; in practice they often take values in $\bbR_+$. 

Let $\Xntwosymm$ be a jointly exchangeable matrix in $\calX^{n\times n}$, and $\Cntwosymm$ its canonical form. Similarly to \eqref{eq:sep:aug:cboard}, define the \emph{jointly augmented canonical form} for a symmetric array as
\begin{align} \label{eq:joint:aug:cboard}
  [\augcsymm]_{k,\ell} = 
    ([\Cntwosymm]_{k,\ell},\{ ([\Cntwosymm]_{i,k},[\Cntwosymm]_{j,k}), ([\Cntwosymm]_{i,\ell}, [\Cntwosymm]_{j,\ell}) \}) \;.
\end{align}
The symmetry of $\Xntwosymm$ requires that the row and column entries be paired, which results in the second, set-valued, element on the right-hand side of \eqref{eq:joint:aug:cboard}. 
(The curly braces indicate a set that is insensitive to the order of its elements, as opposed to parentheses, which indicate a sequence that is sensitive to order.) 
Denote by $\augcspacesymm_{\bfn}(\calX)$ the space of all such jointly augmented canonical forms on $\calX^{n \times n}$. 
The following counterpart of \cref{thm:equivariant:two:arrays} applies to jointly exchangeable arrays such as undirected graphs.

\begin{theorem} \label{thm:joint:equivariant:two:arrays}
  Suppose $\Xntwosymm$ and $\Yntwosymm$ are symmetric $\calX$- and $\calY$-valued arrays, respectively, each indexed by $[n]\times [n]$, and that $\Xntwosymm$ is jointly exchangeable. Assume that the elements of $\Yntwosymm$ are mutually conditionally independent given $\Xntwosymm$. Then $\Yntwosymm$ is conditionally $\bbS_{n}$-equivariant given $\Xntwosymm$ if and only if there is a measurable function $f : [0,1] \times \calX \times \augcspacesymm_{\bfn}(\calX) \to \calY$ such that 
  \begin{align} \label{eq:rep:joint:exch:two:array}
  \big(\Xntwosymm,\Yntwosymm \big)
    \equas
    \bigg(\Xntwosymm,
      \big(
      f(
        \noise_{i,j},
        \widebar{X}_{i,j},
        \augcboardsymm_{\Xntwosymm}^{\{i,j\}}
      ) 
      \big)_{i\in [n],j\in [n]}
    \bigg) \;,
  \end{align}
  for \iid uniform random variables $(\noise_{i,j})_{i\in [n],j\leq i} \condind \Xntwosymm$ with $\noise_{i,j}=\noise_{j,i}$. 
\end{theorem}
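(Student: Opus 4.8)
The plan is to run the jointly exchangeable analogue of the proof of \cref{thm:equivariant:two:arrays}: reduce the equivariance statement, via \cref{prop:joint:symmetry}, to joint $\bbS_n$-invariance of $(\Xntwosymm,\Yntwosymm)$; then isolate a single entry $Y_{i,j}$ and establish a conditional independence relation of the form $Y_{i,j}\condind_{(\widebar X_{i,j},\,\augcboardsymm_{\Xntwosymm}^{\{i,j\}})}\Xntwosymm$; and finally invoke noise outsourcing (\cref{lem:noise:out:suff}) to extract $f$. The reverse implication is routine and goes exactly as in \cref{thm:equivariant:two:arrays}: the right-hand side of \eqref{eq:rep:joint:exch:two:array} is built from $\Xntwosymm$ covariantly under the diagonal $\bbS_n$-action and from a jointly exchangeable noise array with $\noise_{i,j}=\noise_{j,i}$, so $(\Xntwosymm,\Yntwosymm)$ is jointly $\bbS_n$-invariant and \cref{prop:joint:symmetry} yields conditional $\bbS_n$-equivariance. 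The one genuinely new feature relative to the separately exchangeable case is that the symmetric (diagonal) action couples rows and columns, so the relevant ``stabilizer of a coordinate'' is the stabilizer of an unordered pair $\{i,j\}$ rather than a product of two index-stabilizers.

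Concretely, for the forward direction I would fix an off-diagonal pair $i\neq j$ (diagonal entries $Y_{i,i}$ are handled separately and more easily, with the index-stabilizer $\bbS_n^{\{i\}}\cong\bbS_{n-1}$ and the singleton-augmented canonical form), and take $\bbS_n^{\{i,j\}}$ to be the pointwise stabilizer of $\{i,j\}$, which is isomorphic to the symmetric group on $[n]\setminus\{i,j\}$, hence to $\bbS_{n-2}$. Each such permutation fixes $\widebar X_{i,j}$ and $Y_{i,j}$, permutes the edges incident to $i$ and to $j$ among themselves, and acts diagonally on the remaining $(n-2)\times(n-2)$ symmetric subarray. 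I would then form the reduced array $\bfZ^{\{i,j\}}$ on index set $[n]\setminus\{i,j\}$ whose $(k,\ell)$ entry records $\widebar X_{k,\ell}$ together with the unordered pair $\{(\widebar X_{i,k},\widebar X_{j,k}),(\widebar X_{i,\ell},\widebar X_{j,\ell})\}$ of ``attachments to the distinguished pair'', the unordered packaging making the entry symmetric under $k\leftrightarrow\ell$ and matching \eqref{eq:joint:aug:cboard}. This $\bfZ^{\{i,j\}}$ inherits joint exchangeability from $\Xntwosymm$, so $\perm'\cdot\bfZ^{\{i,j\}}\equdist\bfZ^{\{i,j\}}$ and marginally $(\perm'\cdot\bfZ^{\{i,j\}},(\widebar X_{i,j},Y_{i,j}))\equdist(\bfZ^{\{i,j\}},(\widebar X_{i,j},Y_{i,j}))$ for all $\perm'\in\bbS_{n-2}$. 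Applying the jointly exchangeable version of \cref{thm:invariant:two:arrays} (the remark following that theorem) with input $\bfZ^{\{i,j\}}$ and output $(\widebar X_{i,j},Y_{i,j})$ gives $(\widebar X_{i,j},Y_{i,j})\condind_{\cboard_{\bfZ^{\{i,j\}}}}\bfZ^{\{i,j\}}$; since, just as in the separately exchangeable case, $(\widebar X_{i,j},\cboard_{\bfZ^{\{i,j\}}})$ and $(\widebar X_{i,j},\augcboardsymm_{\Xntwosymm}^{\{i,j\}})$ generate the same conditioning information, conditioning additionally on $\widebar X_{i,j}$ yields $Y_{i,j}\condind_{(\widebar X_{i,j},\,\augcboardsymm_{\Xntwosymm}^{\{i,j\}})}\Xntwosymm$. \Cref{lem:noise:out:suff} then produces $f_{i,j}$ with $Y_{i,j}=f_{i,j}(\noise_{i,j},\widebar X_{i,j},\augcboardsymm_{\Xntwosymm}^{\{i,j\}})$ for a uniform $\noise_{i,j}\condind\Xntwosymm$; conditional $\bbS_n$-equivariance forces the same $f_{i,j}=f$ for all $(i,j)$, symmetry of $\Xntwosymm$ together with the $i\leftrightarrow j$ symmetry of the construction allows the choice $\noise_{i,j}=\noise_{j,i}$, and the assumed mutual conditional independence of the entries of $\Yntwosymm$ given $\Xntwosymm$ promotes the per-entry identities to the joint identity \eqref{eq:rep:joint:exch:two:array} via the chain rule for conditional independence \citep[][Prop.~6.8]{Kallenberg:2002}.

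I expect the main obstacle to be the careful construction and analysis of the reduced array $\bfZ^{\{i,j\}}$: one must package the ``attachments to $i$ and $j$'' so that (a) the residual symmetry of $\bfZ^{\{i,j\}}$ is exactly that of a jointly exchangeable symmetric array, so that \cref{thm:invariant:two:arrays} applies verbatim, and (b) its canonical form, combined with $\widebar X_{i,j}$, carries precisely the information in $\augcboardsymm_{\Xntwosymm}^{\{i,j\}}$ — no more and no less — so that the two conditioning $\sigma$-algebras coincide. Reconciling the transposition of $i$ and $j$ (which lies in the value-stabilizer of the position $(i,j)$ because $\Xntwosymm$ is symmetric) with the ordered inner pairs appearing in \eqref{eq:joint:aug:cboard}, and treating diagonal and off-diagonal entries uniformly under a single $f$, are the places where the diagonal action makes the argument more delicate than its separately exchangeable counterpart; everything else is a direct transcription of the proof of \cref{thm:equivariant:two:arrays}.
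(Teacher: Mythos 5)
Your proposal follows essentially the same route as the paper's proof: the paper likewise treats this as a transcription of the argument for \cref{thm:equivariant:two:arrays}, with the only new ingredient being the symmetric reduced array $[\widebar{\bfZ}^{\{i,j\}}]_{k,\ell} = (\widebar{X}_{k,\ell},\{(\widebar{X}_{i,\ell},\widebar{X}_{j,\ell}),(\widebar{X}_{k,i},\widebar{X}_{k,j})\})$, which (using symmetry of $\Xntwosymm$) is exactly your unordered-pair packaging of the attachments to $\{i,j\}$, followed by joint exchangeability of $\widebar{\bfZ}^{\{i,j\}}$, the invariance theorem, noise outsourcing, and the chain rule. Your treatment is if anything slightly more careful than the paper's terse version (e.g., the $[n]\setminus\{i,j\}$ index set and the explicit remarks about the diagonal entries and the pair stabilizer), so the proposal is correct and matches the intended argument.
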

\begin{proof}
  The proof is essentially the same as that of \cref{thm:equivariant:two:arrays}. The main difference stems from the symmetry of $\Xntwosymm$: fixing row $i$ also fixes column $i$, and fixing column $j$ also fixes row $j$. Hence, the augmentations to the matrix consist of the paired sequences $(\Xntwosymmi,\Xntwosymmj)=((\widebar{X}_{i,k},\widebar{X}_{j,k}))_{k\in [n]}$. The counterpart to $\bfZ^{(i,j)}$ in \eqref{eq:z:array} is
  \begin{align*}
    [\widebar{\bfZ}^{\{i,j\}}]_{k,\ell} = 
      (\widebar{X}_{k,\ell},\{ (\widebar{X}_{i,\ell}, \widebar{X}_{j,\ell}), (\widebar{X}_{k,i},\widebar{X}_{k,j}) \}) \;, \quad k,\ell \in [n-1] \;.
  \end{align*}
  It is straightforward to show that $\widebar{\bfZ}^{\{i,j\}}$ is jointly exchangeable; the rest of the argument is the same as in the proof of \cref{thm:equivariant:two:arrays}.
\end{proof}

\subsection{Vertex Features: Equivariant Functions on the Vertex Set}
\label{sec:vertex:features}

In many applications of neural networks to graph-structured data, the desired output is a function defined on the vertex set, which can be thought of as a collection of vertex features. Additionally, the input graph may include vertex features. Formally, these features are encoded as a sequence $\Xn$ whose elements correspond to the rows and columns of a symmetric matrix $\Xntwosymm$. We assume for simplicity that both $\Xn$ and $\Xntwosymm$ are $\calX$-valued, but it is not strictly necessary that they take values in the same domain. In the separately exchangeable case, two distinct feature sequences $\Xnfeati$ and $\Xnfeatj$ correspond to rows and columns, respectively. For simplicity, we focus on the symmetric case. A permutation $\perm\in\bbS_n$ acts simultaneously on $\Xn$ and $\Xntwosymm$, and the feature-augmented graph is jointly exchangeable if
\begin{align} \label{eq:vertex:aug:exch}
  (\perm \cdot \Xn, \perm \cdot \Xntwosymm) \equdist (\Xn, \Xntwosymm) \;, \quad \text{ for all } \perm \in \bbS_n \;.
\end{align}
Likewise, we may define a vertex feature sequence $\Yn$, whose elements correspond to the rows and columns of $\Yntwosymm$. 

Consider the array $\bcast(\Xn,\Xntwosymm)$ obtained by broadcasting the vertex features over the appropriate dimensions of $\Xntwosymm$: $[\bcast(\Xn,\Xntwosymm)]_{i,j} = (\widebar{X}_{i,j}, \{X_i, X_j \})$. In this case, we denote the canonical form by $\Cnfeat$. 
Observe that $\bcast(\Xn,\Xntwosymm)$ is a jointly exchangeable array because $(\Xn,\Xntwosymm)$ is exchangeable as in \eqref{eq:vertex:aug:exch}. Furthermore, if $\Yntwosymm$ is conditionally $\bbS_n$-equivariant given $\Xntwosymm$, then $\bcast(\Yn,\Yntwosymm)$, with entries $[\bcast(\Yn,\Yntwosymm)]_{i,j} = (\widebar{Y}_{i,j},\{Y_i,Y_j\})$, is conditionally $\bbS_n$-equivariant given $\bcast(\Xn,\Xntwosymm)$. 
However, \cref{thm:joint:equivariant:two:arrays} does not apply: $\bcast(\Yn,\Yntwosymm)$ does not satisfy the mutually conditionally independent elements assumption because, for example, $Y_i$ appears in every element in the $i$th row.

Further assumptions are required to obtain a useful functional representation of $\Yntwosymm$. In addition to the mutual conditional independence of the elements of $\Yntwosymm$ given $(\Xn,\Xntwosymm)$, we assume further that
\begin{align} \label{eq:vtx:feature:cond:ind}
  Y_i \condind_{(\Xn,\Xntwosymm,\Yntwosymm)} (\Yn \setminus Y_i) \;, \quad i \in [n] \;.
\end{align}
In words, the elements of the output vertex feature sequence are conditionally independent, given the input data $\Xn,\Xntwosymm$ and the output edge features $\Yntwosymm$. 
This is consistent with the implicit assumptions used in practice in the deep learning literature, and leads to a representation with simple structure.

To state the result, let $\Cnfeat$ be the augmented canonical form that includes the vertex features, 
\begin{align*}
  [\Cnfeataug]_{k,\ell}
  =
  (\widebar{X}_{k,\ell}, \{ X_k, X_{\ell} \},\{ (\widebar{X}_{i,\ell}, \widebar{X}_{j,\ell}), (\widebar{X}_{k,i},\widebar{X}_{k,j}) \}) \;,
\end{align*}
which belongs to the space denoted $\augcspace_{\bcast,\bfn}(\calX)$. 
Let $\stak(\Xntwosymm,\Yntwosymm)$ be the symmetric (stacked) array with entries $(\widebar{X}_{i,j},\widebar{Y}_{i,j})$, and $\bcast(\Xn,\stak(\Xntwosymm,\Yntwosymm))$ be the same, with the entries of $\Xn$ broadcast. 

\begin{theorem} \label{thm:joint:equivariant:aug:two:arrays}
  Suppose $(\Xn,\Xntwosymm)$ and $(\Yn,\Yntwosymm)$ are $\calX$- and $\calY$-valued vertex feature-augmented arrays, and that $(\Xn,\Xntwosymm)$ is jointly exchangeable as in \eqref{eq:vertex:aug:exch}. Assume that the elements of $\Yntwosymm$ are mutually conditionally independent given $(\Xn,\Xntwosymm)$, and that $\Yn$ satisfies \eqref{eq:vtx:feature:cond:ind}. Then $(\Yn,\Yntwosymm)$ is conditionally $\bbS_n$-equivariant given $(\Xn,\Xntwosymm)$ if and only if there are measurable functions $f_e : [0,1] \times \calX \times \widebar{\calX}^2 \times \augcspacesymm_{\bcast,\bfn}(\calX) \to \calY$ and $f_v : [0,1] \times \calX \times \calX \times \calY \times \augcspacesymm_{\bcast,\bfn}(\calX\times\calY) \to \calY$ such that
  \begin{align}
    \widebar{Y}_{i,j} & \equas f_e \big( \noise_{i,j}, \widebar{X}_{i,j}, \{ X_i, X_j  \}, \Cnfeataug \big) \;, \quad i,j \in [n] \label{eq:aug:edge:rep}  \\ 
    Y_i & \equas f_v \big( \noise_i,X_i,\widebar{X}_{i,i},\widebar{Y}_{i,i}, \cboard_{\bcast(\Xn,\stak(\Xntwosymm,\Yntwosymm))}^{\{i\}} \big) \;, \quad i \in [n] \label{eq:aug:vtx:feature:rep} \;,
  \end{align}
  for \iid uniform random variables $(\noise_{i,j})_{i\in [n], j \leq i} \condind (\Xn,\Xntwosymm)$ with $\noise_{i,j}=\noise_{j,i}$, and \\ $(\noise_i)_{i\in [n]} \condind (\Xn,\Xntwosymm,\Yntwosymm)$. 
\end{theorem}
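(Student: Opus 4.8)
The plan is to establish \eqref{eq:aug:edge:rep} and \eqref{eq:aug:vtx:feature:rep} separately and in stages, adapting the arguments of \cref{thm:equivariant:two:arrays,thm:joint:equivariant:two:arrays} for the edge features and of \cref{thm:finite:exch:seq:equivariant} for the vertex features. In each case the work is to reduce to a conditional independence relation that feeds into noise outsourcing (\cref{lem:noise:out:suff}). For the forward direction, assume $(\Yn,\Yntwosymm)$ is conditionally $\bbS_n$-equivariant given $(\Xn,\Xntwosymm)$, so by \cref{prop:joint:symmetry} the whole configuration $\big((\Xn,\Xntwosymm),(\Yn,\Yntwosymm)\big)$ is jointly $\bbS_n$-invariant.

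First, the edge features. View the feature-augmented input as the jointly exchangeable symmetric array $\bcast(\Xn,\Xntwosymm)$ with entries $(\widebar{X}_{i,j},\{X_i,X_j\})$; joint exchangeability follows from \eqref{eq:vertex:aug:exch}. Fix a pair $\{i,j\}$ and let $\bbS_n^{\{i,j\}}\subset\bbS_n$ be the subgroup fixing $\{i,j\}$ setwise. Every element of $\bbS_n^{\{i,j\}}$ fixes $\widebar{X}_{i,j}$, the set $\{X_i,X_j\}$, and -- since the configuration is jointly invariant -- $\widebar{Y}_{i,j}$. Following the construction of $\widebar{\bfZ}^{\{i,j\}}$ in the proof of \cref{thm:joint:equivariant:two:arrays}, but now also carrying the vertex features, form the jointly exchangeable array obtained by deleting from $\bcast(\Xn,\Xntwosymm)$ the rows and columns attached to the pair and broadcasting the deleted entries over the survivors; its canonical form is a relabelling of $\Cnfeataug$. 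Marginalizing out all output elements other than $\widebar{Y}_{i,j}$ and applying the jointly exchangeable form of \cref{thm:invariant:two:arrays} with output $(\widebar{X}_{i,j},\{X_i,X_j\},\widebar{Y}_{i,j})$, then conditioning further on $(\widebar{X}_{i,j},\{X_i,X_j\})$, yields $\widebar{Y}_{i,j}\condind_{(\widebar{X}_{i,j},\{X_i,X_j\},\Cnfeataug)}(\Xn,\Xntwosymm)$ marginally for each pair $\{i,j\}$. The assumed mutual conditional independence of the elements of $\Yntwosymm$ given $(\Xn,\Xntwosymm)$ and the chain rule \citep[][Prop.~6.8]{Kallenberg:2002} promote this to the joint statement, and \cref{lem:noise:out:suff} together with the equivariant choice of a common $f_e$ (respecting $\noise_{i,j}=\noise_{j,i}$) gives \eqref{eq:aug:edge:rep}.

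Next, the vertex features, now conditioning additionally on $\Yntwosymm$. Work with the stacked symmetric array $\stak(\Xntwosymm,\Yntwosymm)$ and its broadcast $\bcast(\Xn,\stak(\Xntwosymm,\Yntwosymm))$, which is jointly exchangeable because $(\Xn,\Xntwosymm,\Yntwosymm)$ is. Fix a vertex $i$ and use the stabilizer $\bbS_{n\setminus i}$, which is isomorphic to $\bbS_{n-1}$ and fixes $X_i$, $\widebar{X}_{i,i}$, $\widebar{Y}_{i,i}$ and $Y_i$; deleting vertex $i$ leaves an array exchangeable under $\bbS_{n-1}$ whose canonical form is $\cboard_{\bcast(\Xn,\stak(\Xntwosymm,\Yntwosymm))}^{\{i\}}$. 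Applying \cref{thm:invariant:two:arrays} with output $(X_i,\widebar{X}_{i,i},\widebar{Y}_{i,i},Y_i)$ and conditioning on $(X_i,\widebar{X}_{i,i},\widebar{Y}_{i,i})$ gives $Y_i\condind_{(X_i,\widebar{X}_{i,i},\widebar{Y}_{i,i},\,\cboard_{\bcast(\Xn,\stak(\Xntwosymm,\Yntwosymm))}^{\{i\}})}(\Xn,\Xntwosymm,\Yntwosymm)$ marginally; assumption \eqref{eq:vtx:feature:cond:ind} and the chain rule make it joint, and noise outsourcing with a common $f_v$ yields \eqref{eq:aug:vtx:feature:rep}. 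Since the $\noise_{i,j}$ are outsourced from $(\Xn,\Xntwosymm)$ and the $\noise_i$ subsequently from $(\Xn,\Xntwosymm,\Yntwosymm)$, the stated independence structure can be arranged, finishing the forward direction. The reverse direction is the routine check that substituting \eqref{eq:aug:edge:rep}--\eqref{eq:aug:vtx:feature:rep} into $\bcast(\Yn,\Yntwosymm)$ produces a conditionally $\bbS_n$-equivariant $(\Yn,\Yntwosymm)$, using i.i.d.\ noise, joint exchangeability of $(\Xn,\Xntwosymm)$, and permutation-invariance of the augmented canonical forms.

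The step I expect to be the main obstacle is the bookkeeping in the edge case: correctly defining the reduced array (broadcasting both the deleted edge entries and the two vertex features in the symmetric, set-valued way of \eqref{eq:joint:aug:cboard}) and verifying that $\bbS_n^{\{i,j\}}$ induces the full residual permutation action on it, so that its canonical form really does coincide, up to relabelling, with $\Cnfeataug$. This is the same subtlety already present in \cref{thm:equivariant:two:arrays,thm:joint:equivariant:two:arrays}, compounded here by the simultaneous presence of vertex and edge features and by the two-level conditioning (edge outputs depend only on $(\Xn,\Xntwosymm)$, vertex outputs on $(\Xn,\Xntwosymm,\Yntwosymm)$). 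The two conditional independence hypotheses are precisely what prevents the cyclic computation-graph structure discussed after \cref{thm:finite:exch:seq:equivariant}; once the reduced arrays are in place, the remainder is a direct adaptation of the earlier proofs.
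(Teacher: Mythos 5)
Your proposal is correct and follows essentially the same route as the paper, which proves this result by deferring to the arguments of \cref{thm:equivariant:two:arrays,thm:joint:equivariant:two:arrays} (and, for the vertex features, the stabilizer argument of \cref{thm:finite:exch:seq:equivariant}): stabilizer subgroups, reduced arrays with the fixed rows, columns, and now vertex features broadcast, conditional independence via the invariance result, the chain rule under the assumed conditional independences, and noise outsourcing with a common function by equivariance. In fact your write-up is more detailed than the paper's proof, including the correct two-stage conditioning (edges given $(\Xn,\Xntwosymm)$, vertices given $(\Xn,\Xntwosymm,\Yntwosymm)$) that the paper leaves implicit.
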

\begin{proof}
  The proof, like that of \cref{thm:joint:equivariant:two:arrays}, is essentially the same as for \cref{thm:equivariant:two:arrays}. Incorporating vertex features requires that for any permutation $\perm\in\Symn$, the fixed elements of $\Xn$ be collected along with the fixed rows and columns of $\Xntwosymm$; the structure of the argument is identical.
\end{proof}

Equations \eqref{eq:aug:edge:rep} and \eqref{eq:aug:vtx:feature:rep} indicate that given an input $(\Xn,\Xntwosymm)$ and functional forms for $f_e$ and $f_v$, computation of $\Yntwosymm$ and $\Yn$ proceeds in two steps: first, compute the elements $\widetilde{Y}_{i,j}$ of $\Yntwosymm$; second, compute the vertex features $\Yn$ from $\Xn$, $\Xntwosymm$, and $\Yntwosymm$. Note that within each step, computations can be parallelized due to the conditional independence assumptions. 

The following examples from the literature are special cases of \cref{thm:joint:equivariant:aug:two:arrays}. 

\begin{example}[Graph-based structured prediction]
  \Citet{Herzig:etal:2018} considered the problem of deterministic permutation-equivariant structured prediction in the context of mapping images to scene graphs. In particular, for a weighted graph with edge features $\Xntwosymm$ and vertex features $\Xn$, those authors define a graph labeling function $f$ to be ``graph-permutation invariant'' (GPI) if $f(\perm\cdot(\Xn,\Xntwosymm)) = \perm\cdot f(\Xn,\Xntwosymm)$ for all $\perm\in\bbS_n$.\footnote{Although \citet{Herzig:etal:2018} call this invariance, we note that it is actually equivariance.} Furthermore, they implicitly set $\Yntwosymm = \Xntwosymm$, and assume that $\widetilde{X}_{i,i}=0$ for all $i\in [n]$ and that $\Xn$ is included in $\Xntwosymm$ (e.g., on the diagonal). The main theoretical result is that a graph labeling function is GPI if and only if 
  \begin{align*}
    [f(\Xn,\Xntwosymm)]_i = \rho(X_i, \textstyle\sum_j \alpha(X_j, \textstyle\sum_{k}\phi(X_j,\widetilde{X}_{j,k},X_k))) \;,
  \end{align*}
  for appropriately defined functions $\rho$, $\alpha$, and $\phi$. Inspection of the proof reveals that the second argument of $\rho$ (the sum of $\alpha$ functions) 
  is equivalent to a maximal invariant. In experiments, a particular GPI neural network architecture showed better sample efficiency for training, as compared with an LSTM with the inputs in random order and with a fully connected feed-forward network, each network with the same number of parameters.
\end{example}

\begin{example}[Message passing graph neural networks]
  \Citet{Gilmer:etal:2017} reviewed the recent literature on graph-based neural network architectures, and found that many of them fit in the framework of so-called message passing neural networks (MPNNs). MPNNs take as input a graph with vertex features and edge features (the features may be vector-valued, but for simplicity of notation we assume they are scalar-valued real numbers). Each neural network layer $\ell$ acts as a round of message passing between adjacent vertices, with the typically edge-features held fixed from layer to layer. In particular, denote the (fixed) input edge features by $\Xntwosymm$ and the computed vertex features of layer $\ell-1$ by $\Xn$; then the vertex features for layer $\ell$, $\Yn$, are computed as
  \begin{align*}
    Y_{i} = U_{\ell}\big( 
      X_i, 
      \textstyle\sum_{j\in [n]} \indicator_{\{ \widetilde{X}_{i,j} > 0 \}} M_{\ell}(X_i, X_j, \widetilde{X}_{i,j})
    \big) \;, \quad i \in [n] \;,
  \end{align*}
  for some functions $M_{\ell} : \bbR^3 \to \bbR$ and $U_{\ell} : \bbR^2 \to \bbR$. This is a deterministic special case of \eqref{eq:aug:vtx:feature:rep}. \Citet{Gilmer:etal:2017} note that \citet{Kearnes:etal:2016} also compute different edge features, $\widetilde{Y}_{i,j}$, in each layer. The updated edge features are used in place of $\widetilde{X}_{i,j}$ in the third argument of $M_{\ell}$ in the equation above, and are an example of the two-step implementation of \eqref{eq:aug:edge:rep}-\eqref{eq:aug:vtx:feature:rep}. 
\end{example}

\paragraph{Separate exchangeability with features} 
Separately exchangeable matrices with features, which may be regarded as representing a bipartite graph with vertex features, have a similar representation. Let $\Xnfeati$ and $\Xnfeatj$ denote the row and column features, respectively. Separate exchangeability for such data structures is defined in the obvious way. Furthermore, a similar conditional independence assumption is made for the output vertex features: $\Ynfeati \condind_{(\Xnfeati,\Xnfeatj,\Xntwo,\Yntwo)} \Ynfeatj$ and
\begin{align} \label{eq:sep:cond:ind}
  Y_{i}^{(m)} \condind_{(\Xnfeati,\Xnfeatj,\Xntwo,\Yntwo)} (\bfY_n^{(m)} \setminus Y_{i}^{(m)}) \;, \quad i\in [n_m], m\in\{1,2\} \;.
\end{align}
The representation result is stated here for completeness. To state the result, note that appropriate broadcasting of features requires that $\Xnfeati$ be broadcast over columns $\Xntwoj$, and $\Xnfeatj$ be broadcast over rows $\Xntwoi$. We denote this with the same broadcasting operator as above, $\bcast(\Xnfeati,\Xnfeatj,\Xntwo)$. 
The proof is similar to that of \cref{thm:joint:equivariant:aug:two:arrays}, and is omitted for brevity. 

\begin{theorem} \label{thm:sep:equivariant:aug:two:arrays}
  Suppose $(\Xnfeati,\Xnfeatj,\Xntwo)$ and $(\Ynfeati,\Ynfeatj,\Yntwo)$ are $\calX$- and $\calY$-valued vertex feature-augmented arrays, and that $(\Xnfeati,\Xnfeatj,\Xntwo)$ is separately exchangeable. Assume that the elements of $\Yntwo$ are mutually conditionally independent given $(\Xnfeati,\Xnfeatj,\Xntwo)$, and that $\Ynfeati$ and $\Ynfeatj$ satisfy \eqref{eq:sep:cond:ind}. Then $(\Ynfeati,\Ynfeatj,\Yntwo)$ is conditionally $\bbS_{\bfn_2}$-equivariant given $(\Xnfeati,\Xnfeatj,\Xntwo)$ if and only if there are measurable functions $f_e : [0,1] \times \calX^3 \times \augcspace_{\bcast,\bfn_2}(\calX) \to \calY$ and $f^{(m)}_v : [0,1] \times \calX \times \calX^{n_m} \times \calY^{n_m} \times \augcspace_{\bcast,\bfn_2}(\calX\times\calY) \to \calY$, for $m\in\{1,2\}$, such that
  \begin{align}
    {Y}_{i,j} & \equas f_e \big( \noise_{i,j}, {X}_{i,j}, X^{(1)}_i, X^{(2)}_j, \cboard_{\bcast\big(\stak(\Xnfeati,\Xntwoj),\stak(\Xnfeatj,\Xntwoi),\Xntwo\big)} \big) \;, \quad i \in [n_1], j \in [n_2] \label{eq:aug:sep:edge:rep}  \\ 
    Y_i^{(1)} & \equas f^{(1)}_v \big( \noise^{(1)}_i,X^{(1)}_i, \Xntwoi, \Yntwoi, \cboard_{\bcast\big(\Xnfeati,\stak(\Xnfeatj,\Xntwoi),\stak(\Xntwo,\Yntwo)\big)} \big) \;, \quad i \in [n_1] \label{eq:aug:sep:row:feature} \;, \\
    Y_j^{(2)} & \equas f^{(2)}_v \big( \noise^{(2)}_j,X^{(2)}_j, \Xntwoj, \Yntwoj, \cboard_{\bcast\big(\stak(\Xnfeati,\Xntwoj),\Xnfeatj,\stak(\Xntwo,\Yntwo)\big)} \big) \;, \quad j \in [n_2] \label{eq:aug:sep:col:feature} \;,
  \end{align}
  for \iid uniform random variables $((\noise_{i,j})_{i\in [n_1], j\in[n_2]}) \condind (\Xnfeati,\Xnfeatj,\Xntwo)$ and \\ $((\noise^{(1)}_i)_{i\in [n_1]},(\noise^{(2)}_j)_{j\in [n_2]}) \condind (\Xnfeati,\Xnfeatj,\Xntwo,\Yntwo)$. 
\end{theorem}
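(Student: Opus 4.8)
The plan is to follow the template already used for \cref{thm:equivariant:two:arrays} and \cref{thm:joint:equivariant:aug:two:arrays}: set up a chain of conditional independence statements using stabilizer subgroups and ``removal-and-broadcast'' reductions, then invoke noise outsourcing. By \cref{prop:joint:symmetry}, conditional $\bbS_{\bfn_2}$-equivariance of $(\Ynfeati,\Ynfeatj,\Yntwo)$ given the separately exchangeable input $(\Xnfeati,\Xnfeatj,\Xntwo)$ is equivalent to joint $\bbS_{\bfn_2}$-invariance of the full tuple. The reverse implication of the theorem is the easy direction: the broadcasting operators and canonical forms in \eqref{eq:aug:sep:edge:rep}--\eqref{eq:aug:sep:col:feature} are built to be $\bbS_{\bfn_2}$-equivariant and the \iid noise preserves the symmetry, so the law constructed from $f_e,f_v^{(1)},f_v^{(2)}$ is jointly invariant; I would dispatch it in a sentence. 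The work is in the forward direction, which splits into an edge step and a vertex step.

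\emph{Edge step.} Fix $(i,j)\in [n_1]\times[n_2]$ and consider the stabilizer $\bbS_{\bfn_2}^{(i,j)}\subset\bbS_{\bfn_2}$, which is homomorphic to $\bbS_{\bfn_2-\bfOne}$ and fixes $X_{i,j}$, $Y_{i,j}$, the row feature $X_i^{(1)}$, and the column feature $X_j^{(2)}$, while keeping row $i$ (resp.\ column $j$) as row $i$ (resp.\ column $j$). Exactly as in the proof of \cref{thm:equivariant:two:arrays}, remove row $i$ and column $j$ from $\Xntwo$, broadcast the removed entries over the surviving rows and columns, and additionally broadcast $X_i^{(1)}$, $X_j^{(2)}$ and the surviving entries of $\Xnfeati,\Xnfeatj$ appropriately; the resulting augmented $(n_1-1)\times(n_2-1)$ array inherits separate exchangeability under $\bbS_{\bfn_2-\bfOne}$. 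Marginalizing to $(X_{i,j},Y_{i,j})$ and this reduced array, \cref{thm:invariant:two:arrays} gives $(X_{i,j},Y_{i,j})\condind_{\cboard}(\text{reduced array})$ with $\cboard$ its canonical form; conditioning further on $X_{i,j},X_i^{(1)},X_j^{(2)}$ — which together with $\cboard$ is equivalent to conditioning on $X_{i,j},X_i^{(1)},X_j^{(2)}$ and the augmented canonical form in \eqref{eq:aug:sep:edge:rep} — yields $Y_{i,j}\condind_{(\cdot)}(\Xnfeati,\Xnfeatj,\Xntwo)$ marginally. The assumed mutual conditional independence of the elements of $\Yntwo$ given the input plus the chain rule \citep[][Prop.~6.8]{Kallenberg:2002} upgrades this to a joint statement, and \cref{lem:noise:out:suff} produces $f_{e,(i,j)}$; equivariance forces one common $f_e$, giving \eqref{eq:aug:sep:edge:rep} with $(\noise_{i,j})\condind(\Xnfeati,\Xnfeatj,\Xntwo)$.

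\emph{Vertex step.} Now condition additionally on $\Yntwo$. For fixed $i\in[n_1]$, take the subgroup of $\bbS_{\bfn_2}$ fixing the row index $i$, namely all of $\bbS_{n_2}$ together with the stabilizer of $i$ in $\bbS_{n_1}$ (isomorphic to $\bbS_{n_1-1}\times\bbS_{n_2}$), which fixes $X_i^{(1)},Y_i^{(1)}$ and keeps row $i$ of both $\Xntwo$ and $\Yntwo$ as row $i$. Remove row $i$ from $\Xnfeati,\Xntwo,\Yntwo$, broadcast the removed entries (including $X_i^{(1)}$) over the residual structure, and obtain a reduced array that is separately exchangeable under the residual $\bbS_{n_1-1}\times\bbS_{n_2}$ action. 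Applying the separately exchangeable form of \cref{thm:invariant:two:arrays} with $Y_i^{(1)}$ as the output gives $(X_i^{(1)},\Xntwoi,\Yntwoi,Y_i^{(1)})\condind_{\cboard'}(\text{reduced array})$; conditioning on $X_i^{(1)},\Xntwoi,\Yntwoi$ and absorbing $\cboard'$ into the canonical form in \eqref{eq:aug:sep:row:feature} gives $Y_i^{(1)}\condind_{(\cdot)}(\Xnfeati,\Xnfeatj,\Xntwo,\Yntwo)$ marginally. Assumption \eqref{eq:sep:cond:ind} and the chain rule then make this joint over $i$, \cref{lem:noise:out:suff} yields $f_v^{(1)}$ with $(\noise_i^{(1)})\condind(\Xnfeati,\Xnfeatj,\Xntwo,\Yntwo)$, and equivariance makes the function index-independent. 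The identical argument on columns produces $f_v^{(2)}$ and \eqref{eq:aug:sep:col:feature}.

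\emph{Main obstacle.} The conceptual content is already carried by \cref{thm:equivariant:two:arrays} and \cref{thm:joint:equivariant:aug:two:arrays}, so the real difficulty is bookkeeping: one must define the reduced/augmented arrays so that they faithfully carry along \emph{all} the broadcast fixed data — matrix rows and columns \emph{and} the row- and column-vertex features — and so that the residual subgroup action is precisely a separately exchangeable action to which \cref{thm:invariant:two:arrays} applies verbatim. The second subtlety is the two-layer conditioning: the edge step conditions only on the input while the vertex step conditions also on $\Yntwo$, so the two distinct conditional independence hypotheses (mutual independence of the edge outputs, and \eqref{eq:sep:cond:ind} for the vertex outputs) must be applied in the correct order to obtain the joint identity \eqref{eq:aug:sep:edge:rep}--\eqref{eq:aug:sep:col:feature} simultaneously rather than merely coordinatewise. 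Nothing is deep, but this is where all the care goes, which is presumably why the authors chose to omit the full argument.
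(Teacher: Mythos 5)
Your proposal is correct and follows essentially the same route the paper prescribes: the paper omits this proof, stating only that it is "similar to that of" \cref{thm:joint:equivariant:aug:two:arrays}, whose proof in turn reduces to the stabilizer-plus-broadcast reduction, \cref{thm:invariant:two:arrays}, the chain rule for conditional independence, and \cref{lem:noise:out:suff} — exactly the two-layer (edge step, then vertex step conditioning additionally on $\Yntwo$) argument you give, including the correct ordering of the two conditional independence hypotheses. The only refinement worth noting is that in the vertex step the row entries $\Xntwoi,\Yntwoi$ are not fixed by the stabilizer of row $i$ (column permutations rearrange them), so they enter as broadcast data in the reduced array rather than as fixed outputs, and are then moved into the conditioning set by standard conditional-independence manipulations — which is what your construction implicitly does.
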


\section{Discussion}
\label{sec:discussion}

The probabilistic approach to symmetry has allowed us to draw on tools from an area that is typically outside the purview of the deep learning community. Those tools shed light on the underlying structure of previous work on invariant neural networks, and expand the scope of what is possible with such networks. Moreover, those tools place invariant neural networks in a broader statistical context, making connections to the fundamental concepts of sufficiency and adequacy. 

To conclude, we give some examples of other data structures to which the theory developed in the present work could be applied and describe some questions prompted by this work and by others.

\subsection{Other Exchangeable Structures}
\label{sec:other}

The results in \cref{sec:finite:exch:seq} can be adapted to other exchangeable structures in a straightforward manner. We briefly describe two settings; \citet{Orbanz:Roy:2015} survey a number of other exchangeable structures from statistics and machine learning to which this work could apply. 

\paragraph{Edge-exchangeable graphs} 
\Citet{Cai:etal:2016,Williamson:2016,Crane:Dempsey:2018} specified generative models for network data as an exchangeable sequence of edges, $\mathbf{E}_n = ((u,v)_1,\dotsc,(u,v)_n)$. The sequence theory from \cref{sec:finite:exch:seq} applies, rather than the array theory from \cref{sec:finite:exch:arrays}, which applies to vertex-exchangeable graphs. However, incorporating vertex features into edge-exchangeable models would require some extra work, as a permutation acting on the edge sequence has a different (random) induced action on the vertices; we leave it as an interesting problem for future work.

The model of \citet{Caron:Fox:2017} is finitely edge-exchangeable when conditioned on the random number of edges in the network. Therefore, the sequence theory could be incorporated into inference procedures for that model, with the caveat that the neural network architecture would be required to accept inputs of variable size and the discussion from \cref{sec:var:size} would be relevant.

\paragraph{Markov chains and recurrent processes} 
A Markov chain $\bfZ_n:=(Z_1,Z_2,\dotsc,Z_n)$ on state-space $\calZ$ has exchangeable sub-structure. In particular, define a \emph{$z$-block} as a sub-sequence of $\Zn$ that starts and ends on some state $z \in \calZ$. Clearly, the joint probability
\begin{align*}
  P_{\Zn}(\Zn) = P_{Z_1}(Z_1) \prod_{i=2}^n Q(Z_i \mid Z_{i-1})
\end{align*}
is invariant under all permutations of $Z_1$-blocks (and of any other $z$-blocks for $z\in \Zn$). Denote these blocks by $(B_{Z_1,j})_{j\geq 1}$. \Citet{Diaconis:Freedman:1980a} used this notion of \emph{Markov exchangeability} to show that all recurrent processes on countable state-spaces are mixtures of Markov chains, and \citet{Bacallado:Favaro:Trippa:2013} used similar ideas to analyze reversible Markov chains on uncountable state-spaces.

For a finite Markov chain, the initial state, $Z_1$, and the $Z_1$-blocks are sufficient statistics. Equivalently, $Z_1$ and the empirical measure of the $m \leq n$ $Z_1$-blocks,
\begin{align*}
  \empirical_{\Zn}(\argdot) = \sum_{j=1}^m \delta_{B_{Z_1,j}}(\argdot) \;,
\end{align*}
plays the same role as $\empirical_{\Xn}$ for an exchangeable sequence. (If $\Zn$ is the prefix of an infinite, recurrent process, then $Z_1$ and the empirical measure of transitions are sufficient.) It is clear that the theory from \cref{sec:finite:exch:seq} can be adapted to accommodate Markov chains; indeed, \citet{Wiqvist:etal:2019} used this idea to learn summary statistics from a Markov chain for use in Approximate Bayesian Computation.

\subsection{Open Questions}
\label{sec:open:questions}

\paragraph{More flexible conditional independence assumptions} 
The simplicity of results on functional representations of equivariant conditional distributions relied on conditional independence assumptions like $Y_i \condind_{\Xn} (\Yn \setminus Y_i)$. As discussed in \cref{sec:equivariant:seq}, additional flexibility could be obtained by assuming the existence of a random variable $W$ such that $Y_i \condind_{(\Xn, W)} (\Yn \setminus Y_i)$ holds, and learning $W$ for each layer. $W$ could be interpreted as providing additional shared ``context'' for the input-output relationship, a construct that has been useful in recent work \citep{Edwards:Storkey:2017,Kim:etal:2019}.

\paragraph{Choice of pooling functions}
The results here are quite general, and say nothing about what other properties the function classes under consideration should have. For example, different symmetric pooling functions may lead to very different performance. This seems to be well understood in practice; for example, the pooling operation of element-wise mean of the top decile in \citet{Chan:etal:2018} is somewhat unconventional, but appears to have been chosen based on performance. Recent theoretical work by \citet{Xu:etal:2019} studies different pooling operations in the context of graph discrimination tasks with graph neural networks, a problem also considered from a slightly different perspective by \citet{ShaweTaylor:1993}. \Citet{Wagstaff:2019aa} take up the problem for functions defined on sets.

\paragraph{Learning and generalization} 
Our results leave open questions pertaining to learning and generalization. However, they point to some potential approaches, one of which we sketch here. \Citet{ShaweTaylor:1991,ShaweTaylor:1995} applied PAC theory to derive generalization bounds for feed-forward networks that employ a symmetry-induced weight-sharing scheme; these bounds are improvements on those for standard fully-connected multi-layer perceptrons. Weight-sharing might be viewed as a form of pre-training compression; recent work uses PAC-Bayes theory to demonstrate the benefits of compressibility of trained networks \citep{Arora:etal:2018,Zhou:etal:2019}, and \citet{Lyle:etal:2020} make some initial progress along these lines.

\section*{Acknowledgments}
The authors are grateful to the editor and three anonymous referees for detailed comments that helped improve the paper. The authors also thank Bruno Ribeiro for pointing out an error, and to Hongseok Yang for pointing out typos and an unnecessary condition for \cref{thm:kall:equiv}, in an earlier draft. BBR's and YWT's research leading to these results has received funding from the European Research Council under the European Union's Seventh Framework Programme (FP7/2007-2013) ERC grant agreement no.\ 617071.

\newpage


\crefalias{section}{appsec} 

\begin{appendix}

\section{Proof of Lemma 5}
\label{sec:proof:noise:out:suff}

d-separation is a statement of conditional independence and therefore the proof of \cref{lem:noise:out:suff} is a straightforward application of the following basic result about conditional independence, which appears (with different notation) as Proposition 6.13 in \citet{Kallenberg:2002}.
\begin{lemma}[Conditional independence and randomization] \label{lem:cond:ind}
  Let $X,Y,Z$ be random elements in some measurable spaces $\calX,\calY,\calZ$, respectively, where $\calY$ is Borel. Then $Y\condind_{Z} X$ if and only if $Y \equas f(\noise,Z)$ for some measurable function $f : [0,1] \times \calZ \to \calY$ and some uniform random variable $\noise\condind (X,Z)$.
\end{lemma}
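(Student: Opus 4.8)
The plan is to prove the equivalence by treating the two implications separately; the forward direction ($\Rightarrow$) is the substantive one and goes through a regular-conditional-distribution / disintegration argument, while the converse is routine. Throughout I use that the background probability space is rich enough to carry an auxiliary independent uniform variable.

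\emph{Converse ($\Leftarrow$).} Suppose $Y \equas f(\noise,Z)$ with $\noise\sim\Unif[0,1]$ and $\noise\condind(X,Z)$. Independence of $\noise$ from the pair $(X,Z)$ gives in particular $\noise\condind_Z X$, and conditional independence given $Z$ is preserved under measurable maps of $(\noise,Z)$, i.e.\ $\noise\condind_Z X$ implies $h(\noise,Z)\condind_Z X$ for any measurable $h$. Taking $h=f$ yields $Y\condind_Z X$.

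\emph{Forward direction ($\Rightarrow$).} Assume $Y\condind_Z X$. Since $\calY$ is Borel I may assume it is a Borel subset of $[0,1]$, applying a Borel isomorphism and undoing it at the end. First I would take a regular version $\mu$ of the conditional law of $Y$ given $Z$, so $\bbP(Y\in\argdot\mid Z)\equas\mu(\argdot,Z)$; by $Y\condind_Z X$ the same kernel is also a version of $\bbP(Y\in\argdot\mid X,Z)$. Next I would randomize this kernel: the conditional quantile transform (generalized inverse CDF of $\mu(\argdot,z)$) gives a jointly measurable $f:[0,1]\times\calZ\to\calY$ with $\mathrm{Leb}\circ f(\argdot,z)^{-1}=\mu(\argdot,z)$ for every $z$. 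To manufacture the noise realizing $f(\noise,Z)=Y$ exactly, introduce $\vartheta\sim\Unif[0,1]$ with $\vartheta\condind(X,Y,Z)$, disintegrate Lebesgue measure on $[0,1]$ along $u\mapsto f(u,z)$ to obtain a probability kernel $\lambda_{z,y}$, jointly measurable in $(z,y)$, concentrated on $\{u:f(u,z)=y\}$ and satisfying $\int\lambda_{z,y}(\argdot)\,\mu(dy,z)=\mathrm{Leb}$, and set $\noise:=G(Z,Y,\vartheta)$ where $G(z,y,\argdot)$ transforms $\Unif[0,1]$ into $\lambda_{z,y}$. Then given $(X,Y,Z)$ the law of $\noise$ is $\lambda_{Z,Y}$; integrating out $Y$ using $\bbP(Y\in\argdot\mid X,Z)=\mu(\argdot,Z)$ together with the displayed identity for $\lambda$ gives $\bbP(\noise\in\argdot\mid X,Z)=\mathrm{Leb}$, which is exactly ``$\noise\sim\Unif[0,1]$ and $\noise\condind(X,Z)$'', and since $\lambda_{Z,Y}$ sits on $\{u:f(u,Z)=Y\}$ we get $f(\noise,Z)\equas Y$. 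Composing with the fixed Borel isomorphism recovers the general statement; \cref{lem:noise:out:suff} then follows by taking $Z=S(X)$ and invoking the d-separation hypothesis $Y\condind_{S(X)}X$.

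The step I expect to be the main obstacle is the construction of the disintegration kernel $\lambda_{z,y}$ with joint measurability in $(z,y)$ and the identity $\int\lambda_{z,y}\,\mu(dy,z)=\mathrm{Leb}$; this is precisely where the standard-Borel hypotheses are used (existence of regular conditional distributions, and existence of measurable disintegrations of a measure along a measurable map between Borel spaces). Everything else is bookkeeping with conditional laws. An alternative route would bypass the explicit disintegration by invoking Kallenberg's transfer theorem: first get the distributional representation $(Z,Y)\equdist(Z,f(U,Z))$ with $U\perp Z$ from the randomized kernel, then transfer it to an almost-sure equality on a space also carrying $X$; but the measurability content is essentially the same.
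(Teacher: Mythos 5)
Your argument is correct, but it is worth noting that the paper itself does not prove this lemma at all: it is quoted verbatim (up to notation) from \citet{Kallenberg:2002}, Proposition 6.13, and the appendix consists of that citation plus the observation that \cref{lem:noise:out:suff} follows by taking $Z=S(X)$. What you have written is essentially a reconstruction of the standard proof of that result. Your converse direction is the routine one (independence of $\noise$ from $(X,Z)$ gives $\noise\condind_Z X$, hence $f(\noise,Z)\condind_Z X$), and your forward direction is sound and puts the hypothesis $Y\condind_Z X$ exactly where it belongs: after reducing $\calY$ to a Borel subset of $[0,1]$, the quantile transform of a regular version $\mu(\argdot,Z)$ of $P(Y\in\argdot\mid Z)$ gives the jointly measurable $f$, and the conditional independence is used precisely when you replace $P(Y\in\argdot\mid X,Z)$ by $\mu(\argdot,Z)$ in the computation showing $P(\noise\in\argdot\mid X,Z)=\mathrm{Leb}$, which yields both uniformity and $\noise\condind(X,Z)$, while concentration of $\lambda_{Z,Y}$ on $\{u: f(u,Z)=Y\}$ gives $Y\equas f(\noise,Z)$. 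Two small caveats, both harmless in the standard Borel setting you assume: the quantile map may take values outside $\calY$ only on a Lebesgue-null set of $u$ for each $z$ (redefine there), and the disintegration identities $\int\lambda_{z,y}\,\mu(dy,z)=\mathrm{Leb}$ and $\lambda_{z,y}(\{u:f(u,z)=y\})=1$ hold only for almost every $(z,y)$ under the joint law, which is all the argument needs since they are only evaluated at $(Z,Y)$. You could also sidestep the abstract disintegration entirely in this real-valued reduction by the explicit distributional transform $\noise = F_Z(Y-) + \vartheta\,(F_Z(Y)-F_Z(Y-))$ with $\vartheta\condind(X,Y,Z)$ uniform, where $F_z$ is the distribution function of $\mu(\argdot,z)$; this gives the same kernel $\lambda_{z,y}$ in closed form and makes the joint measurability immediate.
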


\section{Orbit Laws, Maximal Invariants, and Representative Equivariants: Proofs for Section 4}
\label{sec:technical}

The results in \cref{sec:connecting:symmetries} (\cref{thm:group:invariant:rep,thm:kall:equiv,thm:suff:maximal:invariant}) rely on the disintegration of $\grp$-invariant distributions $P_X$ into a distribution over orbits and a distribution on each orbit generated by the normalized Haar measure (the orbit law). This disintegration, established in \cref{lem:maximal:invariant}, is the mathematically precise version of the intuitive explanation given in \cref{sec:connecting:symmetries}, just before \cref{thm:group:invariant:rep}. We apply the result, along with a key conditional independence property that it implies, in order to obtain \cref{thm:group:invariant:rep,thm:kall:equiv,thm:suff:maximal:invariant}.

We require the following definition, taken from \citet{Eaton:1989}.
\begin{definition}
  A \emph{measurable cross-section} is a set $\calC\subset\calX$ with the following properties:
  \begin{enumerate}[label=(\roman*)]
    \item $\calC$ is measurable.
    \item For each $x\in\calX$, $\calC \cap \grp\cdot x$ consists of exactly one point, say $\calC_x$.
    \item The function $t : \calX \to \calC$ defined by $t(x) = \calC_x$ is $\borel_{\calX}$-measurable when $\calC$ has $\sigma$-algebra $\borel_{\calC} = \{ B \cap \calC | B \in \borel_{\calX} \}$.
  \end{enumerate}
\end{definition}

One may think of a measurable cross-section as consisting of a single representative from each orbit of $\calX$ under $\grp$. For a compact group acting measurably on a Borel space, there always exists a (not necessarily unique) measurable cross-section. For non-compact groups or more general spaces, the existence of a measurable cross-section as defined above is not guaranteed, and more powerful technical tools are required \citep[e.g.][]{Andersson:1982,Schindler:2003,Kallenberg:2017}. 

\paragraph{Orbit laws} 
Intuitively, conditioned on $X$ being on a particular orbit, it should be possible to sample a realization of $X$ by first sampling a random group element $G$, and then applying $G$ to a representative element of the orbit. 
More precisely, let $\normhaar$ be the normalized Haar measure of $\grp$, which is the unique left- and right-invariant measure on $\grp$ such that $\normhaar(\grp)=1$.  
Let ${M : \calX \to \calS}$ be a maximal invariant. For any $m\in \text{range}(M)$, let $c_{m} = M^{-1}(m)$ be the element of $\calC$ for which $M(c_m) = m$; such an element always exists. Note that the set of elements in $\calS$ that do not correspond to an orbit of $\calX$, $\calS_{\emptyset} := \{ m ; M^{-1}(m) = \emptyset  \}$, has measure zero under $P_X\circ M^{-1}$. 

For any $B\in\borel_{\calX}$ and $m\in\calS\setminus\calS_{\emptyset}$, define the \emph{orbit law} as
\begin{align} \label{eq:orbit:law}
  \orbitlaw_{m}(B) = \int_{\grp} \delta_{g\cdot c_{m}}( B ) \normhaar(dg) = \normhaar (\{ g ; g\cdot c_m \in B \} ) \;.
\end{align}
(For $m\in\calS_{\emptyset}$, $\orbitlaw_{m}(B)=0$.) 
Observe that, in agreement with the intuition above, a sample from the orbit law can be generated by sampling a random group element $G\sim\normhaar$, and applying it to $c_{m}$. The orbit law inherits the invariance of $\normhaar$ and acts like the uniform distribution on the elements of the orbit, up to fixed points.\footnote{The orbit law only coincides with the uniform distribution on the orbit if the action of $\grp$ on the orbit is \emph{regular}, which corresponds to the action being transitive and \emph{free} (or fixed-point free). Since by definition the action of $\grp$ is transitive on each orbit, the orbit law is equivalent to the uniform distribution on the orbit if and only if $\grp$ acts freely on each orbit; if the orbit has any fixed points (i.e., $g\cdot x = x$ for some $x$ in the orbit and $g\neq e$), then the fixed points will have higher probability mass under the orbit law.\label{fn:orbit:uniform}} 
For any integrable function $\varphi: \calX \to \bbR$, the expectation with respect to the orbit law is
\begin{align}
  \orbitlaw_{m}[\varphi] = \int_{\calX} \varphi(x) \orbitlaw_{m}(dx) 
    = \int_{\grp} \varphi(g \cdot c_{m}) \normhaar(dg)  \;.
\end{align} 
The orbit law arises in the disintegration of any $\grp$-invariant distribution $P_X$ as the fixed distribution on each orbit.
\begin{lemma} \label{lem:maximal:invariant}
  Let $\calX$ and $\calS$ be Borel spaces, $\grp$ a compact group acting measurably on $\calX$, and $M : \calX \to \calS$ a maximal invariant on $\calX$ under $\grp$. If $X$ is a random element of $\calX$, then its distribution $P_X$ is $\grp$-invariant if and only if
  \begin{align} \label{eq:suff:maximal:invariant}
    P_X(X \in \argdot \mid M(X) = m) = \orbitlaw_{m}(\argdot) = q(\argdot,m) \;,
  \end{align}
  for some Markov kernel $q : \borel_{\calX} \times \calS \to \bbR_+$. 
  If $P_X$ is $\grp$-invariant and $Y$ is any other random variable, then $P_{Y|X}$ is $\grp$-invariant if and only if $Y \condind_{M(X)} X$.
\end{lemma}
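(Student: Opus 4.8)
The plan is to establish the two claims in sequence, using the measurable cross-section $\calC$ and the function $t : \calX \to \calC$ sending each $x$ to its orbit representative, together with the normalized Haar measure $\normhaar$ on the compact group $\grp$.

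\textit{First claim (disintegration).} For the ``if'' direction, I would assume \eqref{eq:suff:maximal:invariant} and compute, for any bounded measurable $\varphi : \calX \to \bbR$ and any $g \in \grp$,
\begin{align*}
  \bbE[\varphi(g\cdot X)]
  = \int_{\calS} \orbitlaw_m[\varphi(g \cdot \argdot)]\, (P_X \circ M^{-1})(dm)
  = \int_{\calS} \int_{\grp} \varphi(g\cdot h\cdot c_m)\, \normhaar(dh)\, (P_X\circ M^{-1})(dm) \;,
\end{align*}
and then use left-invariance of $\normhaar$ (substituting $h \mapsto g^{-1}h$) to see that the inner integral equals $\orbitlaw_m[\varphi]$, hence $\bbE[\varphi(g\cdot X)] = \bbE[\varphi(X)]$; since $\varphi$ and $g$ were arbitrary this gives $g\cdot X \equdist X$. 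For the ``only if'' direction, I would show that when $P_X$ is $\grp$-invariant, the regular conditional distribution $P_X(\,\cdot \mid M(X) = m)$ must be supported on the orbit $\grp \cdot c_m$ (because $M$ is constant on orbits and takes distinct values on distinct orbits) and must be invariant under the action of $\grp$ restricted to that orbit; the orbit law $\orbitlaw_m$ is the unique such probability measure, which follows from uniqueness of Haar measure applied to the transitive action on the orbit (the stabilizer being a closed, hence compact, subgroup, so that the orbit is $\grp / \grp_{c_m}$ with its unique invariant probability measure). The Markov kernel $q(\argdot, m) := \orbitlaw_m(\argdot)$ is jointly measurable in $(\argdot, m)$ because $m \mapsto c_m$ is measurable (via $t$ and a measurable selection of $M^{-1}$ on $\calS \setminus \calS_\emptyset$) and $(g, x) \mapsto g\cdot x$ is measurable by assumption, so \eqref{eq:orbit:law} defines a kernel; on $\calS_\emptyset$, which is $P_X\circ M^{-1}$-null, set it to $0$. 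This is essentially a statement that a compact group acting measurably on a Borel space admits such disintegrations; I would cite \citet[][Ch.~7]{Kallenberg:2017} for the technical underpinning rather than reprove it.

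\textit{Second claim (conditional independence).} Assume $P_X$ is $\grp$-invariant. For the ``if'' direction, suppose $Y \condind_{M(X)} X$. Then for $A \in \borel_{\calY}$,
\begin{align*}
  P(Y \in A \mid X) = P(Y \in A \mid M(X)) \;,
\end{align*}
and since $M$ is $\grp$-invariant, $M(g\cdot X) = M(X)$, so $P(Y\in A \mid g\cdot X) = P(Y\in A \mid M(g\cdot X)) = P(Y\in A\mid M(X)) = P(Y\in A\mid X)$, which is exactly $\grp$-invariance of $P_{Y|X}$ in the sense of \eqref{eq:prob:invariance}. Conversely, suppose $P_{Y|X}$ is $\grp$-invariant. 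By \cref{prop:joint:symmetry}(i), $(g\cdot X, Y) \equdist (X, Y)$ for all $g\in\grp$. I would then use the disintegration: sample $X$ by first drawing $M(X) = m$, then drawing $G \sim \normhaar$ independently and setting $X = G\cdot c_m$; joint invariance forces the conditional distribution of $Y$ given $(m, G)$ to be invariant under $G \mapsto g\cdot G$ for every $g$, hence (by averaging against $\normhaar$ and using that $\normhaar$ is the unique invariant probability on $\grp$) the conditional law of $Y$ cannot depend on $G$, only on $m = M(X)$. Therefore $P(Y \in \argdot \mid X) = P(Y\in\argdot\mid M(X))$, i.e.\ $Y\condind_{M(X)} X$.

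\textit{Main obstacle.} The routine parts are the Fubini/Haar-invariance computations. The real work is the ``only if'' direction of the first claim — rigorously identifying the regular conditional distribution on each orbit with $\orbitlaw_m$ and verifying joint measurability of the kernel $q$, which requires the measurable cross-section machinery and a measurable selection $m \mapsto c_m$. This is where compactness of $\grp$ (guaranteeing a finite invariant measure on each orbit and a measurable cross-section) and the standard-Borel assumptions are essential; for the write-up I would lean on \citet[][Ch.~7]{Kallenberg:2017} and \citet{Eaton:1989} for these structural facts and keep the self-contained argument at the level of the Haar-averaging computation.
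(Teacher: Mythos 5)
Your ``if'' directions for both claims, and your backward direction of the disintegration, are fine and match what the paper treats as routine. The genuine gap is in your ``only if'' argument for the second claim. The phrase ``the conditional distribution of $Y$ given $(m,G)$'' presupposes a joint coupling of $(G,M(X),Y)$ in which $X=G\cdot c_{M(X)}$ with $G\sim\normhaar$ and $G\condind M(X)$; but the disintegration in the first claim is a statement about the law of $X$ alone, and $G$ is not a measurable function of $X$ (when the stabilizer of $c_m$ is nontrivial, the set $\{g: g\cdot c_{M(X)}=X\}$ is a whole coset). So such a coupling must first be constructed, and you must then prove that joint invariance $(g\cdot X,Y)\equdist(X,Y)$ transfers to invariance of the law of $(G,M(X),Y)$ under $G\mapsto g\cdot G$ simultaneously for all $g$ (not merely a.e.\ for each fixed $g$). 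That construction-plus-equivariance is precisely the nontrivial content --- it is what representative equivariants and Kallenberg's inversion kernels are for --- so the step ``the conditional law of $Y$ cannot depend on $G$'' assumes what needs to be proved. The paper closes this much more cheaply: since conditional invariance of $P_{Y|X}$ is equivalent (\cref{prop:joint:symmetry}) to invariance of the joint law under $(x,y)\mapsto(g\cdot x,y)$, one applies the first claim to $\grp$ acting on $\calX\times\calY$ with trivial action on $\calY$; then $(M(X),Y)$ is a maximal invariant of that action, its orbit law factorizes as $\orbitlaw_{M(X)}\otimes\delta_Y$, hence the conditional law of $X$ given $(M(X),Y)$ equals $\orbitlaw_{M(X)}$, which does not involve $Y$, and $Y\condind_{M(X)}X$ follows. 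Alternatively you can avoid any coupling by repeating the first-claim Haar-averaging computation with the bounded weight $\indicator_B(Y)$ inserted: invariance plus Fubini gives $\bbE[\indicator_B(Y)\psi(X)]=\bbE\big[\indicator_B(Y)\,\orbitlaw_{M(X)}[\psi]\big]$ for all bounded measurable $\psi$, and applying the same identity with $\indicator_B(Y)$ replaced by $P(Y\in B\mid M(X))$ yields the desired conditional independence directly.

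On the first claim, your forward direction (conditional law supported on the orbit, invariant there, identified with $\orbitlaw_m$ by uniqueness of the invariant probability on $\grp/\grp_{c_m}$) is a legitimate Eaton-style alternative, but it is heavier than needed and leaves two technical debts you only gesture at: a.e.-invariance of the conditional kernel simultaneously over all $g$, and the measurable selection $m\mapsto c_m$ together with the Borel identification of each orbit with a homogeneous space that underlies the uniqueness claim; deferring these to \citet{Kallenberg:2017} and \citet{Eaton:1989} is defensible but should be stated as such. The paper bypasses all of it with a single computation: taking $G\sim\normhaar$ with $G\condind X$, Fubini gives $\bbE[\varphi(X);A]=\bbE[\varphi(G\cdot X);A]=\bbE[\orbitlaw_{M(X)}[\varphi];A]$ for every $A\in\sigma(M(X))$, which is exactly the statement that $\orbitlaw_{M(X)}$ is a version of the conditional law, with kernel measurability supplied by the Borel assumption on $\calS$. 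If you rewrite your proof around that computation (and the product-space trick for the second claim), the cross-section machinery becomes unnecessary except for defining $\orbitlaw_m$ itself.
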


\begin{proof} 
  For the first claim, a similar result appears in \citet[][Ch.\ 4]{Eaton:1989}. For completeness, a slightly different proof is given here. 
  Suppose that $P_X$ is $\grp$-invariant. Then $(g\cdot X,M) \equdist (X,M)$ for all $g\in\grp$. By Fubini's theorem, the statement is true even for a random group element $G \condind X$. Let ${G}$ be a random element of $\grp$, sampled from $\normhaar$. Then for any measurable function $\varphi : \calX \to \bbR_+$, and for any set $A\in\sigma(M)$, 
  \begin{align*}
    \bbE[\varphi(X); A] & = \bbE[\varphi({G}\cdot X); A] = \bbE[ \bbE[ \varphi({G}\cdot X) \mid X ] ; A  ]  = \bbE[ \orbitlaw_{M(X)}[\varphi]; A ] \;,
  \end{align*}
  which establishes the first equality of \eqref{eq:suff:maximal:invariant}; the second equality follows from \citet[][Thm.~6.3]{Kallenberg:2002} because $\calS$ is a Borel space, as is $\calX$. 
  Conversely, suppose that \eqref{eq:suff:maximal:invariant} is true. $P_X(X\in\argdot) = \bbE[\orbitlaw_{M}(\argdot)]$, where the expectation is taken with respect to the distribution of $M$. $\orbitlaw_{M}(\argdot)$ is $\grp$-invariant for all $M\in\calS$, and therefore so is the marginal distribution $P_X$.

  For the second claim, suppose $Y$ is such that $P_{Y|X}$ is $\grp$-invariant, which by \cref{prop:joint:symmetry} is equivalent to $(g\cdot X, Y) \equdist (X,Y)$ for all $g\in\grp$. The latter is equivalent to $g\cdot(X,Y)\equdist (X,Y)$, with $g\cdot Y = Y$ almost surely, for all $g\in\grp$. Therefore, $\grp\cdot Y = \{Y\}$ almost surely, and $\widetilde{M}(X,Y) := (M(X), Y)$ is a maximal invariant of $\grp$ acting on $\calX\times\calY$. Therefore, $\orbitlaw_{\widetilde{M}(X,Y)}(A\times B) = \orbitlaw_{M(X)}(A) \delta_Y(B)$, for $A \in \borel_{\calX}$ and $B\in\borel_{\calY}$, and 
  \begin{align*}
    P_X(X \in \argdot \mid M(X) = m, Y) = \orbitlaw_{m}(\argdot) = P_X(X\in\argdot \mid M(X) = m) \;,
  \end{align*}
  which implies $Y \condind_{M(X)} X$. The converse is straightforward to check.
\end{proof}

Note that \eqref{eq:suff:maximal:invariant} is equivalent to the disintegration
\begin{align*}
  P_X( X \in \argdot ) = \int_{\calS} \orbitlaw_m(\argdot) \nu(dm) = \int_{\calC} \orbitlaw_{M(x)}(\argdot) \mu(dx) \;,
\end{align*}
for some maximal invariant $M : \calX \to \calS$ and probability measures $\nu$ on $\calS$ and $\mu$ on the measurable cross-section $\calC$. This type of statement is more commonly encountered \citep[e.g.,][Ch.\ 4-5]{Eaton:1989} than $\eqref{eq:suff:maximal:invariant}$, but the form of \eqref{eq:suff:maximal:invariant} emphasizes the role of the orbit law as the conditional distribution of $X$ given $M(X)$, which is central to the development of ideas in \cref{sec:sufficiency:adequacy}.

\paragraph{Example: orbit law of an exchangeable sequence}
The orbit law of an exchangeable sequence $\Xn$ is also known as the, \emph{urn law}. It is defined as a probability measure on $\calX^n$ corresponding to permuting the elements of $\empirical_{\Xn}$ according to a uniformly sampled random permutation:
\begin{align} \label{eq:urn:law}
  \urnlaw_{\empirical_{\Xn}}(\argdot)  = \frac{1}{n!} \sum_{\perm\in\Symn} \delta_{\perm\cdot \Xn}(\argdot) \;.
\end{align}
The urn law is so called because it computes the probability of sampling any sequence without replacement from an urn with $n$ balls, each labeled by an element of $\Xn$.\footnote{The metaphor is that of an urn with $n$ balls, each labeled by an element of $\empirical_{\Xn}$; a sequence is constructed by repeatedly picking a ball uniformly at random from the urn, without replacement. Variations of such a scheme can be considered, for example sampling with replacement, or replacing a sampled ball with two of the same label. See \cite{Mahmoud:2008} for an overview of the extensive probability literature studying such processes, including the classical P\'olya urn and its generalizations \citep[e.g.,][]{Blackwell:MacQueen:1973}, which play important roles in Bayesian nonparametrics.}  
\Citet{Diaconis:Freedman:1980} and \citet[][Proposition 1.8]{Kallenberg:2005} prove results (in different forms) similar to \cref{thm:finite:exch:seq:invariant} that rely on the urn law.

\subsection{Invariant Conditional Distributions: Proof of \texorpdfstring{\cref{thm:group:invariant:rep}}{Theorem 7}}

\groupinvariantrep*

\begin{proof} 
  With the conditional independence relationship $Y\condind_{M(X)} X$ established in \cref{lem:maximal:invariant}, \cref{thm:group:invariant:rep} follows from \cref{lem:noise:out:suff}.
\end{proof}

\subsection{Representative Equivariants and Proof of \texorpdfstring{\cref{thm:kall:equiv}}{Theorem 9}}
\label{sec:proof:kall:equiv}

Recall that a representative equivariant is an equivariant function $\tau : \calX \to \grp$ satisfying
\begin{align*}
  \tau(g\cdot x) = g\cdot\tau(x) \quad \text{for each } g\in\grp \;.
\end{align*}
Recall also that $\tau_x := \tau(x)$ and that $\tau_x^{-1}$ is the inverse element in $\grp$ of $\tau_x$. The following intermediate result (restated from \cref{sec:connecting:symmetries}) establishes two useful properties of $\tau$, which are used in the proof of \cref{thm:kall:equiv}.

\tauproperties*

\begin{proof} 
  Property (i) is proved in \citet[][Ch.\ 2]{Eaton:1989}, as follows. Observe that for all $g\in\grp$ and $x\in\calX$,
  \begin{align*}
    M_{\tau}(g\cdot x) = \tau_{g\cdot x}^{-1} \cdot (g\cdot x) = (g\cdot \tau_x)^{-1} \cdot (g\cdot x)
      = \tau_x^{-1} \cdot g^{-1} \cdot g \cdot x = \tau_x^{-1} \cdot x = M_{\tau}(x) \;,
  \end{align*}
  so $M_{\tau}(x)$ is invariant. Now suppose $M_{\tau}(x_1) = M_{\tau}(x_2)$ for some $x_1, x_2$, and define $g = \tau_{x_1} \cdot \tau_{x_2}^{-1}$. Then
  \begin{align*}
    \tau_{x_1}^{-1}\cdot x_1 & = \tau_{x_2}^{-1} \cdot x_2 \\
    x_1 & = (\tau_{x_1} \cdot \tau_{x_2}^{-1}) \cdot x_2 = g\cdot x_2 \;.
  \end{align*}
  Therefore, $x_1$ and $x_2$ are in the same orbit and $M_{\tau}$ is a maximal invariant.

  For (ii), consider an arbitrary mapping $b : \calX \to \calY$, and define the function $f$ by $\tau_x\cdot b(\tau_x^{-1}\cdot x)$, as in \eqref{eq:tau:equiv}. Then
  \begin{align*}
    f(g\cdot x) & = \tau_{g\cdot x} \cdot b(\tau_{g\cdot x}^{-1} \cdot g \cdot x) = \tau_{g\cdot x} \cdot b(\tau_x^{-1} \cdot x) = \tau_{g\cdot x} \cdot \tau_{x}^{-1} \cdot f(x) \\
      & = g \cdot \tau_{x} \cdot \tau_{x}^{-1} \cdot f(x) = g\cdot f(x) \;,
  \end{align*}
  where the equivariance of $\tau_x$ is used repeatedly.
\end{proof}

We note that the equivariance requirement on $\tau$ can be relaxed, at the price of an extra condition required for $f$ defined in (ii) above to be equivariant. Specifically, any function $\bar{\tau}: \calX \to \grp$ that satisfies $\bar{\tau}^{-1}_{g\cdot x} \cdot g\cdot x = \bar{\tau}^{-1}_x \cdot x$ (i.e., part (i) above) can be used in the same way as $\tau$ to construct an equivariant $f$, with the condition that $\grp_x \subseteq \grp_{f(x)}$. \citet{Kallenberg:2005}, Lemma 7.10 proves such a result (and the existence of $\bar{\tau}$) in the case of a discrete group. 

\groupequivariantrep*

\begin{proof} 
  The proof of sufficiency closely follows the proof of Lemma 7.11 in \citet{Kallenberg:2005}. It relies on proving that $\tau^{-1}_X \cdot Y \condind_{M_{\tau}} X$, and applying \cref{lem:noise:out:suff,prop:tau:properties}. 

  Assume that $g \cdot (X,Y) \equdist (X,Y)$ for all $g\in\grp$. Let $M_{\tau}(x) = \tau^{-1}_x \cdot x$, and for any $x\in\calX$, let $M'_{\tau,x} : \calY \to \calY$ be defined by $M'_{\tau, x}(y) = \tau^{-1}_x \cdot y$, for $y\in\calY$. 
  As shown in the proof of \cref{prop:tau:properties}, $\tau_x \cdot \tau_{g\cdot x}^{-1} \cdot g = \tau_x^{-1}$ for all $x\in\calX$.  
  Therefore, the random elements $M_{\tau}(X) \in \calX$ and $M'_{\tau,X}(Y)\in\calY$ satisfy
  \begin{align} \label{eq:g:iden}
    (M_{\tau}(X), M'_{\tau,X}(Y)) = \tau_x^{-1} \cdot (X, Y) = \tau_{g\cdot X}^{-1} \cdot g \cdot (X,Y) \;, \quad \text{a.s., } g \in \grp \;.
  \end{align} 
  Now let $G\sim\normhaar$ be a random element of $\grp$ such that $G \condind (X,Y)$. Observe that because $G\sim\normhaar$, $g\cdot G \equdist G\cdot g \equdist G$ for all $g\in\grp$. Furthermore, by assumption $g \cdot (X,Y) \equdist (X,Y)$. Therefore, using Fubini's theorem,
  \begin{align} \label{eq:g:iden:two}
    G\cdot (X,Y) \equdist (X,Y) \;, \quad \text{and} \quad (G\cdot \tau^{-1}_X, X,Y) \equdist (G,X,Y) \;.
  \end{align}
  Using \eqref{eq:g:iden} and \eqref{eq:g:iden:two},
  \begin{align}
    (X, M_{\tau}(X), M'_{\tau,X}(Y)) & = (X, \tau_X^{-1} \cdot X, \tau_X^{-1} \cdot Y) \nonumber \\
      & \equdist (G\cdot X, \tau_{G\cdot X}^{-1} \cdot G \cdot X, \tau_{G\cdot X}^{-1} \cdot G \cdot Y) \nonumber \\
      & = (G\cdot X, \tau^{-1}_X \cdot X, \tau^{-1}_X \cdot Y) \nonumber \\
      & \equdist (G \cdot \tau^{-1}_X \cdot X , \tau^{-1}_X \cdot X, \tau^{-1}_X \cdot Y) \nonumber \\
      & = (G \cdot M_{\tau}(X), M_{\tau}(X), M'_{\tau,X}(Y)) \label{eq:tau:key:iden}
  \end{align}
  That is, jointly with the orbit representative $M_{\tau}(X)$ and the representative equivariant applied to $Y$, $M'_{\tau,X}(Y)$, the distribution of $X$ is the same as if applying a random group element $G\sim\normhaar$ to the orbit representative.

  Trivially, $M_{\tau}(X) \condind_{M_{\tau}(X)} M'_{\tau,X}(Y)$. Therefore, $G\cdot M_{\tau}(X) \condind_{M_{\tau}(X)} M'_{\tau,X}(Y)$ is implied by $G \condind (X,Y)$. This conditional independence is transferred by the distributional equality in \eqref{eq:tau:key:iden} to
  \begin{align*}
    X \condind_{M_{\tau}(X)} M'_{\tau,X}(Y) \;.
  \end{align*} 
  Therefore, by \cref{lem:noise:out:suff}, there exists some measurable $b : [0,1] \times \calX \to \calY$ such that $M'_{\tau,X}(Y) = \tau^{-1}_X \cdot Y \equas b(\noise, M_{\tau}(X))$ for $\noise \sim \Unif[0,1]$ and $\noise \condind X$. Applying $\tau_X$, \cref{prop:tau:properties} implies that 
  \begin{align*}
    Y \equas \tau_X \cdot b(\noise,M_{\tau}(X)) = \tau_X \cdot b(\noise, \tau^{-1}_X \cdot X) =: f(\noise,X)
  \end{align*}
  is $\grp$-equivariant in the second argument. This establishes sufficiency.

  Conversely, for necessity, using \eqref{eq:kall:equiv} and the assumption that $g \cdot X \equdist X$ for all $g\in\grp$,
  \begin{align*}
    g\cdot (X,Y) = (g\cdot X, f(\noise, g\cdot X)) 
    \equdist (X, f(\noise, X)) = (X,Y) \;.
  \end{align*}
\end{proof}

\subsection{Proof of \texorpdfstring{\cref{thm:suff:maximal:invariant}}{Theorem 10}}

\sufficientmaximalinvariant*

\begin{proof} 
  (i) Let $M : \calX \to \calS$ be any maximal invariant on $\calX$ under $\grp$. 
  From \cref{lem:maximal:invariant}, the conditional distribution of $X$ given $M(X) = m$ is equal to the orbit law $\orbitlaw_m$. $\calS$ is assumed to be Borel, so there is a Markov kernel \citep[e.g.,][Thm.\ 6.3]{Kallenberg:2002} $q_M : \borel_{\calX} \times \calS \to \bbR_+$ such that $q_M(\argdot,m) = \orbitlaw_m(\argdot)$, and therefore $M$ is a sufficient statistic for $\model_{X}^{\inv}$. Moreover, also by \cref{lem:maximal:invariant}, $M$ d-separates $X$ and $Y$ under any $P_{X,Y} \in \model_{X,Y}^{\inv}$, and therefore $M$ is also an adequate statistic. \Cref{lem:noise:out:suff} implies the identity \eqref{eq:max:invariant:rep}. 

  (ii) The proof follows a similar argument as in (i). \Cref{prop:tau:properties} established that $M_{\tau}(X) = \tau_X^{-1}\cdot X$ is a representative equivariant, so by \cref{lem:maximal:invariant}, $M_\tau$ is sufficient for $\model_{X}^{\inv}$, a subset of which is obtained by marginalizing over $Y$ in each distribution belonging to $\model_{X,\tau_X^{-1}\cdot Y}$. The proof of \cref{thm:kall:equiv} also established that $\tau_X^{-1}\cdot Y \condind_{M_{\tau}(X)} X$, which, along with sufficiency for the marginal model, is the definition of adequacy. 
\end{proof}

\section{Representations of Exchangeable \texorpdfstring{$d$}{d}-Dimensional Arrays}
\label{sec:proof:arrays}

The functional representations of conditionally $\bbS_{\bfn_2}$-invariant (\cref{thm:invariant:two:arrays}) and -equivariant (\cref{thm:equivariant:two:arrays}) distributions in \cref{sec:finite:exch:arrays} are special cases with $d=2$ of more general results for $d$-dimensional arrays.

Some notation is needed in order to state the results. For a fixed $d \in \bbN$, let $\Xnd$ be a $d$-dimensional $\calX$-valued array (called a $d$-array) with index set $[\bfn_d] := [n_1]\times \dotsb \times [n_d]$, and with $X_{i_1,\dotsc,i_d}$ the element of $\Xnd$ at position $(i_1,\dotsc,i_d)$. The size of each dimension is encoded in the vector of integers $\bfn_d = (n_1,\dotsc,n_d)$. In this section, we consider random $d$-arrays whose distribution is invariant to permutations applied to the index set. In particular let $\perm_k\in\bbS_{n_k}$ be a permutation of the set $[n_k]$. Denote by $\bbS_{\bfn_d}:=\bbS_{n_1}\times\dotsb\times\bbS_{n_d}$ the direct product of each group $\bbS_{n_k}$, $k\in [d]$.

A collection of permutations $\permd:=(\perm_1,\dotsc,\perm_d)\in\bbS_{\bfn_d}$ acts on $\Xnd$ in the natural way, separately on the corresponding input dimension of $\Xnd$:
\begin{align*} 
  [\permd \cdot \Xnd ]_{i_1,\dotsc,i_d} = X_{\perm_1(i_1),\dotsc,\perm_d(i_d)} \;.
\end{align*}
The distribution of $\Xnd$ is {separately exchangeable} if
\begin{align} \label{eq:sep:exch:d}
  (X_{i_1,\dotsc,i_d})_{(i_1,\dotsc,i_d)\in[n_{1,\dotsc,d}]} \equdist (X_{\perm_1(i_1),\dotsc,\perm_d(i_d)})_{(i_1,\dotsc,i_d)\in[n_{1,\dotsc,d}]} \;,
\end{align}
for every collection of permutations $\permd \in \bbS_{\bfn_d}$. 
We say that $\Xnd$ is separately exchangeable if its distribution is. 

For a symmetric array $\Xndsymm$, such that $n_1=n_2=\dotsb=n_d=n$ and $\Xsymm_{i_1,\dotsc,i_d}=\Xsymm_{i_{\permdim(1)},\dotsc,i_{\permdim(d)}}$ for all $\permdim\in\bbS_d$, the distribution of $\Xndsymm$ is \emph{jointly exchangeable} if, for all $\perm\in\bbS_n$
\begin{align} \label{eq:joint:exch:d}
  (\Xsymm_{i_1,\dotsc,i_d})_{(i_1,\dotsc,i_d)\in[n]^d} \equdist (\Xsymm_{\perm(i_1),\dotsc,\perm(i_d)})_{(i_1,\dotsc,i_d)\in[n]^d} \;.
\end{align}

As in \cref{sec:suff:rep:arrays}, a canonical form of $\Xnd$ is required. For a particular canonicalization routine, denote the space of $d$-dimensional canonical forms by $\cspace_{\bfn_d}(\calX)$.

The first result concerns $\bbS_{\bfn_d}$-invariant conditional distributions. An equivalent result holds for jointly exchangeable $d$-arrays and $\bbS_n$-invariant conditional distributions, which we omit for brevity.
\begin{theorem}
  \label{thm:invariant:d:arrays}

  Suppose $\Xnd$ is a separately exchangeable $\calX$-valued array on index set $\bfn_d$, and $Y\in\calY$ is another random variable. Then $P_{Y|\Xnd}$ is $\bbS_{\bfn_d}$-invariant if and only if there is a measurable function $f : [0,1] \times \cspace_{\bfn_d}(\calX) \to \calY$ such that
  \begin{align} \label{eq:invariant:d:arrays}
    (\Xnd,Y) \equas (\Xnd,f(\noise ,\cboard_{\Xnd})) \quad \text{where} \quad \noise\sim\Unif[0,1] \quad \text{and} \quad \noise \condind \Xnd \;.
  \end{align}
\end{theorem}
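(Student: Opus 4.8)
The plan is to deduce Theorem~\ref{thm:invariant:d:arrays} from the general representation of invariant conditional distributions, Theorem~\ref{thm:group:invariant:rep}, in exactly the way Theorem~\ref{thm:invariant:two:arrays} was obtained in the case $d=2$. All of the probabilistic content already lives in Section~\ref{sec:connecting:symmetries}; the only thing left to supply is the verification that a canonical form $\cboard_{\Xnd} = \canon(\Xnd)$ is a \emph{maximal invariant} of $\bbS_{\bfn_d} = \bbS_{n_1}\times\dotsb\times\bbS_{n_d}$ acting on $\calX^{[\bfn_d]}$ via the coordinatewise action $[\permd\cdot\Xnd]_{i_1,\dotsc,i_d} = X_{\perm_1(i_1),\dotsc,\perm_d(i_d)}$.

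First I would record the structural preliminaries. The group $\bbS_{\bfn_d}$ is finite, hence compact; for each fixed $\permd$ the action is a coordinate rearrangement, so it is measurable, and as a joint map $\bbS_{\bfn_d}\times\calX^{[\bfn_d]}\to\calX^{[\bfn_d]}$ it is measurable because $\bbS_{\bfn_d}$ is finite; and $\calX^{[\bfn_d]}$ is standard Borel since $\calX$ is. Separate exchangeability, \eqref{eq:sep:exch:d}, is precisely the statement that the marginal $P_{\Xnd}$ is $\bbS_{\bfn_d}$-invariant, so the marginal hypothesis of Theorem~\ref{thm:group:invariant:rep} is in force. Next comes the key step: $\cboard_{\Xnd}$ is a maximal invariant. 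By construction the canonicalization routine sends every array to the unique representative of its $\bbS_{\bfn_d}$-orbit, so $\canon(\permd\cdot\Xnd) = \canon(\Xnd)$ for all $\permd\in\bbS_{\bfn_d}$, which is invariance; and if $\canon(\Xnd) = \canon(\Xnd')$ then the two arrays share an orbit representative, hence $\Xnd' = \permd\cdot\Xnd$ for some $\permd\in\bbS_{\bfn_d}$, which is maximality. Taking $M := \canon$, regarded as a measurable map into $\cspace_{\bfn_d}(\calX)$ (equivalently, into the Borel space $\calX^{[\bfn_d]}$), and applying Theorem~\ref{thm:group:invariant:rep} then yields both directions of the equivalence, i.e.\ \eqref{eq:invariant:d:arrays}; the reverse implication --- that the representation forces $\bbS_{\bfn_d}$-invariance of $P_{Y|\Xnd}$ --- is part of that cited theorem, and uses only that $\cboard_{\Xnd}$ is invariant and $\permd\cdot\Xnd\equdist\Xnd$. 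The jointly exchangeable symmetric case is handled identically, with $\bbS_n$ in place of $\bbS_{\bfn_d}$ and the symmetric canonical form in place of $\canon$.

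The one point requiring care --- the analogue of the ``main obstacle'' --- is not probabilistic but foundational: one must know that a canonicalization routine on $\calX^{[\bfn_d]}$ exists and is measurable, and that the space $\cspace_{\bfn_d}(\calX)$ of canonical forms is Borel, so that the maximal invariant fed into Theorem~\ref{thm:group:invariant:rep} is legitimate. When $\calX\subseteq\bbR$ this is concrete: one can, e.g., lexicographically order the one-dimensional slices along each axis, which makes both the measurability of $\canon$ and the Borel structure of its range transparent. For a general Borel $\calX$, where no explicit combinatorial canonicalization need be available, one instead invokes the general theory of Section~\ref{sec:technical}: a finite group acting measurably on a Borel space admits a measurable cross-section, and the associated cross-section map $\calX^{[\bfn_d]}\to\calX^{[\bfn_d]}$ is a measurable maximal invariant that can play the role of $\cboard_{\Xnd}$. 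Either way, once a measurable maximal invariant is in hand, Theorem~\ref{thm:invariant:d:arrays} is immediate from Theorem~\ref{thm:group:invariant:rep}.
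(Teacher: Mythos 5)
Your proposal is correct and follows essentially the same route as the paper: establish that the canonical form $\cboard_{\Xnd}$ is a maximal invariant of $\bbS_{\bfn_d}$ acting on the array space (invariance and maximality both holding by construction of an orbit representative), then invoke the general representation result of \cref{thm:group:invariant:rep} (equivalently \cref{thm:suff:maximal:invariant}) to get both directions, with the reverse implication also checkable directly via \cref{prop:joint:symmetry}. Your added attention to the measurability of the canonicalization map and the Borel structure of $\cspace_{\bfn_d}(\calX)$ is a legitimate refinement of a point the paper leaves implicit, but it does not change the argument.
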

\begin{proof}
  Clearly, the representation \eqref{eq:invariant:d:arrays} satisfies $(\permd \cdot \Xnd,Y) \equdist (\Xnd,Y)$, for all $\permd \in \bbS_{\bfn_d}$, which by \cref{prop:joint:symmetry} and the assumption that $\Xnd$ is separately exchangeable implies that $Y$ is conditionally $\bbS_{\bfn_d}$-invariant given $\Xnd$. The converse is a easy consequence of the fact that the canonical form is a maximal invariant of $\bbS_{\bfn_d}$ acting on $\calX^{n_1\times\dotsb\times n_d}$ and \cref{thm:suff:maximal:invariant}. 
\end{proof}

\paragraph{$\bbS_{\bfn_d}$-equivariant conditional distributions.} 
Let $\Ynd$ be a $\calY$-valued array indexed by $[\bfn_d]$. \Cref{thm:equivariant:d:arrays} below states that each element $Y_{i_1,\dotsc,i_d}$ can be represented as a function of a uniform random variable $\noise_{i_1,\dotsc,i_d}\condind \Xnd$, and of a sequence of canonical forms: one for each sub-array of $\Xnd$ that contains $X_{i_1,\dots,i_d}$. As was the case for $d=2$, we assume
\begin{align} \label{eq:d:array:cond:ind}
  Y_{i_1,\dotsc,i_d} \condind_{\Xnd} (\Ynd \setminus Y_{i_1,\dotsc,i_d}) \;, \quad \text{for each } (i_1,\dotsc,i_d) \in [\bfn_d] \;.
\end{align}

For convenience, denote by $\bfi:=(i_1,\dotsc,i_d)$ an element of the index set $[\bfn_d]$. For each $k_1 \in [d]$, let $\subarr{\bfX}{\bfi(\{k_1\})}$ be the $(d-1)$-dimensional sub-array of $\Xnd$ obtained by fixing the $k_1$th element of $\bfi$ and letting all other indices vary over their entire range. For example, for an array with $d=3$, $\subarr{\bfX}{\bfi(\{ 1 \})}$ is the matrix extracted from $\Xnd$ by fixing $i_1$, $\bfX_{i_1,\; :,\; :}$.

Iterating, let $\subarr{\bfX}{\bfi(\{k_1,\dotsc,k_p\})}$ be the $(d-p)$-dimensional sub-array of $\Xnd$ obtained by fixing elements $i_{k_1},\dotsc,i_{k_p}$ of $\bfi$ and letting all other indices vary. 
For each $p\in [d]$, denote by $[d]^p$ the collections of subsets of $[d]$ with exactly $p$ elements; let $[d]^{(p)}$ be the collection of subsets of $[d]$ with $p$ or fewer elements. Let the collection of $(d-p)$-dimensional sub-arrays containing $\bfi$ be denoted as
\begin{align*}
  \subarrcoll{\bfX}{d-p}{\bfi} = \big( \subarr{\bfX}{\bfi(s)} \big)_{s\in [d]^{p}}  \;.
\end{align*}

A $d$-dimensional version of the augmented canonical form \eqref{eq:sep:aug:cboard} is needed. To that end, let $\bfj = (j_1,\dotsc,j_d)\in [\bfn_d]$ and define the index sequence $(\bfi(s),\bfj)\in [\bfn_d]$ as
\begin{align} \label{eq:ij:index}
  (\bfi(s),\bfj)_k = 
  \begin{cases}
    i_k & \text{if } k \in s \\
    j_k & \text{if } k \notin s
  \end{cases}
  \;, \quad \text{for } s \in [d]^{(p)} \text{ and } k \in [d] \;.
\end{align}
Then we define the $d$-dimensional $p$-augmented canonical form,
\begin{align} \label{eq:aug:cboard:d}
  [\cboard^{(p)}_{\bfi,\Xnd}]_{\bfj}
  = 
  \big([\cboard_{\Xnd}]_{\bfj}, \big( [\cboard_{\Xnd}]_{(\bfi(s),\bfj)}  \big)_{s\in[d]^{(p)}}  \big) \;.
\end{align}
Denote by $\cspace_{\bfn_d}^{(d,p)}$ the space of all such augmented canonical forms.

The function returns the collection of elements from $\cboard_{\Xnd}$ that correspond to $\bfj$ in each of the $(d-q)$-dimensional sub-arrays containing $\bfi$, $\subarrcoll{\bfX}{d-q}{\bfi}$, for $q=0,\dotsc,p$. Alternatively, observe that the function can be constructed by recursing through the $(d-q)$-dimensional $1$-augmented canonical forms (there are $\binom{d}{q}$ of them for each $q$), for $q=0,\dotsc,p-1$, and returning the first argument of each. This recursive structure captures the action of $\bbS_{\bfn_d}$ on $\Xnd$, and is at the heart of the following result, which gives a functional representation of $\bbS_{\bfn_d}$-equivariant conditional distributions.

\begin{theorem}
  \label{thm:equivariant:d:arrays}

  Suppose $\Xnd$ and $\Ynd$ are $\calX$-valued arrays indexed by $[\bfn_d]$, and that $\Xnd$ is separately exchangeable. Assume that the elements of $\Ynd$ are conditionally independent given $\Xnd$. Then $P_{\Ynd | \Xnd}$ is $\bbS_{\bfn_d}$-equivariant if and only if 
  \begin{align} \label{eq:rep:exch:d:array}
    \big(\Xnd,\Ynd  \big)
    \equas 
    \bigg(\Xnd,
      \big(
        f(
          \noise_{\bfi}, 
          X_{\bfi}, 
          \cboard^{(d)}_{\bfi,\Xnd}
        )_{\bfi\in [\bfn_d]}
      \big)
    \bigg) \;,
  \end{align}
  for some measurable function $f : [0,1] \times \calX \times \cspace_{\bfn_d}^{(d,d)} \to \calY$ and \iid uniform random variables $(\noise_{\bfi})_{\bfi \in [\bfn_d]} \condind \Xnd$.
\end{theorem}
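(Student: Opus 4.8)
The plan is to mirror, in $d$ dimensions, the argument used for matrices in the proof of \cref{thm:equivariant:two:arrays} (which itself generalizes the sequence proof of \cref{thm:finite:exch:seq:equivariant}): reduce the equivariance statement to a conditional independence relation of the form $Y_{\bfi}\condind_{(X_{\bfi},\cboard^{(d)}_{\bfi,\Xnd})}\Xnd$, invoke the noise-outsourcing lemma \cref{lem:noise:out:suff} at each index $\bfi$, and then use equivariance to identify all the per-index functions and the assumed conditional independence \eqref{eq:d:array:cond:ind} to stitch the element-wise representations into the joint identity \eqref{eq:rep:exch:d:array}. The reverse implication is routine and is checked last.

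For the forward direction, assume $P_{\Ynd\mid\Xnd}$ is $\bbS_{\bfn_d}$-equivariant, so that by \cref{prop:joint:symmetry}, $\permd\cdot(\Xnd,\Ynd)\equdist(\Xnd,\Ynd)$ for all $\permd\in\bbS_{\bfn_d}$. Fix $\bfi=(i_1,\dotsc,i_d)$ and let $\bbS^{(\bfi)}_{\bfn_d}\subset\bbS_{\bfn_d}$ be its stabilizer, consisting of $\permd^{(\bfi)}=(\perm_1,\dotsc,\perm_d)$ with $\perm_k(i_k)=i_k$ for every $k\in[d]$; this subgroup is homomorphic to $\bbS_{\bfn_d-\bfOne}$. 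Just as in the $d=2$ case a stabilizing permutation leaves the $j$th element of every row and the $i$th element of every column in place (moving them only with their row or column), here each $\permd^{(\bfi)}$ fixes, for every subset $s\subseteq[d]$, the sub-array obtained by pinning the coordinates in $s$ to $(i_k)_{k\in s}$; in particular it fixes $X_{\bfi}$ and $Y_{\bfi}$. The central construction is the separately exchangeable $d$-array $\bfZ^{(\bfi)}$ on index set $[\bfn_d-\bfOne]$, the direct analogue of $\bfZ^{(i,j)}$ in \eqref{eq:z:array}: delete from $\Xnd$ every coordinate-hyperplane through $\bfi$ together with all of their lower-dimensional intersections, and broadcast the deleted entries back over the surviving dimensions, so that each entry $[\bfZ^{(\bfi)}]_{\bfj}$ records the tuple $\bigl([\Xnd]_{(\bfi(s),\bfj)}\bigr)_s$ over the relevant subsets $s$, with $(\bfi(s),\bfj)$ defined as in \eqref{eq:ij:index}. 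One verifies that $\bfZ^{(\bfi)}$ inherits the separate exchangeability of $\Xnd$ in its free coordinates and the fixed-element structure in the broadcast coordinates, hence $\permd'\cdot\bfZ^{(\bfi)}\equdist\bfZ^{(\bfi)}$ for all $\permd'\in\bbS_{\bfn_d-\bfOne}$, and so, marginally for $Y_{\bfi}$, $(\permd'\cdot\bfZ^{(\bfi)},(X_{\bfi},Y_{\bfi}))\equdist(\bfZ^{(\bfi)},(X_{\bfi},Y_{\bfi}))$.

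Applying \cref{thm:invariant:d:arrays} with input $\bfZ^{(\bfi)}$ and output $(X_{\bfi},Y_{\bfi})$ then gives $(X_{\bfi},Y_{\bfi})\condind_{\cboard_{\bfZ^{(\bfi)}}}\bfZ^{(\bfi)}$. The canonical form $\cboard_{\bfZ^{(\bfi)}}$ carries exactly the information in the $d$-dimensional $d$-augmented canonical form $\cboard^{(d)}_{\bfi,\Xnd}$ of \eqref{eq:aug:cboard:d} --- this is the content of the remark on recursive structure preceding the theorem --- so conditioning further on $X_{\bfi}$ yields the key relation $Y_{\bfi}\condind_{(X_{\bfi},\cboard^{(d)}_{\bfi,\Xnd})}\Xnd$. \Cref{lem:noise:out:suff} then supplies a measurable $f_{\bfi}:[0,1]\times\calX\times\cspace^{(d,d)}_{\bfn_d}\to\calY$ with $Y_{\bfi}\equas f_{\bfi}(\noise_{\bfi},X_{\bfi},\cboard^{(d)}_{\bfi,\Xnd})$ for $\noise_{\bfi}\sim\Unif[0,1]$, $\noise_{\bfi}\condind\Xnd$. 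Since any two indices $\bfi,\bfj\in[\bfn_d]$ lie in one $\bbS_{\bfn_d}$-orbit, equivariance forces the per-index joint laws to coincide, so the same $f$ works for every $\bfi$; the assumed mutual conditional independence \eqref{eq:d:array:cond:ind} of the entries of $\Ynd$ given $\Xnd$, combined with the chain rule for conditional independence \citep[][Prop.~6.8]{Kallenberg:2002}, then upgrades the element-wise identities to \eqref{eq:rep:exch:d:array}. For the converse, given the representation the i.i.d.\ noise and the invariance of the canonical form under relabelling of the index set make $\permd\cdot(\Xnd,\Ynd)\equdist(\Xnd,\Ynd)$ immediate.

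The main obstacle is the combinatorial bookkeeping in constructing $\bfZ^{(\bfi)}$ and in justifying that $\cboard_{\bfZ^{(\bfi)}}$ and $\cboard^{(d)}_{\bfi,\Xnd}$ are informationally equivalent. In contrast to $d=2$, deleting the hyperplanes through $\bfi$ produces not a single removed row and a single removed column but an entire lattice of sub-arrays of every dimension $0,\dotsc,d-1$, each of which must be broadcast back and tracked through the recursion defining the $p$-augmented canonical forms in \eqref{eq:aug:cboard:d}. Showing that this recursion captures precisely the orbit structure of $\bbS_{\bfn_d}$ acting on $\Xnd$ relative to the pinned index $\bfi$, so that nothing is lost or spuriously introduced in passing from $\bfZ^{(\bfi)}$ to $\cboard^{(d)}_{\bfi,\Xnd}$, is the step requiring the most care; the remaining steps transcribe the $d=2$ argument essentially verbatim.
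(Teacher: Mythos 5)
Your proposal is correct and follows essentially the same route as the paper's proof: construct the broadcast array $\bfZ^{(\bfi)}$ from the stabilizer structure of $\bfi$, apply the invariance result to get $(X_{\bfi},Y_{\bfi})\condind_{\cboard_{\bfZ^{(\bfi)}}}\bfZ^{(\bfi)}$, identify this with conditioning on $(X_{\bfi},\cboard^{(d)}_{\bfi,\Xnd})$, invoke \cref{lem:noise:out:suff}, and use equivariance plus the assumed conditional independence and the chain rule to obtain the joint representation, with the converse checked directly. The combinatorial bookkeeping you flag as the main obstacle is handled in the paper exactly as you sketch it, via the recursive remainder-array construction and the asserted informational equivalence of $\cboard_{\bfZ^{(\bfi)}}$ and $\cboard^{(d)}_{\bfi,\Xnd}$.
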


\begin{proof}
First, assume that $\Ynd$ is conditionally $\bbS_{\bfn_d}$-equivariant given $\Xnd$. By assumption, $\Xnd$ is separately exchangeable, so  by \cref{prop:joint:symmetry}, $\permd \cdot (\Xnd,\Ynd) \equdist (\Xnd,\Ynd)$ for all $\permd\in\bbS_{\bfn_d}$. Fix $\bfi \in [\bfn_d]$, and let $\bbS_{\bfn_d}^{(\bfi)} \subset \bbS_{\bfn_d}$ be the stabilizer of $\bfi$. Observe that each $\permd^{(\bfi)} \in \bbS_{\bfn_d}^{(\bfi)}$ fixes $X_{\bfi}$ and $Y_{\bfi}$. 

In analogy to the fixed -row and -column structure for the $d=2$ case, any $\permd^{(\bfi)}\in\bbS_{\bfn_d}^{\bfi}$ results in sub-arrays of $\Xnd$ in which the elements my be rearranged, but the sub-array maintains its position within $\Xnd$. For example, consider $d=3$. Each of the two-dimensional arrays $\bfX_{i_1,:,:}$, $\bfX_{:,i_2,:}$, and $\bfX_{:,:,i_3}$ may have their elements rearranged, but they will remain the two-dimensional sub-arrays that intersect at $\bfi$. Likewise for the one-dimensional sub-arrays $\bfX_{i_1,i_2,:}$, $\bfX_{i_1,:,i_3}$, and $\bfX_{:,i_2,i_3}$.

Recall that $\subarr{\bfX}{\bfi(\{k_1,\dotsc,k_p\})}$ is the $(d-p)$-dimensional sub-array of $\Xnd$ obtained by fixing elements $i_{k_1},\dotsc,i_{k_p}$ of $\bfi$ and letting the other indices vary, and $\subarrcoll{\bfX}{d-p}{\bfi}$ is the collection of these sub-arrays. 
For $p=1$, $\subarrcoll{\bfX}{d-1}{\bfi} = (\subarr{\bfX}{\bfi(\{ k_1 \})})_{k_1 \in [d]}$ is the collection of $(d-1)$-dimensional sub-arrays containing the element $X_{\bfi}$. Denote by $\superarr{\bfX}{d}{\bfi} := \Xnd \setminus \subarrcoll{\bfX}{d-1}{\bfi}$ the $d$-dimensional array that remains after extracting $\subarrcoll{\bfX}{d-1}{\bfi}$ from $\Xnd$. Call $\superarr{\bfX}{d}{\bfi}$ a $d$-dimensional \emph{remainder array}. For example, with $d=3$, the remainder array $\superarr{\bfX}{d}{\bfi}$ consists of $\Xnd$ with the three two-dimensional arrays (matrices) that contain $X_{\bfi}$ removed.

Continue recursively and construct a remainder array $\superarr{\bfX}{d-p}{\bfi(s)}$, $s\in [d]^p$, from each sub-array in the collection $\subarrcoll{\bfX}{d-p}{\bfi} = (\subarr{\bfX}{\bfi(s)})_{s\in [d]^p}$. 
The recursive structure of the collection of remainder arrays $\big( (\superarr{\bfX}{d-p}{\bfi(s)})_{s\in [d]^p} \big)_{0 \leq p \leq d-1}$ is captured by the array $\bfZ^{(i)}$, with entries
\begin{align}
  [\bfZ^{(\bfi)}]_{\bfj} = \big(X_{\bfj} , ( X_{(\bfi(s),\bfj)} )_{s\in [d]^{(d)}} \big) \;,
\end{align}
where $(\bfi(s),\bfj)$ is as in \eqref{eq:ij:index}. 
Define the action of a collection of permutations $\permd' \in \bbS_{\bfn_d - \bfOne_d}$ on $\bfZ^{(\bfi)}$ to be such that, with $\bfj_{\perm} = (\perm_1(j_1),\dotsc,\perm_d(j_d))$,
\begin{align}
  [\permd' \cdot \bfZ^{(\bfi)}]_{\bfj} = \big(X_{\bfj_{\perm}} , ( X_{(\bfi(s),\bfj_{\perm})} )_{s\in [d]^{(d)}} \big) \;.
\end{align}
$\bfZ^{(\bfi)}$ inherits the exchangeability of $\Xnd$, so that marginally for $Y_{\bfi}$, 
\begin{align*}
  ( 
    \permd \cdot \bfZ^{(\bfi)} ,
    (X_{\bfi}, Y_{\bfi}) 
  )
  \equdist
  (
    \bfZ^{(\bfi)} ,
    (X_{\bfi}, Y_{\bfi}) 
  )
  \quad \text{for all } \permd \in \bbS_{\bfn_d - \bfOne_d} \;.
\end{align*}
which implies $(X_{\bfi}, Y_{\bfi})\condind_{\cboard_{\bfZ^{(\bfi)}}} \bfZ^{(i)}$. Conditioning on $X_{\bfi}$ and $\cboard_{\bfZ^{(\bfi)}}$ is the same as conditioning on $X_{\bfi}$ and the $d$-dimensional $d$-augmented canonical form $\cboard^{(d)}_{\bfi,\Xnd}$ defined in \eqref{eq:aug:cboard:d}, implying that
\begin{align*}
  Y_i \condind_{ (X_{\bfi}, \cboard^{(d)}_{\bfi,\Xnd}) } \Xnd \;.
\end{align*}
By \cref{lem:noise:out:suff}, there is a measurable function $f_{\bfi} : [0,1] \times \calX \times \cspace_{\bfn_d}^{(d)} \to \calY$ such that
\begin{align*}
  Y_{\bfi} = f_{\bfi}(\noise_{\bfi}, X_{\bfi}, \cboard^{(d)}_{\bfi,\Xnd} ) \;,
\end{align*}
for a uniform random variable $\noise_{\bfi} \condind \Xnd$. This is true for all $\bfi \in [\bfn_d]$; by equivariance the same $f_{\bfi}=f$ must work for every $\bfi$. Furthermore, by assumption the elements of $\Ynd$ are mutually conditionally independent given $\Xnd$, and therefore by the chain rule for conditional independence \citep[][Prop.\ 6.8]{Kallenberg:2002}, the joint identity \eqref{eq:rep:exch:d:array} holds.

The converse is straightforward to verify.

\end{proof}

\end{appendix}

\bibliography{refs}

\end{document}